\newtheorem{theorem}{Theorem}
\newtheorem{lemma}{Lemma}
\newtheorem{proposition}{Proposition}
\newtheorem{assumption}{Assumption}
\newtheorem{remark}{Remark}
\crefname{equation}{Eq.}{Eqs.}
\crefname{figure}{Fig.}{Figs.}
\crefname{assumption}{Assumption}{Assumptions}
\title{Generalization Performance of Transfer Learning: Overparameterized and Underparameterized Regimes}
\date{June 1, 2023}
\newcommand*\samethanks[1][\value{footnote}]{\footnotemark[#1]}
\author{Peizhong Ju\thanks{Department of ECE, 
 The Ohio State University. Email: \texttt{\{ju.171, lin.4282, liang.889\}@osu.edu}} \and Sen Lin\samethanks[1] \and  Mark S. Squillante\thanks{Mathematical Sciences Department, IBM Research, USA. Email: \texttt{mss@us.ibm.com}} \and Yingbin Liang\samethanks[1] \and Ness B. Shroff\thanks{Department of ECE and CSE, The Ohio State University. Email: \texttt{shroff.11@osu.edu}}}
\newcommand{\normTwo}[1]{\left\|#1\right\|}
\newcommand{\abs}[1]{\left\lvert#1\right\rvert}
\DeclareMathOperator{\eig}{Eig}
\DeclareMathOperator*{\argmin}{arg\,min}
\DeclareMathOperator*{\E}{\mathop{\mathbb{E}}}
\DeclareMathOperator{\Tr}{Tr}
\DeclareMathOperator{\diag}{diag}
\newcommand{\defeq}{\coloneqq}
\newcommand\numberthis{\stepcounter{equation}\tag{\theequation}}
\newcommand{\iMatrix}[1]{\mathbf{I}_{#1}}
\newcommand{\sx}{\hat{\bm{x}}}
\newcommand{\sz}[1]{\hat{\bm{z}}_{(#1)}}
\newcommand{\sw}[1]{\hat{\bm{w}}_{(#1)}}
\newcommand{\sq}[1]{\hat{\bm{q}}_{(#1)}}
\newcommand{\ww}[1]{\bm{w}_{(#1)}}
\newcommand{\qq}[1]{\bm{q}_{(#1)}}
\newcommand{\featureSpace}[1]{\hat{\mathcal{S}}_{(#1)}}
\newcommand{\commonSpace}{\hat{\mathcal{S}}_{\text{co}}}
\newcommand{\commonSpaceSelected}{\mathcal{S}_{\text{co}}}
\newcommand{\featureSpaceSelected}[1]{\mathcal{S}_{(#1)}}
\newcommand{\pCommon}{p}
\newcommand{\pCommonThres}{t}
\newcommand{\pSpecial}[1]{p_{(#1)}}
\newcommand{\nSample}[1]{n_{(#1)}}
\newcommand{\XStacked}[1]{\mathbf{X}_{(#1)}}
\newcommand{\ZStacked}[1]{\mathbf{Z}_{(#1)}}
\newcommand{\yStacked}[1]{\bm{y}_{(#1)}}
\newcommand{\wbar}{\bar{\bm{w}}}
\newcommand{\qbar}{\bar{\bm{q}}}
\newcommand{\trainErr}[1]{\mathcal{L}_{(#1)}^{\text{train}}(\wbar,\qbar)}
\newcommand{\modelErr}{\mathcal{L}}
\newcommand{\wOne}{\bm{w}_{(1)}}
\newcommand{\wOneEx}{\bm{w}_{(1),e}}
\newcommand{\wTwoEx}{\bm{w}_{(2),e}}
\newcommand{\qOne}{\bm{q}_{(1)}}
\newcommand{\qOneEx}{\bm{q}_{(1),e}}
\newcommand{\nOne}{n_{(1)}}
\newcommand{\eOne}{\bm{\epsilon}_{(1)}}
\newcommand{\sigmaOne}{\sigma_{(1)}}
\newcommand{\yOne}{\bm{y}_{(1)}}
\newcommand{\wTwo}{\bm{w}_{(2)}}
\newcommand{\qTwo}{\bm{q}_{(2)}}
\newcommand{\nTwo}{n_{(2)}}
\newcommand{\eTwo}{\bm{\epsilon}_{(2)}}
\newcommand{\sigmaTwo}{\sigma_{(2)}}
\newcommand{\yTwo}{\bm{y}_{(2)}}
\newcommand{\wTilde}{\tilde{\bm{w}}_{(1)}}
\newcommand{\wTildeEx}{\tilde{\bm{w}}_{(1),e}}
\newcommand{\qTildeOne}{\tilde{\bm{q}}_{(1)}}
\newcommand{\qTilde}{\tilde{\bm{q}}_{(2)}}
\newcommand{\qHat}{\hat{\bm{q}}_{(2)}}
\newcommand{\wTildeTwo}{\tilde{\bm{w}}_{(2)}}
\newcommand{\wHatTwo}{\hat{\bm{w}}_{(2)}}
\newcommand{\UU}{\mathbf{U}_{(1)}}
\newcommand{\XX}{\mathbf{X}_{(1)}}
\newcommand{\ZZ}{\mathbf{Z}_{(1)}}
\newcommand{\UUU}{\mathbf{U}_{(2)}}
\newcommand{\XXX}{\mathbf{X}_{(2)}}
\newcommand{\ZZZ}{\mathbf{Z}_{(2)}}
\newcommand{\DD}{\mathbf{D}}
\newcommand{\GG}{\mathbf{G}}
\newcommand{\myProj}[1]{#1 (#1^T#1)^{-1}#1^T}
\newcommand{\PP}{\mathbf{P}}
\newcommand{\myMatrix}[1]{\left[\begin{smallmatrix}#1\end{smallmatrix}\right]}
\newcommand{\myMa}[1]{\left[\begin{matrix}#1\end{matrix}\right]}
\newcommand{\EwOneDiff}{\mathcal{L}_{\text{co}}}
\newcommand{\EwOneDiffNoNoise}{\mathcal{L}_{\text{co}}^{\text{noiseless}}}
\newcommand{\oRatio}{r}
\newcommand{\similarity}{\delta}
\newcommand{\bNoise}{b_{\text{noise}}}
\begin{document}

\maketitle

\begin{abstract}
Transfer learning is a useful technique for achieving improved performance and reducing training costs by leveraging the knowledge gained from source tasks and applying it to target tasks. Assessing the effectiveness of transfer learning relies on understanding the similarity between the ground truth of the source and target tasks. In real-world applications, tasks often exhibit partial similarity, where certain aspects are similar while others are different or irrelevant. To investigate the impact of partial similarity on transfer learning performance, we focus on a linear regression model with two distinct sets of features: a common part shared across tasks and a task-specific part. Our study explores various types of transfer learning, encompassing two options for parameter transfer. By establishing a theoretical characterization on the error of the learned model, we compare these transfer learning options, particularly examining how generalization performance changes with the number of features/parameters in both underparameterized and overparameterized regimes. Furthermore, we provide practical guidelines for determining the number of features in the common and task-specific parts for improved generalization performance. For example, when the total number of features in the source task's learning model is fixed, we show that it is more advantageous to allocate a greater number of redundant features to the task-specific part rather than the common part. Moreover, in specific scenarios, particularly those characterized by high noise levels and small true parameters, sacrificing certain \emph{true} features in the common part in favor of employing more \emph{redundant} features in the task-specific part can yield notable benefits.
\end{abstract}
\section{Introduction}

Transfer learning is a powerful technique that enhances the learning performance of a target task by leveraging knowledge from a related source task \citep{pan2010survey}. 
There are two main categories of transfer learning: parameter transfer and sample transfer. In parameter transfer, the learned parameters from the source task are directly copied to the target task's learning model. In sample transfer, training samples from the source task are integrated into the target task's dataset and contribute to its training process.
Comparing these two methods, sample transfer can provide additional valuable information and allow for preprocessing of the transferred samples to better align them with the target task, while parameter transfer offers significant savings in training costs and thus is very helpful for models with a large number of parameters such as deep neural networks (DNNs).

Despite the proven effectiveness of transfer learning with DNNs in various real-world applications, a comprehensive theoretical understanding of its performance remains under-explored. DNNs are typically overparameterized, allowing them to fit all training samples while maintaining relatively good generalization performance. This behavior challenges our understanding of the classical bias-variance trade-off. Recent studies have explored the phenomenon of "double-descent" or "benign overfitting" in certain linear regression setups, where the test error descends again in the overparameterized region, shedding light on this mystery. However, most of the existing literature focuses on single-task learning. The existence of a similar phenomenon in transfer learning, even in the simple linear regression setting, remains insufficiently explored. The additional transfer process in transfer learning makes the analysis of the generalization performance in the underparameterized and overparameterized regimes considerably more complex.
Furthermore, quantifying task similarity necessitates the development of appropriate analytical methods to establish a connection with the generalization performance of transfer learning.

In this paper, we investigate the generalization performance of transfer learning in linear regression models under both the underparameterized and overparameterized regimes. Unlike the existing literature that characterizes task similarity solely based on the distance between true parameters of the source and target tasks, we propose a more comprehensive framework that considers the similarity not only in terms of parameter distance but also in terms of feature sets. Specifically, we partition the feature space into a common part and a task-specific part. This setup enables us to analyze how the number of parameters in different parts influences the generalization performance of the target task. By characterizing the generalization performance, we offer insightful findings on transfer learning. For instance, when the total number of features in the source task's learning model is fixed, our analysis reveals the advantage of \emph{allocating more redundant features to the task-specific part rather than the common part}. Additionally, in specific scenarios characterized by high noise levels and small true parameters, \emph{sacrificing certain true features in the common part 
in favor of employing more redundant features in the task-specific part can yield notable benefits}.

\subsection{Related Work}

``Benign overfitting'' and ``double-descent'' have been discovered and studied for overfitted solutions in single-task linear regression. Some works have explored double-descent with minimum $\ell_2$-norm overfitted solutions \citep{belkin2018understand, belkin2019two,bartlett2019benign,hastie2019surprises,muthukumar2019harmless} or minimum $\ell_1$-norm overfitted solutions \citep{mitra2019understanding,ju2020overfitting}, while employing simple features such as Gaussian or Fourier features.
In recent years, other studies have investigated overfitted generalization performance by utilizing features that approximate shallow neural networks. For example, researchers have explored random feature (RF) models \citep{mei2019generalization}, two-layer neural tangent kernel (NTK) models \citep{arora2019fine,satpathi2021dynamics,ju2021generalization}, and three-layer NTK models \citep{ju2022generalization}. Note that all of these studies have focused solely on a single task.

% Depending on the application scenarios, there are some learning techniques such as meta-learning and continual learning that are also regarded as some kinds of transfer learning. There exists some literature about the theoretical analysis of these learning techniques.

There are only a limited number of studies on the theoretical analysis of transfer learning. \cite{lampinen2018an} investigates
the generalization dynamics in transfer learning by multilayer linear networks using a student-teacher scenario where the teacher network generates data for the student network, which is different from our setup where the data of the source task and the target task are independently generated by their own ground truth. \cite{dhifallah2021phase} focuses on the problem of when transfer learning is beneficial using the model of the single-layer perceptron. \cite{gerace2022probing} studies a binary classification problem by transfer learning of the first layer in a two-layer neural network. However, both \cite{dhifallah2021phase} and \cite{gerace2022probing} include an explicit regularization term in their models, which prevents overfitting. There are also some recent studies of transfer learning on linear models \cite{bastani2021predicting,li2022transfer,tian2022transfer,li2023estimation,tripuraneni2020theory,zhang2022class, lin2022transfer}.  For example, \cite{bastani2021predicting} and \cite{li2022transfer} investigate estimation and prediction in high-dimensional linear models. \cite{tian2022transfer} and \cite{li2023estimation} further extend the setup to high-dimensional generalized linear models.
\cite{tripuraneni2020theory} considers the case where  source and  target tasks  share a common and low-dimensional linear representation. \cite{lin2022transfer} studies transfer learning in a functional linear regression where the similarity between source and target tasks is measured using the Reproducing Kernel Hilbert Spaces norm.
\cite{zhang2022class} provides minimax bounds on the generalization performance but does not overfit the training data. In particular, none of these studies have considered the task similarity structure of interest in this paper, nor investigated the generalization performance in both overparameterized and underparameterized regimes.

The most related work to ours is \cite{dar2022double}. Specifically, \cite{dar2022double} studies the double descent phenomenon in transfer learning, which is also our focus in this paper. However, \cite{dar2022double} does not consider an explicit separation of the feature space by the common part and the task-specific part like we do in this paper. As we will show, such a separation in the system model enables us to analyze the double descent phenomenon under different options for transfer learning, including two options for parameter transfer and two options for data transfer. In contrast, \cite{dar2022double} only studies one option of parameter transfer. Therefore, our analysis is quite different from \cite{dar2022double}.

% \TODO

% related to heterogeneous transfer learning \citep{day2017survey}. \TODO
\section{System Model}

\subsection{Ground truth}\label{subsec.ground_truth}
% Consider $m$ training (source) tasks and one test (target) task. For simplicity of notations, we index the test task as the $(m+1)$-th task.
Consider one training (source) task 
%(as the first task) 
and one test (target) task,
%(as the second task).
respectively
referred to as the first and second task from now on.
We consider a linear model for each task; i.e., for the $i$-th task with $i\in \{1\text{ (source)},2 \text{ (target)}\}$, samples are generated by
\begin{align}\label{eq.ground_truth_original}
    y_{(i)}=\sx^T\sw{i}+\sz{i}^T\sq{i}+\epsilon_{(i)} ,
\end{align}
where $\hat{\bm{x}}\in \mathds{R}^{s}$ denotes the value of the features that correspond to the similar/common parameters $\sw{i}\in \mathds{R}^{s}$, $\sz{i}\in \mathds{R}^{s_{(i)}}$ denotes the value of the features that correspond to the task-specific parts $\sq{i}\in \mathds{R}^{s_{(i)}}$, and $\epsilon_{(i)}\in \mathds{R}$ denotes the noise. Here, $s$ denotes the number of common features and $s_{(i)}$ denotes the number of $i$-th task-specific features.
% We index all possible features by $1,2,\cdots$. 
Let $\featureSpace{i}$ denote the set of features  corresponding to $\sz{i}$ and
%let 
$\commonSpace$
%denote
the set of features corresponding to $\sx$ (so their cardinality $\abs{\featureSpace{i}}=s_i$ and $\abs{\commonSpace}=s$). 

% As a starting point, in this paper, we focus only on the situation where $m=1$, i.e., there is only one source task and one target task.

{\bf Representative motivating example:} In real-world applications, many tasks actually have such a partial similarity structure. For example, for image recognition tasks, some low-level features are common (e.g., skin texture of animals, the surface of a machine) among different tasks even if the objectives of those tasks are completely different (e.g., classifying cat and airplane, or classifying dog and automobile). These low-level features are usually captured by convolutional layers in DNNs, while the remaining parts of the DNNs (e.g., full-connected layers) are used to extract task-specific features. Even for a simple linear regression model, a theoretical explanation of the effect of common features and task-specific features on the generalization performance of transfer learning may provide useful insights on designing more suitable real-world transfer learning model structures (e.g., how many neurons to use in convolutional layers of DNNs to extract common low-level features to transfer).

% Here $\sw{i}$ can be different for different $i$. However, for the transfer learning scenario, they must be similar to each other, i.e., $\|\sw{i}-\sw{j}\|$ shouldn't be too large.

\subsection{Feature selection for learning}
From the learner's point of view, the true feature sets $\commonSpace$ and $\featureSpace{i}$ are usually unknown for many real-world applications. Therefore, in order to make sure that all true features are included, a common strategy is choosing more features to ensure the coverage of the true features, which is expressed by the following assumption.

\begin{assumption}\label[assumption]{as.no_missing_features}
$\commonSpace\subseteq \commonSpaceSelected$ and $\featureSpace{i}\subseteq \featureSpaceSelected{i}$ for all $i\in \{1,2\}$, where $\commonSpaceSelected$ denotes the set of selected features for the common part, and $\featureSpaceSelected{i}$ denotes the set of selected task-specific features.
\end{assumption}

Define $\pCommon\defeq \abs{\commonSpaceSelected}$ and $\pSpecial{i}\defeq \abs{\featureSpaceSelected{i}}$.
Let $\tilde{\bm{w}}\in \mathbf{R}^{\pCommon}$ denote the parameters to learn the common part and $\tilde{\bm{q}}_{(i)}\in \mathds{R}^{\pSpecial{i}}$ the parameters to learn the $i$-th task's specific part.

With \cref{as.no_missing_features}, we construct $\ww{i}\in \mathds{R}^{\pCommon}$ (corresponding to $\commonSpaceSelected$) from $\sw{i}$ (corresponding to $\commonSpace$) by filling zeros in the positions of the redundant features (corresponding to $\commonSpaceSelected\setminus \commonSpace$). We similarly construct $\qq{i}\in \mathds{R}^{\pSpecial{i}}$ from $\sq{i}$. Thus, \cref{eq.ground_truth_original} can be alternatively expressed as
\begin{align}\label{eq.ground_truth_extended}
    y_{(i)}=\bm{x}^T\ww{i} + \bm{z}_{(i)}^T\qq{i} + \epsilon_{(i)},
\end{align}
where $\bm{x}\in \mathds{R}^{\pCommon}$ are the features of $\commonSpace$ and $\bm{z}_{(i)}\in \mathds{R}^{\pSpecial{i}}$ are the features of $\featureSpaceSelected{i}$. Notice that the ground truth (i.e., input and output) does not change with $\pCommon$ or $\pSpecial{i}$ (since it only changes how many additional zeros are added). 
% Therefore, we can analyze how different numbers of selected features/parameters affect the performance for a given ground truth.

For analytical tractability, we adopt Gaussian features and noise, which is formally stated by the following assumption.
\begin{assumption}\label[assumption]{as.Gaussian}
All features follow \emph{i.i.d.} standard Gaussian $\mathcal{N}(0, 1)$. The noise also follows Gaussian distribution. Specifically, $\epsilon_{(1)}\sim \mathcal{N}\left(0, \sigmaOne^2\right)$ and $\epsilon_{(2)}\sim \mathcal{N}\left(0,\sigmaTwo^2\right)$.
\end{assumption}

\begin{remark}\label[remark]{remark.missing_features}
If there exist some missing features in $\commonSpaceSelected$ and $\featureSpaceSelected{i}$ (i.e., \cref{as.no_missing_features} is not satisfied), then the effect of these missing features is the same as the noise since we adopt \emph{i.i.d.} Gaussian features. Thus, our methods and results still hold
% to the case where \cref{as.no_missing_features} is violated 
by redefining $\sigmaOne^2$ and $\sigmaTwo^2$ as the total power of the noise and the missing features, i.e., $\sigma_{(i)}^2\gets \sigma_{(i)}^2 + \normTwo{\sw{i}^{\text{missing}}}^2+ \normTwo{\sq{i}^{\text{missing}}}^2$ where $\sw{i}^{\text{missing}}$ and $\sq{i}^{\text{missing}}$ denote the sub-vectors for the missing features of $\sw{i}$ and $\sq{i}$, respectively.
\end{remark}

\subsection{Training samples and training losses}

Let $\nSample{i}$ denote the number of training samples for task $i\in \{1,2\}$.
We stack these $\nSample{i}$ samples as matrices/vectors $\XStacked{i}\in \mathds{R}^{\pCommon\times \nSample{i}}$, $\ZStacked{i}\in \mathds{R}^{\pSpecial{i}\times \nSample{i}}$, $\yStacked{i}\in \mathds{R}^{\nSample{i}}$, where the $j$-th column of $\XStacked{i}$, the $j$-th column of $\ZStacked{i}$, and the $j$-th element of $\yStacked{i}$ correspond to $(\bm{x},\bm{z}_{(i)},y_{(i)})$ in \cref{eq.ground_truth_extended} of the $j$-th training sample.
Now \cref{eq.ground_truth_extended} can be written into a matrix equation for training samples:
\begin{align}\label{eq.ground_truth_matrix}
    \bm{y}_{(i)}=\XStacked{i}^T\ww{i}+\ZStacked{i}^T\qq{i}+\bm{\epsilon}_{(i)},
\end{align}
where $\bm{\epsilon}_{(i)}\in \mathds{R}^{\nSample{i}}$ is the stacked vector that consists of the noise in the output of each training sample (i.e., $\epsilon_{(i)}$ in \cref{eq.ground_truth_extended}).

We use mean squared error (MSE) as the training loss for the $i$-th task with the learner's parameters $\wbar, \qbar$ as
\begin{align*}
    \trainErr{i}\defeq \frac{1}{\nSample{i}}\normTwo{\bm{y}_{(i)}-\XStacked{i}^T\wbar - \ZStacked{i}^T\qbar}^2.
\end{align*}

% We stack the features $\bm{x}$ and $\bm{z}_{(i)}$ for the input of each one of $\nSample{i}$ samples as matrices $\mathbf{X}_{(i)}\in \mathds{R}^{\pCommon\times \nSample{i}}$ and $\mathbf{Z}_{(i)}\in \mathds{R}^{p_2\times \nSample{i}}$, respectively. We stack the output  as a vector $\bm{y}_{(i)}\in \mathds{R}^{\nSample{i}}$.
% Notice that $\pCommon$ may not equal to $s$ and $p_2$ may not equal to $s_{(i)}$, because $\pCommon$ and $p_2$ are chosen without knowing $s$ or $s_{(i)}$.

% \begin{figure}[h!]
%     \centering
%     \includegraphics[width=5in]{example.png}
%     \caption{An example of partial task similarity. Both users receive similar interference from base station A, but have different signal sources. The tasks could be optimizing the settings of each signal-source base station (i.e., B and C) to optimize the SNR.}
%     \label{fig:my_label}
% \end{figure}

\subsection{Options of parameter transfer}\label{subsec.model_transfer_parameters}

% For simplicity, we consider a minimum transfer learning scenario where there is only one training task, i.e., $m=1$. 
The process of transfer learning by transferring parameters consists of three steps: \textbf{step~1}) train for the source task using samples $(\XStacked{1},\ZStacked{1};\yStacked{1})$; \textbf{step~2}) extract and transfer the learned parameters of the common features; and \textbf{step~3}) determine/train the parameters for the target task using its own samples $(\XStacked{2},\ZStacked{2};\yStacked{2})$ based on the transferred parameters in step~2.

Step~1 is similar to a classical single-task linear regression. 
The training process will converge to a solution $\wTilde,\qTildeOne$ that minimizes this training loss, i.e., $(\wTilde,\qTildeOne)\defeq \argmin_{\wbar,\qbar} \trainErr{1}$. When $\pCommon+\pSpecial{1}>\nOne$ (overparameterized), there exist multiple solutions that can make the training loss zero (with probability $1$). In this situation, we will choose the one with the smallest $\ell_2$-norm $(\wTilde,\qTildeOne)$ which is defined as the solution of the following optimization problem:
\begin{align*}
    \min_{\wbar,\qbar}\quad &\normTwo{\wbar}^2+\normTwo{\qbar}^2\quad \text{subject to }\quad \XX^T\wbar + \ZZ^T\qbar= \yOne.
\end{align*}
We are interested in this minimum $\ell_2$-norm solution among all overfitted solutions because it corresponds to the convergence point of stochastic gradient descent (SGD) or gradient descent (GD) training with zero initial point (see proof in \cref{le.min_l2_solution}). 

Steps~2 and 3 jointly determine the learned result for the target task $\wTildeTwo$ and $\qTilde$. In this paper, we analyze two possible options differentiated by the usage of the transferred common part $\wTilde$.

\textbf{Option A:} We directly copy the learned result, i.e., $\wTildeTwo \defeq \wTilde$. For the training of the target task, only the task-specific parameters are trained. In other words, $\qTilde\defeq \argmin_{\qbar}\trainErr{2}$ when underparameterized. When $\pSpecial{i}>\nTwo$ (overparameterized), there exist multiple solutions that can make the training loss zero. We then define $\qTilde$ as the minimum $\ell_2$-norm overfitted solution, i.e., $\qTilde$ is defined as the solution of the following optimization problem:
\begin{align*}
    \min_{\qbar}&\normTwo{\qbar}^2\quad \text{ subject to }\quad \XXX^T\wTilde + \ZZZ^T\qTilde = \yTwo.
\end{align*}

\textbf{Option B:} We only use the learned common part as an initial training point of $\wTildeTwo$. In this option, both $\wTildeTwo$ and $\qTilde$ are determined by the training of the source task. Specifically, $(\wTildeTwo,\qTilde)\defeq \argmin_{\wbar,\qbar}\trainErr{2}$ when underparameterized. When $\pCommon+\pSpecial{2}>\nTwo$, there are multiple solutions that can make $\mathcal{L}_{(2)}(\wbar,\qbar)=0$.
We then define $(\wTildeTwo,\qTilde)$ as the convergence point of SGD/GD starting from $(\wbar=\wTilde,\qbar=\bm{0})$. Indeed, $(\wTildeTwo,\qTilde)$ corresponds to the smallest $\ell_2$-norm of the difference between the result and the initial point (see proof in \cref{le.min_l2_solution}):
\begin{align*}
    \min_{\wbar,\qbar}\quad  \normTwo{\wbar-\wTilde}^2+\normTwo{\qbar}^2\quad    \text{subject to }\quad \XXX^T\wbar +\ZZZ^T\qbar = \yTwo.
\end{align*}

\subsection{Performance evaluation}

The primary goal is to evaluate the generalization performance of the solution for the target task. Specifically, we define the \emph{model error} as 
\begin{align}
    \modelErr\defeq\normTwo{\wTildeTwo-\wTwo}^2+\normTwo{\qTilde-\qTwo}^2.\label{eq.def_model_err}
\end{align}
It can be proven that the model error $\modelErr$ is the expected test loss on noiseless test samples. To make our results in the following sections concise, we define 
\begin{alignat}{2}
    &\EwOneDiff\defeq \E_{\XX,\ZZ,\eOne}\normTwo{\wTwo-\wTilde}^2\qquad &&\text{(transferring error)},\label{eq.def_Lco}\\
    &\EwOneDiffNoNoise\defeq \EwOneDiff\big|_{\sigmaOne=\bm{0}} &&\text{(transferring error when $\sigmaOne=\bm{0}$)},\\
    &\similarity\defeq \normTwo{\wTwo-\wOne}&&\text{(similarity on common features)},\label{eq.def_similarity}\\
    &\oRatio\defeq 1-\frac{\nOne}{\pCommon+\pSpecial{1}}&&\text{(overparameterized ratio in step 1)}.\nonumber
\end{alignat}
Intuitively, $\EwOneDiff$ describes how well the common part learned from the source task estimates the target task's common part, $\similarity$ reflects the similarity between the common parts of the source task and the target task, and $\oRatio$ can be regarded as  the overparameterization ratio in step~1 introduced in \cref{subsec.model_transfer_parameters}.

% \begin{enumerate}
%     \item Train $m$ source tasks to minimize the training loss $\mathcal{L}_1(\tilde{\bm{w}}, \tilde{\bm{q}})$. 
%     \item Copy the common parameters to the target task.
%     \item There are two options: OPT 1) Train the special part of the target task to minimize the training loss $\mathcal{L}_2(\tilde{\bm{w}}, \tilde{\bm{q}}_{(m+1)})$. In other words, 
%     \begin{align*}
%         \min_{\tilde{\bm{q}}_{(m+1)}}\mathcal{L}_2(\tilde{\bm{w}}, \tilde{\bm{q}}_{(m+1)}).
%     \end{align*}
%     OPT 2) Train the common part and the special part together, but use the training tasks' result as an initial point.
% \end{enumerate}

% For a training result $\tilde{\bm{w}}, \tilde{\bm{q}}=(\tilde{\bm{q}}_{(1)}, \tilde{\bm{q}}_{(2)},\cdots,\tilde{\bm{q}}_{(m)}),\tilde{\bm{q}}_{(m+1)}$, those losses are defined as
% \begin{align*}
%     &\mathcal{L}_1(\tilde{\bm{w}}, \tilde{\bm{q}})\defeq \sum_{i=1}^m \normTwo{\bm{y}_{(i)}-\mathbf{X}_{(i)}^T \tilde{\bm{w}}-\mathbf{Z}_{(i)}^T\tilde{\bm{q}}_{(i)}}^2,\\
%     &\mathcal{L}_2(\tilde{\bm{w}}, \tilde{\bm{q}}_{(m+1)})\defeq \normTwo{\bm{y}_{(m+1)}-\mathbf{X}_{(m+1)}^T \tilde{w}-\mathbf{Z}_{(i)}^T\tilde{\bm{q}}_{(m+1)}}^2.
% \end{align*}

% \subsection{Transfer samples}

% Train all data (from both source tasks and the target task) together:
% \begin{align*}
%     \min_{\tilde{\bm{w}}, \tilde{\bm{q}}, \tilde{\bm{q}}_{(m+1)}}\sum_{i=1}^{m+1} \normTwo{\bm{y}_{(i)}-\mathbf{X}_{(i)}^T \tilde{\bm{w}}-\mathbf{Z}_{(i)}^T\tilde{\bm{q}}_{(i)}}^2
% \end{align*}

\section{Main Results for Parameter Transfer}
For the scheme of transferring parameters (\cref{subsec.model_transfer_parameters}), we will establish three theorems corresponding to the performance of the transferring error\footnote{The error caused by the transferred parameters. The precise definition is given in \cref{eq.def_Lco}.}, the model error of Option~A, and the model error of Option~B, respectively.
% In order to estimate $\modelErr$, we need to first quantify the performance of step 1. 
% Notice that $\EwOneDiff$ depends on the similarity between the common parts of the source task and the target task, which is reflected by 
% We define $\similarity$ as the distance between $\wOne$ and $\wTwo$, 

\begin{theorem}[transferring error]\label{thm.w_err}
The transferring error is given by 
\begin{empheq}[left={\EwOneDiff=\empheqlbrace}]{alignat=2}
& \EwOneDiffNoNoise+\bNoise,\quad && \text{ for } \pCommon+\pSpecial{1}>\nOne+1,\label{eq.result_1}\\
& \similarity^2+\underbrace{\frac{\pCommon\sigmaOne^2}{\nOne-\left(\pCommon+\pSpecial{1}\right)-1}}_{\text{Term O1}},\quad  && \text{ for } \nOne>\pCommon+\pSpecial{1}+1,\label{eq.result_2}
\end{empheq}
where $0\leq \EwOneDiffNoNoise\leq \min\limits_{i=1,2,3} \overline{b}_i^2$, and
\begin{align}
    &\textstyle\overline{b}_1\defeq \similarity+\sqrt{\oRatio\left(\normTwo{\wOne}^2+\normTwo{\qOne}^2\right)},\label{eq.def_b1}\\
    &\textstyle\overline{b}_2\defeq \normTwo{\wTwo}+\sqrt{1-\oRatio}\normTwo{\wOne}+\sqrt{\min\{\oRatio,1-\oRatio\}}\normTwo{\qOne},\label{eq.def_b2}\\
    &\textstyle\overline{b}_3\defeq \sqrt{r}\normTwo{\wOne}+\similarity+\sqrt{\min\{\oRatio,1-\oRatio\}}\normTwo{\qOne},\label{eq.def_b3}\\
    &\bNoise\defeq \frac{\pCommon}{\pCommon+\pSpecial{1}}\cdot \frac{\nOne \sigmaOne^2}{\pCommon+\pSpecial{1}-\nOne- 1}.\label{eq.def_bNoise}
\end{align}
% When $\nOne>\pCommon+\pSpecial{1}+1$, we have
% \begin{align}
%     \EwOneDiff = \similarity^2+\frac{\pCommon\sigmaOne^2}{\nOne-\left(\pCommon+\pSpecial{1}\right)-1}.
% \end{align}
\end{theorem}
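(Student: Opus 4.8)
The plan is to unify the source-task design into one matrix, reduce the learned common parameter to a projection plus a noise term, and then separate a noiseless (feature-only) contribution from a variance contribution handled by inverse-Wishart moments. Introduce proof-internal notation: stack the source features as $\mathbf{M}\defeq[\XX^T\ \ZZ^T]\in\mathds{R}^{\nOne\times(\pCommon+\pSpecial{1})}$ (i.i.d.\ $\mathcal{N}(0,1)$ entries by \cref{as.Gaussian}), the source ground truth as $\bm{\theta}\defeq[\wOne^T\ \qOne^T]^T$, so $\yOne=\mathbf{M}\bm{\theta}+\eOne$, and let $\mathbf{S}$ be the $\pCommon\times(\pCommon+\pSpecial{1})$ matrix selecting the first $\pCommon$ coordinates, so $\wOne=\mathbf{S}\bm{\theta}$ and $\wTilde=\mathbf{S}\hat{\bm{\theta}}$ for the learned source solution $\hat{\bm{\theta}}$. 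Write $\bm{\theta}=\bm{\theta}_w+\bm{\theta}_q$, where $\bm{\theta}_w$ retains the common block and zeros the rest and $\bm{\theta}_q$ vice versa; the identity $\mathbf{S}\bm{\theta}_q=\bm{0}$ is used later.

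First I would write the closed form of $\hat{\bm{\theta}}$ in each regime and condition on $\mathbf{M}$. Overparameterized ($\pCommon+\pSpecial{1}>\nOne$): $\hat{\bm{\theta}}=\mathbf{M}^T(\mathbf{M}\mathbf{M}^T)^{-1}\yOne=\PP\bm{\theta}+\mathbf{M}^T(\mathbf{M}\mathbf{M}^T)^{-1}\eOne$, where $\PP\defeq\myProj{\mathbf{M}^T}$ is the orthogonal projection onto the $\nOne$-dimensional row space of $\mathbf{M}$. Underparameterized ($\nOne>\pCommon+\pSpecial{1}$): $\hat{\bm{\theta}}=\bm{\theta}+(\mathbf{M}^T\mathbf{M})^{-1}\mathbf{M}^T\eOne$, which is unbiased. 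In both cases $\wTwo-\wTilde$ is the sum of an $\eOne$-free vector and an $\eOne$-linear vector; since $\eOne$ is zero-mean and independent of $\mathbf{M}$, taking $\E[\,\cdot\mid\mathbf{M}]$ kills the cross term, so $\EwOneDiff=\E_{\mathbf{M}}\normTwo{(\text{noiseless part})}^2+\sigmaOne^2\,\E_{\mathbf{M}}\Tr(\cdot)$. The first summand is exactly $\EwOneDiffNoNoise$ in the overparameterized case, and equals $\normTwo{\wTwo-\wOne}^2=\similarity^2$ in the underparameterized case by unbiasedness. For the variance term I would exploit rotational invariance of $\mathbf{M}$: $\E_{\mathbf{M}}[\mathbf{M}^T(\mathbf{M}\mathbf{M}^T)^{-2}\mathbf{M}]$ commutes with every orthogonal conjugation, hence equals $c\,\iMatrix{\pCommon+\pSpecial{1}}$ with $c(\pCommon+\pSpecial{1})=\E\Tr((\mathbf{M}\mathbf{M}^T)^{-1})=\nOne/(\pCommon+\pSpecial{1}-\nOne-1)$ by the inverse-Wishart mean (valid precisely when $\pCommon+\pSpecial{1}>\nOne+1$); conjugating by $\mathbf{S}$ multiplies the trace by $\pCommon$, recovering $\bNoise$. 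In the underparameterized case the noise covariance reduces to $\sigmaOne^2\Tr(\mathbf{S}\,\E[(\mathbf{M}^T\mathbf{M})^{-1}]\,\mathbf{S}^T)$, and $\E[(\mathbf{M}^T\mathbf{M})^{-1}]=(\nOne-(\pCommon+\pSpecial{1})-1)^{-1}\iMatrix{\pCommon+\pSpecial{1}}$ (valid when $\nOne>\pCommon+\pSpecial{1}+1$) yields Term~O1.

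Next I would bound $\EwOneDiffNoNoise=\E_{\mathbf{M}}\normTwo{(\wTwo-\wOne)+\mathbf{S}(\iMatrix{}-\PP)\bm{\theta}}^2$. Nonnegativity is immediate. For the upper bounds, the key fact is that the row space of $\mathbf{M}$ is a uniformly random $\nOne$-dimensional subspace, so $\E_{\mathbf{M}}\normTwo{\PP\bm{v}}^2=\tfrac{\nOne}{\pCommon+\pSpecial{1}}\normTwo{\bm{v}}^2=(1-\oRatio)\normTwo{\bm{v}}^2$ and $\E_{\mathbf{M}}\normTwo{(\iMatrix{}-\PP)\bm{v}}^2=\oRatio\normTwo{\bm{v}}^2$ for any fixed $\bm{v}$; moreover $\normTwo{\mathbf{S}\bm{u}}\le\normTwo{\bm{u}}$. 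I would then write $\wTwo-\wTilde$ as a sum of three vectors and apply Minkowski's inequality for the norm $\bm{v}\mapsto\sqrt{\E_{\mathbf{M}}\normTwo{\bm{v}}^2}$, three ways: (i) $\wTwo-\wTilde=(\wTwo-\wOne)+\mathbf{S}(\iMatrix{}-\PP)\bm{\theta}$ gives $\overline{b}_1$ since the second term has $L^2$-norm at most $\sqrt{\oRatio(\normTwo{\wOne}^2+\normTwo{\qOne}^2)}$; (ii) $\wTwo-\wTilde=\wTwo-\mathbf{S}\PP\bm{\theta}_w-\mathbf{S}\PP\bm{\theta}_q$ gives $\overline{b}_2$, where $\mathbf{S}\PP\bm{\theta}_w$ has $L^2$-norm at most $\sqrt{1-\oRatio}\,\normTwo{\wOne}$, and — crucially — $\mathbf{S}\PP\bm{\theta}_q=-\mathbf{S}(\iMatrix{}-\PP)\bm{\theta}_q$ because $\mathbf{S}\bm{\theta}_q=\bm{0}$, so its $L^2$-norm is at most $\min\{\sqrt{1-\oRatio},\sqrt{\oRatio}\}\normTwo{\qOne}=\sqrt{\min\{\oRatio,1-\oRatio\}}\,\normTwo{\qOne}$; (iii) $\wTwo-\wTilde=(\wTwo-\wOne)+\mathbf{S}(\iMatrix{}-\PP)\bm{\theta}_w+\mathbf{S}(\iMatrix{}-\PP)\bm{\theta}_q$ gives $\overline{b}_3$ using the same min trick on the $\bm{\theta}_q$ term. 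Squaring each and taking the minimum yields $\EwOneDiffNoNoise\le\min_i\overline{b}_i^2$.

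The step I expect to be the main obstacle is the variance computation: keeping the inverse-Wishart bookkeeping exactly right — which of $\mathbf{M}\mathbf{M}^T$, $\mathbf{M}^T\mathbf{M}$ is Wishart with how many degrees of freedom, and tracking the $\pm1$ shifts — since this is what pins down the precise denominators and explains the excluded band around $\pCommon+\pSpecial{1}=\nOne$. By contrast, the three-fold bound on $\EwOneDiffNoNoise$ is largely bookkeeping once one has the $L^2$-Minkowski viewpoint and the observation $\mathbf{S}\bm{\theta}_q=\bm{0}$.
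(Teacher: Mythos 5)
Your proposal is correct and follows essentially the same route as the paper's proof: the same bias--variance split after conditioning on the design, the same inverse-Wishart/rotational-invariance computation of the noise terms (yielding $\bNoise$ and Term~O1 with the same degree-of-freedom bookkeeping), and the same three Minkowski-type decompositions of the noiseless part to obtain $\overline{b}_1,\overline{b}_2,\overline{b}_3$. The one place you improve on the paper is the $\min\{\oRatio,1-\oRatio\}$ coefficient: your identity $\mathbf{S}\PP\bm{\theta}_q=-\mathbf{S}(\iMatrix{}-\PP)\bm{\theta}_q$ gives it in one line, whereas the paper reaches the same bound through a longer geometric orthogonal-decomposition argument (its Lemma~\ref{le.term_2}).
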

% \begin{proof}
% See Appendix~\ref{app.proof_w_err}.
% \end{proof}

% We have the following theorems for Option~A and Option~B:

\begin{theorem}[Option A]\label{thm.target_task_err} For Option~A, we must have
\begin{empheq}[left={\E[\modelErr]=\empheqlbrace}]{alignat=2}
\textstyle
    &\underbrace{\EwOneDiff+\frac{\nTwo \left(\EwOneDiff+\sigmaTwo^2\right)}{\pSpecial{2}-\nTwo-1}}_{\text{Term A1}}+\underbrace{\left(1-\frac{\nTwo}{\pSpecial{2}}\right)\normTwo{\qTwo}^2}_{\text{Term A2}}, \quad && \text{ for } \pSpecial{2}>\nTwo+1,\label{eq.result_3}\\
    &\EwOneDiff+\frac{\pSpecial{2}\left(\EwOneDiff+\sigmaTwo^2\right)}{\nTwo-\pSpecial{2}-1}, && \text{ for } \nTwo>\pSpecial{2}+1.\label{eq.result_4}
\end{empheq}
% When $\pSpecial{2}>\nTwo+1$, we must have
% \begin{align}
%     \E[\modelErr]=\underbrace{\EwOneDiff+\frac{\nTwo \left(\EwOneDiff+\sigmaTwo^2\right)}{\pSpecial{2}-\nTwo-1}}_{\text{Term A1}}+\underbrace{\left(1-\frac{\nTwo}{\pSpecial{2}}\right)\normTwo{\qTwo}^2}_{\text{Term A2}}.\label{eq.result_3}
% \end{align}
% When $\nTwo>\pSpecial{2}+1$, we must have
% \begin{align}
%   \E[\modelErr]=  \EwOneDiff+\frac{\pSpecial{2}\left(\EwOneDiff+\sigmaTwo^2\right)}{\nTwo-\pSpecial{2}-1}\label{eq.result_4}
% \end{align}
\end{theorem}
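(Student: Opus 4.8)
The plan is to first decompose the model error as $\modelErr = \normTwo{\wTildeTwo - \wTwo}^2 + \normTwo{\qTilde - \qTwo}^2$. In Option~A we have $\wTildeTwo = \wTilde$, so the first summand contributes exactly $\E\normTwo{\wTilde - \wTwo}^2 = \EwOneDiff$ in expectation, by the definition in \cref{eq.def_Lco}. Everything else reduces to computing $\E\normTwo{\qTilde - \qTwo}^2$.

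The central reduction is the following. Since $\wTilde$ depends only on the source-task data $(\XX,\ZZ,\eOne)$, it is independent of the target-task data $(\XXX,\ZZZ,\eTwo)$. Conditioning on $\wTilde$, the determination of $\qTilde$ is exactly a single-task linear regression with design $\ZZZ$, response $\tilde{\bm{y}} \defeq \yTwo - \XXX^T\wTilde = \ZZZ^T\qTwo + \xiTerm$, and effective noise $\xiTerm \defeq \XXX^T(\wTwo-\wTilde) + \eTwo$. By \cref{as.Gaussian}, conditioned on $\wTilde$, each entry of $\XXX^T(\wTwo - \wTilde)$ is $\mathcal{N}(0,\normTwo{\wTwo-\wTilde}^2)$, i.i.d.\ across samples, and independent of $\ZZZ$; hence, conditioned on $(\wTilde,\ZZZ)$, $\xiTerm \sim \mathcal{N}(0,\sumNoise\,\iMatrix{\nTwo})$ with $\sumNoise \defeq \normTwo{\wTwo-\wTilde}^2 + \sigmaTwo^2$, and taking expectation over $\wTilde$ gives $\E[\sumNoise] = \EwOneDiff + \sigmaTwo^2$.

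Next I would handle the two regimes separately. When $\nTwo > \pSpecial{2}+1$, the OLS solution gives $\qTilde - \qTwo = (\ZZZ\ZZZ^T)^{-1}\ZZZ\xiTerm$, so $\E[\normTwo{\qTilde-\qTwo}^2 \mid \wTilde] = \sumNoise\,\E\,\Tr\big((\ZZZ\ZZZ^T)^{-1}\big) = \sumNoise\cdot\frac{\pSpecial{2}}{\nTwo-\pSpecial{2}-1}$, using the inverse-Wishart moment identity $\E(\ZZZ\ZZZ^T)^{-1} = \frac{1}{\nTwo-\pSpecial{2}-1}\iMatrix{\pSpecial{2}}$ (valid precisely when $\nTwo > \pSpecial{2}+1$). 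When $\pSpecial{2} > \nTwo+1$, the minimum-$\ell_2$-norm solution gives $\qTilde = \PP\qTwo + \ZZZ(\ZZZ^T\ZZZ)^{-1}\xiTerm$, where $\PP \defeq \ZZZ(\ZZZ^T\ZZZ)^{-1}\ZZZ^T$ is the orthogonal projection onto the $\nTwo$-dimensional column space of $\ZZZ$. In $\qTilde - \qTwo = -(\iMatrix{\pSpecial{2}}-\PP)\qTwo + \ZZZ(\ZZZ^T\ZZZ)^{-1}\xiTerm$ the cross term vanishes in expectation (given $(\wTilde,\ZZZ)$, $\xiTerm$ is zero-mean and independent of $\ZZZ$), the bias term contributes $\E\normTwo{(\iMatrix{\pSpecial{2}}-\PP)\qTwo}^2 = \big(1-\tfrac{\nTwo}{\pSpecial{2}}\big)\normTwo{\qTwo}^2$ by rotational invariance ($\E\PP = \tfrac{\nTwo}{\pSpecial{2}}\iMatrix{\pSpecial{2}}$), and the variance term contributes $\sumNoise\,\E\,\Tr\big((\ZZZ^T\ZZZ)^{-1}\big) = \sumNoise\cdot\frac{\nTwo}{\pSpecial{2}-\nTwo-1}$ by the same identity applied to $\ZZZ^T\ZZZ$. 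Taking the expectation over $\wTilde$, substituting $\E[\sumNoise] = \EwOneDiff + \sigmaTwo^2$, adding the $\EwOneDiff$ from the common part, and regrouping gives Terms~A1 and~A2 of \cref{eq.result_3} in the overparameterized case and the expression in \cref{eq.result_4} in the underparameterized case.

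I expect the main obstacle to be making the reduction airtight: one must verify that the effective noise $\xiTerm$ is genuinely isotropic Gaussian and independent of the regression design $\ZZZ$ after conditioning on $\wTilde$ (this uses both the i.i.d.\ Gaussianity of all features in \cref{as.Gaussian} and the independence of the source and target datasets), that $\ZZZ\ZZZ^T$ (resp.\ $\ZZZ^T\ZZZ$) is invertible almost surely, and that the inverse-Wishart expectations are finite — which is exactly why the strict inequalities $\nTwo > \pSpecial{2}+1$ and $\pSpecial{2} > \nTwo+1$ appear in the statement. The remaining trace computations and the regrouping into A1/A2 are then routine.
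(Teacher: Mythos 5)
Your proposal is correct and follows essentially the same route as the paper's proof: conditioning on $\wTilde$, treating $\XXX^T(\wTwo-\wTilde)+\eTwo$ as isotropic Gaussian noise of variance $\normTwo{\wTwo-\wTilde}^2+\sigmaTwo^2$ independent of $\ZZZ$, splitting the min-norm/OLS error into a projection bias term and an inverse-Wishart variance term, and then averaging over $\wTilde$ to replace $\normTwo{\wTwo-\wTilde}^2$ by $\EwOneDiff$. The paper's argument in \cref{app.proof_target_task_err} performs exactly these steps via \cref{le.bias} and \cref{le.IW}.
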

% \begin{proof}
% See Appendix~\ref{app.proof_target_task_err}.
% \end{proof}

\begin{theorem}[Option B]\label{thm.opt2}
For Option~B, we must have 
%$\E[\modelErr]$
\begin{empheq}[
%left={=\empheqlbrace}]{alignat=2}
left={\E[\modelErr]=\empheqlbrace}]{alignat=2}
\textstyle
    &\underbrace{\textstyle\left(1-\frac{\nTwo}{\pCommon+\pSpecial{2}}\right)\left(\EwOneDiff+\normTwo{\qTwo}^2\right)}_{\text{Term B1}}+\underbrace{\textstyle\frac{\nTwo\sigmaTwo^2}{\pCommon+\pSpecial{2}-\nTwo -1}}_{\text{Term B2}},\quad && \text{ for } \pCommon+\pSpecial{2}>\nTwo + 1,\label{eq.result_5}\\
    &\frac{(\pCommon+\pSpecial{2})\sigmaTwo^2}{\nTwo-(\pCommon+\pSpecial{2})-1}, && \text{ for } \nTwo>\pCommon+\pSpecial{2}+1.\label{eq.result_6}
\end{empheq}
% When $\pCommon+\pSpecial{2}>\nTwo + 1$, we have
% \begin{align}
%     \E[\modelErr] =& \underbrace{\left(1-\frac{\nTwo}{\pCommon+\pSpecial{2}}\right)\left(\EwOneDiff+\normTwo{\qTwo}^2\right)}_{\text{Term B1}}+\underbrace{\frac{\nTwo\sigmaTwo^2}{\pCommon+\pSpecial{2}-\nTwo -1}}_{\text{Term B2}}.\label{eq.result_5}
% \end{align}
% When $\nTwo>\pCommon+\pSpecial{2}+1$, we have
% \begin{align}
%     \E[\modelErr] =\frac{(\pCommon+\pSpecial{2})\sigmaTwo^2}{\nTwo-(\pCommon+\pSpecial{2})-1}.\label{eq.result_6}
% \end{align}
\end{theorem}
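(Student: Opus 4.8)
The plan is to collapse Option~B, in either regime, into a single (minimum-norm or ordinary) least-squares computation over the \emph{stacked} target-task design, and then to evaluate exactly two expectations: one of a random orthogonal projector coming from the Gaussian target design, and one of the trace of an inverse Wishart matrix. Introduce the stacked objects $\bm{A}\defeq\myMatrix{\XXX\\ \ZZZ}\in\mathds{R}^{(\pCommon+\pSpecial{2})\times\nTwo}$, $\bm{\theta}^\star\defeq\myMatrix{\wTwo\\ \qTwo}$, and $\bm{\theta}_0\defeq\myMatrix{\wTilde\\ \bm{0}}$, so that \cref{eq.ground_truth_matrix} reads $\yTwo=\bm{A}^T\bm{\theta}^\star+\eTwo$, the Option~B output is the stacked vector $\myMatrix{\wTildeTwo\\ \qTilde}$, and $\modelErr=\normTwo{\myMatrix{\wTildeTwo\\ \qTilde}-\bm{\theta}^\star}^2$. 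Two structural facts are used throughout: by \cref{as.Gaussian}, the step-2 randomness $(\bm{A},\eTwo)$ is independent of all step-1 randomness $(\XX,\ZZ,\eOne)$ and hence of $\bm{\theta}_0$ (in particular of $\wTilde$); and $\bm{A}$ has full rank $\min\{\pCommon+\pSpecial{2},\nTwo\}$ almost surely, so rank-deficient configurations are negligible.

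\textbf{Overparameterized regime ($\pCommon+\pSpecial{2}>\nTwo+1$).} By the defining optimization problem of Option~B in \cref{subsec.model_transfer_parameters} (equivalently, the GD/SGD-limit characterization of \cref{le.min_l2_solution}), $\myMatrix{\wTildeTwo\\ \qTilde}=\bm{\theta}_0+\bm{A}(\bm{A}^T\bm{A})^{-1}(\yTwo-\bm{A}^T\bm{\theta}_0)$. Substituting $\yTwo=\bm{A}^T\bm{\theta}^\star+\eTwo$ and writing $\bm{P}\defeq\bm{A}(\bm{A}^T\bm{A})^{-1}\bm{A}^T$ for the projector onto $\mathrm{col}(\bm{A})$,
\[
\myMatrix{\wTildeTwo\\ \qTilde}-\bm{\theta}^\star=-(\iMatrix{\pCommon+\pSpecial{2}}-\bm{P})(\bm{\theta}^\star-\bm{\theta}_0)+\bm{A}(\bm{A}^T\bm{A})^{-1}\eTwo .
\]
The first summand lies in $\mathrm{col}(\bm{A})^\perp$ and the second in $\mathrm{col}(\bm{A})$, so the cross term vanishes for every realization and $\modelErr=\normTwo{(\iMatrix{\pCommon+\pSpecial{2}}-\bm{P})(\bm{\theta}^\star-\bm{\theta}_0)}^2+\eTwo^T(\bm{A}^T\bm{A})^{-1}\eTwo$. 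Conditioning on step~1, the projector $\bm{P}$ is onto a uniformly oriented $\nTwo$-dimensional subspace of $\mathds{R}^{\pCommon+\pSpecial{2}}$, independent of $\bm{\theta}^\star-\bm{\theta}_0$, so $\E[\bm{P}]=\tfrac{\nTwo}{\pCommon+\pSpecial{2}}\iMatrix{\pCommon+\pSpecial{2}}$ (rotational invariance together with $\Tr(\bm{P})=\nTwo$), giving $\E\big[\normTwo{(\iMatrix{\pCommon+\pSpecial{2}}-\bm{P})(\bm{\theta}^\star-\bm{\theta}_0)}^2\mid\text{step 1}\big]=\big(1-\tfrac{\nTwo}{\pCommon+\pSpecial{2}}\big)\big(\normTwo{\wTwo-\wTilde}^2+\normTwo{\qTwo}^2\big)$; taking the expectation over step~1 and recalling $\EwOneDiff=\E\normTwo{\wTwo-\wTilde}^2$ yields Term~B1. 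For the noise term, condition on $\bm{A}$: $\E[\eTwo^T(\bm{A}^T\bm{A})^{-1}\eTwo\mid\bm{A}]=\sigmaTwo^2\Tr((\bm{A}^T\bm{A})^{-1})$, and since $\bm{A}^T\bm{A}$ is Wishart of size $\nTwo$ with $\pCommon+\pSpecial{2}$ degrees of freedom, the standard identity $\E\Tr(W_m(n,\iMatrix{m})^{-1})=\tfrac{m}{n-m-1}$ — finite precisely when $n>m+1$, i.e.\ $\pCommon+\pSpecial{2}>\nTwo+1$ — gives Term~B2. Summing yields \cref{eq.result_5}.

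\textbf{Underparameterized regime ($\nTwo>\pCommon+\pSpecial{2}+1$).} Here $\bm{A}^T$ has full column rank a.s., so the target training loss has a unique minimizer and the initialization $\bm{\theta}_0$ is irrelevant: $\myMatrix{\wTildeTwo\\ \qTilde}=(\bm{A}\bm{A}^T)^{-1}\bm{A}\yTwo=\bm{\theta}^\star+(\bm{A}\bm{A}^T)^{-1}\bm{A}\eTwo$, whence $\modelErr=\eTwo^T\bm{A}^T(\bm{A}\bm{A}^T)^{-2}\bm{A}\eTwo$. Conditioning on $\bm{A}$ gives $\sigmaTwo^2\Tr((\bm{A}\bm{A}^T)^{-1})$, and since $\bm{A}\bm{A}^T$ is now Wishart of size $\pCommon+\pSpecial{2}$ with $\nTwo$ degrees of freedom, the same identity gives $\E[\modelErr]=\tfrac{(\pCommon+\pSpecial{2})\sigmaTwo^2}{\nTwo-(\pCommon+\pSpecial{2})-1}$, which is \cref{eq.result_6}.

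\textbf{Anticipated main obstacle.} The matrix algebra and the orthogonal splitting of $\modelErr$ are routine; the two steps that need genuine care are (i) justifying that $\mathrm{col}(\bm{A})$ — equivalently the projector $\bm{P}$ — is uniformly oriented and independent of the transferred displacement $\bm{\theta}^\star-\bm{\theta}_0$, which is exactly the place where the separation between the step-1 and step-2 data enters, and (ii) invoking the inverse-Wishart moment in precisely the two parameter windows claimed. I expect the bookkeeping in (ii) — tracking which Wishart carries which size versus degrees of freedom, and the ``$-1$'' shifts in the denominators, across the two regimes — to be the most error-prone part, rather than conceptually hard.
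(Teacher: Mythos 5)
Your proposal is correct and follows essentially the same route as the paper: stack the target design into $\UUU=\myMatrix{\XXX\\\ZZZ}$, split the error into the orthogonal-complement projection of $\myMatrix{\wTwo-\wTilde\\\qTwo}$ plus the noise term $\UUU(\UUU^T\UUU)^{-1}\eTwo$, and evaluate the two pieces via the expected projector (the paper's \cref{le.bias}) and the inverse-Wishart trace (the paper's \cref{le.IW}); the underparameterized case is handled identically. The only cosmetic differences are that you kill the cross term deterministically by orthogonality of $\mathrm{col}(\UUU)$ and its complement, where the paper averages over the zero-mean noise, and you compute the inverse-Wishart moments directly rather than citing the packaged lemma.
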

The proofs of \cref{thm.w_err,thm.target_task_err,thm.opt2} are given in \cref{app.proof_w_err,app.proof_target_task_err,app.proof_opt2}, respectively. \cref{thm.w_err,thm.target_task_err,thm.opt2} provide some interesting insights, which we now discuss in \cref{subsec.similar,subsec.A,subsec.B}.

\begin{figure}[t!]
    \centering
    \includegraphics[width=0.7\textwidth]{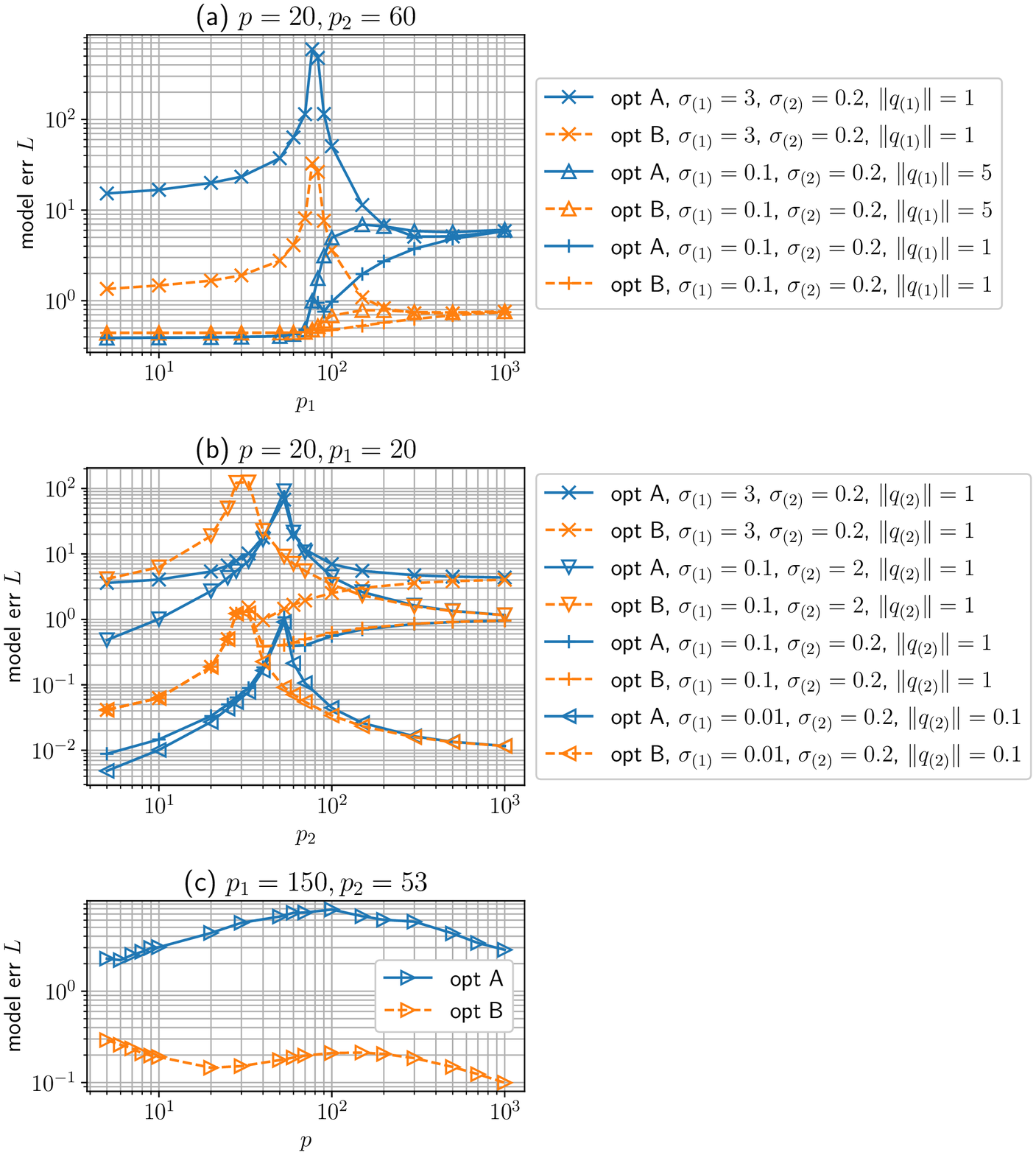}
    \caption{Generalization performance of transfer learning under different setups, where $s=s_1=s_2=5$, $\nOne=100$, $\nTwo=50$, $\wOne=\wTwo$. Each point is the average of $100$ random runs. Other settings for each subfigure: (a) $\normTwo{\qTwo}=\normTwo{\wOne}=1$; (b) $\normTwo{\wOne}=\normTwo{\qOne}=1$; (c) $\normTwo{\sigmaOne}=1$, $\normTwo{\sigmaTwo}=0.2$, $\normTwo{\qOne}=\normTwo{\qTwo}=\normTwo{\wOne}=0.1$.}
    \label{fig.change_p}
\end{figure}

\subsection{Common insights for Options~A and B}\label{subsec.similar}

\noindent\textbf{(1) Benign overfitting w.r.t. $\pSpecial{1}$ needs large $\sigmaOne$.} For the overparameterized regime result in \cref{eq.result_1} of \cref{thm.w_err}, when $\sigmaOne$ is large, the term $\bNoise$ (defined in \cref{eq.def_bNoise}) dominates $\EwOneDiff$ and is monotone decreasing w.r.t. $\pSpecial{1}$. When $\pSpecial{1}\to \infty$, we have $\bNoise\to 0$. In contrast, for the underparameterized regime result in \cref{eq.result_2}, Term~O1 (noise effect) is always larger than $\frac{\pCommon \sigmaOne^2}{\nOne}$, which can be worse than that of the overparameterized regime when $\pSpecial{1}$ is sufficiently large. By \cref{thm.target_task_err,thm.opt2}, we know that $\modelErr$ decreases when $\EwOneDiff$ decreases. Therefore, in the situation of large $\sigmaOne$, increasing $\pSpecial{1}$ in the overparameterized regime (of step~1) w.r.t. $\pSpecial{1}$ can reduce the generalization error, which implies the existence of benign overfitting.

We also numerically verify the impact of $\sigmaOne$ on the benign overfitting in \cref{fig.change_p}(a), where we plot the empirical average of $\modelErr$ w.r.t. $\pSpecial{1}$. The two curves of $\sigmaOne=3$ with markers ``$\times$'' descend in the overparameterized regime ($\pSpecial{1}>80$) and can be lower than their values in the underparameterized regime. In contrast,  the two curves of $\sigmaOne=0.1$ with markers ``$+$'' increase in most parts of the overparameterized regime and are higher than the underparameterized regime. Such a contrast indicates the benign overfitting w.r.t. $\pSpecial{1}$ needs large $\sigmaOne$.

\noindent\textbf{(2) Benign overfitting w.r.t. $\pSpecial{2}$ needs large $\sigmaTwo$.} For \cref{eq.result_4} (underparameterized regime of Option~A), $\E[\modelErr]$ is always larger than $\EwOneDiff(1+\frac{1}{\nTwo})$. In contrast, for \cref{eq.result_3} (overparameterized regime of Option~A), when $\sigmaTwo$ is much larger than $\normTwo{\qTwo}^2$, then Term~A2 is negligible and Term~A1 dominates. In this situation, $\E[\modelErr]$ is monotone decreasing w.r.t. $\pSpecial{2}$ and will approaches $\EwOneDiff$ when $\pSpecial{2}\to \infty$. In other words, benign overfitting exists. Similarly, by \cref{thm.opt2}, benign overfitting exists when $\sigmaTwo^2$ is much larger than $\EwOneDiff+\normTwo{\qTwo}^2$.

In \cref{fig.change_p}(b), the two curves with markers ``$\triangledown$'' denote the model error of Option~A and Option~B when $\sigmaTwo$ is large ($\sigmaTwo=2$). They have a descending trend in the entire overparameterized regime. In contrast, the two curves with markers ``$+$'', which denote the model error for the situation of small $\sigmaTwo$ ($\sigmaTwo=0.2$), only decrease w.r.t. $\pSpecial{2}$ at the beginning of the overparameterized regime, while increasing thereafter.

\noindent\textbf{(3) A descent floor w.r.t. $\pSpecial{2}$ sometimes exists.} For \cref{eq.result_3} of Option~A, Term~A1 is monotone decreasing w.r.t. $\pSpecial{2}$, while Term~A2 is monotone increasing w.r.t. $\pSpecial{2}$. When $\pSpecial{2}$ is a little larger than $\nTwo$, the denominator $\pSpecial{2}-\nTwo-1$ in Term~A1 is close to zero, and thus Term~A1 dominates and causes $\E[\modelErr]$ to be decreasing w.r.t. $\pSpecial{2}$. When $\pSpecial{2}$ gradually increases to infinity, $\E[\modelErr]$ will approach $\EwOneDiff+\normTwo{\qTwo}^2$. By calculating $\partial \E[\modelErr] / \partial \pSpecial{2}$, we can tell that if $\EwOneDiff+\sigmaTwo^2<\normTwo{\qTwo}^2$, in the overparameterized regime, $\E[\modelErr]$ will first decrease and then increase, which implies a descent floor (by \cref{le.derivative_3} in \cref{subsec.derivative}). Similarly, by calculating $\partial \E[\modelErr] / \partial \pSpecial{2}$ for \cref{eq.result_5} of Option~B, if $\sigmaTwo^2<\EwOneDiff+\normTwo{\qTwo}^2$, in the overparameterized regime, $\E[\modelErr]$ will have a descent floor w.r.t. $\pSpecial{2}$ (by \cref{le.derivative_5} in \cref{subsec.derivative}).
An interesting observation related to the descent floor is that \emph{the condition of the existence of the descent floor is different for Option~A and Option~B, where Option~A needs small $\EwOneDiff$ but Option~B needs large $\EwOneDiff$}.

In \cref{fig.change_p}(b), we see that both  curves with markers ``$+$'' have a descent floor in the overparameterized regime.  In contrast, for the two curves with markers ``$\times$'' where $\sigmaOne$ is large, only Option~B has a descent floor while Option~A does not. Since large $\sigmaOne$ implies large $\EwOneDiff$, such a difference confirms that the descent floor of Option~A needs small $\EwOneDiff$ while the one of Option~B needs large $\EwOneDiff$.

\noindent\textbf{(4) The effect of $\qOne$ is negligible when heavily or slightly  overparameterized in step~1.} The effect of $\qOne$ on $\modelErr$ is through $\EwOneDiffNoNoise$. By \cref{eq.def_b1,eq.def_b2,eq.def_b3,eq.result_1}, the coefficient of $\normTwo{\qTwo}$ is $\min\{\oRatio,1-\oRatio\}$. When heavily overparameterized in step~1, we have $\pCommon+\pSpecial{1}\gg \nOne$ and thus $\oRatio \approx 0$. When slightly overparameterized in step~1, we have $\pCommon+\pSpecial{1}\approx \nOne$ and thus $\oRatio \approx 1$. In both situations, we have the coefficient $\min\{\oRatio,1-\oRatio\}\approx 0$, which implies that the effect of $\qOne$ is negligible when heavily or slightly  overparameterized in step~1.

In \cref{fig.change_p}(a), we compare two curves with markers ``$\triangle$'' (for large $\qOne$ that $\normTwo{\qOne}=5$) against two curves with markers ``$+$'' (for small $\qOne$ that $\normTwo{\qOne}=1$). We observe for both Option~A and Option~B that the curves with markers ``$\triangle$'' overlap the curves with markers ``$+$'' at the beginning and the latter part of the overparameterized regime. This phenomenon validates the implication (4) which is inferred from the factor $\min\{\oRatio, 1-\oRatio\}$ in \cref{eq.def_b2,eq.def_b3}.

\subsection{Insights for Option A}\label{subsec.A}

\noindent\textbf{(A1) Benign overfitting w.r.t. $\pSpecial{2}$ is easier to observe with small knowledge transfer.} In the underparameterized regime, by \cref{eq.result_4}, $\E[\modelErr]$ is at least $\EwOneDiff+\frac{\EwOneDiff+\sigmaTwo^2}{\nTwo}$. In contrast, for the overparameterized regime, when $\EwOneDiff$ is large, Term~A1 of \cref{eq.result_3} dominates $\E[\modelErr]$. When $\pSpecial{2}$ increases to $\infty$, Term~A1 will decrease to $\EwOneDiff$. Notice that large $\EwOneDiff$ implies small knowledge transfer from the source task to the target task. Thus, \emph{benign overfitting w.r.t. $\pSpecial{2}$ appears when knowledge transfer is small}.

In \cref{fig.change_p}(b), we let the ground-truth parameters be very small compared with the noise level, so the error $\modelErr$ in \cref{fig.change_p} is mainly from noise. The blue curve with markers ``$\times$'' has larger $\sigmaOne$ (with $\sigmaOne=3$) compared with the blue curve with markers ``$\triangledown$'' (with $\sigmaOne=0.1$), and consequently, larger $\EwOneDiff$ and smaller knowledge transfer. We observe from \cref{fig.change_p}(b) that the blue curve with markers ``$\times$'' descends w.r.t. $\pSpecial{2}$ in the entire overparameterized regime, while the blue curve with markers ``$\triangledown$'' descends at the beginning of the overparameterized regime and ascends in the remainder of the overparameterized regime. Such a phenomenon validates the insight (A1).

\noindent\textbf{(A2) Larger $\pCommon$ is not always good to reduce the noise effect when overparameterized.} By \cref{thm.target_task_err,thm.w_err}, we know that the direct effect of $\pCommon$ on noise in the overparameterized regime is only through the term $\bNoise$ in $\EwOneDiff$. By checking the sign of $\frac{\partial \bNoise}{\partial \pCommon}$, we can prove that $\bNoise$ increases w.r.t. $\pCommon$ when $\pCommon^2< \pSpecial{1}(\pSpecial{1}-\nOne-1)$, and decreases when $\pCommon^2 > \pSpecial{1}(\pSpecial{1}-\nOne-1)$ (see calculation details in \cref{le.non_monotone_p} in \cref{subsec.derivative}).

In \cref{fig.change_p}(c), the blue curve with markers ``$\triangleright$'' depicts how the model error $\modelErr$ of Option~A changes w.r.t. $\pCommon$ in the overparameterized regime ($\pCommon+\pSpecial{1}>\nOne$). This curve first increases and then decreases, which validates the insight (A2).

\subsection{Insights for Option B}\label{subsec.B}

\noindent\textbf{(B1) Benign overfitting w.r.t. $\pSpecial{2}$ is easier to observe with large knowledge transfer and small target task-specific parameters.} In \cref{eq.result_5}, small $\EwOneDiff+\normTwo{\qTwo}^2$ implies that Term~B2 dominates the value of $\E[\modelErr]$. As we explained previously in (2) of \cref{subsec.similar}, benign overfitting exists in this situation. Meanwhile, small $\EwOneDiff$ and $\normTwo{\qTwo}$ imply large knowledge transfer and small target task-specific parameters, respectively.

In \cref{fig.change_p}(b), the orange curve with markers ``$\triangleleft$'' denotes the model error $\modelErr$ of Option~B w.r.t. $\pSpecial{2}$ when $\sigmaOne$ and $\qTwo$ are small, i.e., large knowledge transfer and small target task-specific parameters. Compared with the orange curve with markers ``$\times$'', this curve descends in the entire overparameterized regime and can achieve a lower value than that of the underparameterized regime. This phenomenon validates the insight (B1).

\noindent\textbf{(B2) Multiple descents of noise effect when increasing $\pCommon$ in the overparameterized regime.} Different from Option~A where $\pCommon$ only affects the consequence of the noise in the source task (since no $\pCommon$ appears in \cref{eq.result_3} except $\EwOneDiff$), for \cref{eq.result_5} of Option~B, we see that $\pCommon$ not only affects $\EwOneDiff$ but also Term~B2, which implies that $\pCommon$ relates to the noise effect in both the source task and the target task. Specifically, the trend of $\E[\modelErr]$ w.r.t. $\pCommon$ is determined by $(1-\frac{\nTwo}{\pCommon+\pSpecial{2}})\bNoise$ and Term~B2 in \cref{eq.result_5}. In (A2) of \cref{subsec.A}, we show that $\bNoise$ sometimes first increases and then decreases. The factor $1-\frac{\nTwo}{\pCommon+\pSpecial{2}}$ is monotone increasing w.r.t. $\pCommon$. Term~B2 in \cref{eq.result_5} is monotone decreasing w.r.t. $\pCommon$. Thus, the overall noise effect may have multiple descents w.r.t. $\pCommon$.

In \cref{fig.change_p}(c), the orange curve with markers ``$\triangleright$'' provides an example of how the model error $\modelErr$ of Option~B behaves in the overparameterized regime. We see that this curve has multiple descents, which validates the insight (B2).

% \section{Comparison}
\section{Further Discussion}
% In order to make a comprehensive comparison among different transfer learning methods, besides the aforementioned two options for transferring parameters, we introduce another transfer learning method that transfers samples (instead of parameters) in \cref{subsec.transfer_samples}. We then compare all three methods in \cref{subsec.compare}.

\subsection{Which option performs better in the overparameterized regime?}\label{subsec.compare}

% Now we are ready to compare the generalization performance of three methods of transfer learning: Option~A, Option~B, and transferring samples. We focus on the overparameterized situation.

\textbf{(C1)} First, by comparing the coefficients of $\EwOneDiff$ in \cref{eq.result_3} and \cref{eq.result_5}, we know that the effect of the error in step one deteriorates in the model error $\modelErr$ of Option~A (since the coefficient of $\EwOneDiff$ in \cref{eq.result_3} is larger than $1$), whereas this is mitigated in the model error of Option~B (since the coefficient of $\EwOneDiff$ in \cref{eq.result_5} is smaller than $1$). \textbf{(C2)} Second, by comparing the coefficients of $\normTwo{\qTwo}^2$ and $\sigmaTwo^2$ in \cref{eq.result_3,eq.result_5} under the same $\pCommon$ and $\pSpecial{2}$, we know that Option~B is worse to learn $\qTwo$ but is better to reduce the noise effect of $\sigmaTwo$ than Option~A (since $1-\frac{\nTwo}{\pSpecial{2}}<1-\frac{\nTwo}{\pCommon+\pSpecial{2}}$ and $\frac{\nTwo}{\pSpecial{2}-\nTwo-1}>\frac{\nTwo}{\pCommon+\pSpecial{2}-\nTwo-1}$). \textbf{(C3)} Third, by letting $\pSpecial{2}\to \infty$ in \cref{eq.result_3,eq.result_5}, the model error $\modelErr$ of both Option~A and Option~B approaches the same value $\EwOneDiff+\normTwo{\qTwo}^2$.

{\bf Intuitive Comparison of Options A and B:} An intuitive explanation of the reason for these differences is that Option~B does train the common part learned by the source task but Option~A does not. Thus, Option~B should do better to learn the common part. At the same time, since Option~B uses more  parameters ($\pCommon+\pSpecial{2}$) than Option~A ($\pCommon$) to learn the target task's samples, the noise effect  is spread among more parameters in Option~B than in Option~A, and thus Option~B can mitigate the noise better than Option~A. However, those additional $\pCommon$ parameters interfere with the learning of $\qTwo$ since those $\pCommon$ parameters correspond to the features of the common part $\commonSpace$, not the target task-specific features $\featureSpace{2}$, which implies that Option~B is worse in learning $\qTwo$ than Option~A.

In \cref{fig.change_p}(b), when overparameterized (i.e., $\pSpecial{2}>50$ for Option~A, and $\pSpecial{2}>30$ for Option~B), Option~A is slightly better than Option~B around $\pSpecial{2}=70$ under the situation ``$\sigmaOne=0.1$, $\sigmaTwo=0.2$, $\normTwo{\qTwo}=1$'' (i.e., the two curves with markers ``$+$''). Notice that this situation has the smallest $\sigmaOne,\sigmaTwo$ and the largest $\normTwo{\qTwo}$. Thus, insights (C1),(C2) are verified. Besides, in \cref{fig.change_p}(b), in every situation, the curves of Option~A and Option~B overlap when $\pSpecial{2}$ is very large, which validates insight (C3).

\subsection{The common part or the task-specific part?}

% In the classical single-task linear regression result,  overparameterization is better when the single-to-noise ratio (SNR) is low (i.e., the noise level is very large).
% \footnote{This is because in \cref{eq.single_task}, the noise term in the overparameterized regime $\frac{\nSum \sigmaTwo^2}{\pCommon-\nSum-1}\to 0$ when $\pCommon\to \infty$, while the noise term in the underparameterized regime $\frac{\pCommon\sigmaTwo^2}{\nSum-\pCommon-1}$ is at least $\frac{\sigmaTwo^2}{\nSum}$.}. 
% However, in our case, 
Since there are multiple kinds of features and multiple signals and noise, it is valuable to discuss how to choose the number of parameters for each part with the fixed total number of parameters. We have the following results.

% a question is choose the number of parameters for each part (e.g., when and where to overparameterize) is not that simple. Under our system model, we have the following conclusion:

% In previous sections, we have shown that different options for transferring parameters have different preferences on overparameterization. 

% In this subsection, we will derive a result on this issue.

% In transfer learning, $\nTwo$ is usually very small (i.e., we only have a limited number of training samples for the target task).

\noindent\textbf{When the total number of parameters is fixed, it is better to use more parameters on the task-specific parts.} Specifically, we have the following proposition:
\begin{proposition}\label[proposition]{prop.best_p}
When $\pCommon + \pSpecial{1}=C$ is fixed, $\EwOneDiff$ is monotone increasing with respect to $\pCommon$. Therefore, in order to minimize $\EwOneDiff$ when \cref{as.no_missing_features} is assured, the best choice is $\pCommon =  s$, $\pSpecial{1}=C-s$.
\end{proposition}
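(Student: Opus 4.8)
The plan is to treat the two regimes of \cref{thm.w_err} separately, using the fact that since $C=\pCommon+\pSpecial{1}$ is fixed the source task stays in one regime as $\pCommon$ varies, so nothing about crossing regimes needs to be checked. In the underparameterized case $(\nOne>C+1)$, \cref{eq.result_2} gives $\EwOneDiff=\similarity^2+\pCommon\sigmaOne^2/(\nOne-C-1)$; since $\similarity=\normTwo{\wTwo-\wOne}$ is unchanged by zero-padding and hence independent of $\pCommon$, while $\nOne-C-1>0$ is a fixed constant, this is affine and increasing in $\pCommon$, so the claim is immediate there. In the overparameterized case $(C>\nOne+1)$, \cref{eq.result_1} gives $\EwOneDiff=\EwOneDiffNoNoise+\bNoise$ with $\bNoise=(\pCommon/C)\cdot\nOne\sigmaOne^2/(C-\nOne-1)$, whose prefactor $\nOne\sigmaOne^2/\bigl(C(C-\nOne-1)\bigr)$ is a fixed positive constant, so $\bNoise$ is increasing in $\pCommon$. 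Thus the whole statement reduces to showing that $\EwOneDiffNoNoise$ is non-decreasing in $\pCommon$ at fixed $C$, which is the crux.

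For the crux I would return to the definition rather than to the bound in \cref{thm.w_err}. In the noiseless source problem the minimum-$\ell_2$-norm solution $(\wTilde;\qTildeOne)$ equals $\bm{\Pi}\bm{v}$, where $\bm{v}\in\mathds{R}^C$ is the zero-padded source ground truth (its nonzero entries are exactly those of $\sw{1},\sq{1}$) and $\bm{\Pi}$ is the orthogonal projection onto the row space of $[\XX^T\ \ZZ^T]\in\mathds{R}^{\nOne\times C}$, a uniformly random rank-$\nOne$ projection of $\mathds{R}^C$. Writing $B$ for the set of common coordinates ($\abs{B}=\pCommon$), $\bm{\mu}\in\mathds{R}^C$ for the zero-padded target common parameter (supported on the $s$ common-true coordinates, all lying inside $B$), and $\cdot|_B$ for restriction to the coordinates in $B$, one has $w_{(2)}-\wTilde=(\bm{\mu}-\bm{\Pi}\bm{v})|_B$ and hence
\[
\EwOneDiffNoNoise=\E_{\bm{\Pi}}\bigl\|(\bm{\mu}-\bm{\Pi}\bm{v})|_B\bigr\|^2 .
\]
Because the law of $\bm{\Pi}$ is invariant under permutations of coordinates, this quantity depends only on $\pCommon=\abs{B}$ and on the fixed ground-truth vectors, not on which coordinates make up $B$ or on the supports of $\bm{v}$ and $\bm{\mu}$.

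Now compare $\pCommon$ with $\pCommon+1$ (both summing to $C$). Given a valid configuration $(B,\bm{v},\bm{\mu})$ for $\pCommon$, feasibility of the $\pCommon+1$ configuration under \cref{as.no_missing_features} forces $\pSpecial{1}=C-\pCommon>s_{(1)}$, so $B^{c}$ contains a coordinate $c_0$ at which both $\bm{v}$ and $\bm{\mu}$ vanish (a redundant task-specific coordinate); put $B'=B\cup\{c_0\}$, which is a valid configuration for $\pCommon+1$. Then
\[
\EwOneDiffNoNoise(\pCommon+1)=\E\bigl\|(\bm{\mu}-\bm{\Pi}\bm{v})|_{B'}\bigr\|^2=\EwOneDiffNoNoise(\pCommon)+\E\bigl[(\bm{\Pi}\bm{v})_{c_0}^2\bigr]\ \ge\ \EwOneDiffNoNoise(\pCommon),
\]
using $\bm{\mu}_{c_0}=0$. (A short computation with the second moment of a uniform projection evaluates the increment to $\frac{\nOne(C-\nOne)}{C(C-1)(C+2)}\bigl(\normTwo{\sw{1}}^2+\normTwo{\sq{1}}^2\bigr)\ge0$, but only its non-negativity is needed.) Combining the three pieces, $\EwOneDiff$ is non-decreasing in $\pCommon$ over the feasible range $\pCommon\in\{s,\dots,C-s_{(1)}\}$, hence minimized at $\pCommon=s$ — the smallest value permitted by \cref{as.no_missing_features}, since $\commonSpace\subseteq\commonSpaceSelected$ forces $\pCommon\ge s$ — so the optimal split is $\pCommon=s$, $\pSpecial{1}=C-s$.

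The main obstacle is precisely this crux. \cref{thm.w_err} supplies only the upper bound $\EwOneDiffNoNoise\le\min_i\overline{b}_i^2$, and at fixed $C$ each $\overline{b}_i$ is itself constant in $\pCommon$ (each of $\similarity$, $\oRatio$, $\normTwo{\wOne}$, $\normTwo{\qOne}$, $\normTwo{\wTwo}$ is fixed), so that bound is useless for monotonicity and one must exploit the projection structure directly. Within that argument the two delicate points are (i) the permutation-invariance step, which is what allows the $\pCommon\mapsto\pCommon+1$ comparison to be reduced to a pair of nested coordinate sets $B\subseteq B'$ carrying the \emph{same} $\bm{v}$ and $\bm{\mu}$, and (ii) verifying that the coordinate added to $B$ can always be chosen to be a redundant task-specific one — equivalently, that $\pSpecial{1}>s_{(1)}$ throughout the relevant range, which is nothing other than feasibility of the larger-$\pCommon$ configuration under \cref{as.no_missing_features}.
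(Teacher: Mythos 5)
Your proof is correct, and its crux --- writing the error coordinate-wise, using the permutation invariance (at fixed $C$) of the distribution of the learned $C$-dimensional vector to compare configurations, and observing that enlarging the common block only adds non-negative terms $\E[\wTilde]_i^2$ on redundant coordinates --- is exactly the paper's argument. The only difference is presentational: you first peel off the noise contributions via \cref{thm.w_err} and treat the two regimes separately, whereas the paper applies the same coordinate decomposition directly to the full (noisy) solution, so the regime split and the $\bNoise$/Term~O1 bookkeeping are not needed.
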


% \cref{prop.best_p} is also consistent with the upper bound shown in \cref{thm.w_err}. Specifically, in the overparameterized regime, all terms except $\bNoise$ in the expression of $\EwOneDiff$ do not change with $\pCommon$ when $\pCommon+\pSpecial{1}$ is fixed. By \cref{eq.def_bNoise}, we can see that $\bNoise$ decreases when $\pCommon$ decreases, which is consistent with \cref{prop.best_p}. \cref{prop.best_p} is proven in \cref{app.proof_best_p}.

\textbf{Sometimes it is even better to sacrifice certain \emph{true} features in the common part in favor of employing more \emph{redundant} features in the task-specific part.}
We still consider the case of fixed $\pCommon + \pSpecial{1}=C$.
In certain situations (especially when the noise level is large and some true parameters are very small), it is better to make $\pCommon$ even smaller than $s$, i.e., it is better to violate \cref{as.no_missing_features} deliberately (in contrast to \cref{remark.missing_features} where \cref{as.no_missing_features} is violated unconsciously). 
We now construct an example of this situation. Let $\normTwo{\qOne}^2=0$, $\normTwo{\wTwo}+\normTwo{\wOne}=1$ (so $ \overline{b}_2^2\leq 1$ by \cref{eq.def_b2}). Suppose there are only 2 true common features (i.e., $s=2$) and $C>\nOne+1$. If we do not violate \cref{as.no_missing_features}, then by \cref{prop.best_p}, the best choice is to let $\pCommon=2$. By \cref{thm.w_err} we know that $\EwOneDiff$ is at least $\mathcal{Q}_1\defeq \frac{2}{C}\cdot \frac{\nOne \sigmaOne^2}{C - \nOne - 1}$ (since $\EwOneDiffNoNoise\geq 0$). In contrast, if we violate \cref{as.no_missing_features} deliberately by sacrificing one true common feature with parameter value 0.1 for the source task and value 0 for the target task, then the only effect is enlarging the source task's noise level by $\sigmaOne^2\gets \sigmaOne^2 + 0.1^2$. Thus, by \cref{thm.w_err}, we know that $\EwOneDiff$ is at most $\mathcal{Q}_2\defeq 1 + \frac{1}{C}\cdot \frac{\nOne (\sigmaOne^2+0.1^2)}{C - \nOne - 1}$ (since $\overline{b}_2^2 \leq 1$). We can easily find a large enough $\sigmaOne^2$ to make $\mathcal{Q}_1>\mathcal{Q}_2$, which leads to our conclusion.

% \noindent\textbf{(2) Option~A and Option~B have different preferences on the overparameterization.}
% \TODO

\section{Conclusion}

Our study on transfer learning in linear regression models provides valuable insights into the generalization performance of the target task. We propose a comprehensive framework that considers task similarity in terms of both parameter distance and feature sets. Our analysis characterizes the double descent of transfer learning for two different options of parameter transfer. Further investigation reveals that allocating more redundant features to the task-specific part, rather than the common part, can enhance performance when the total number of features is fixed. Moreover, sometimes sacrificing true features in the common part in favor of employing more redundant features in the task-specific part can yield notable benefits, especially in scenarios with high noise levels and small numbers of true parameters. These findings contribute to a better understanding of transfer learning and offer practical guidance for designing effective transfer learning approaches.

There are some interesting directions for future work. 
% First, since we have studied the situation of only one source task, it will be meaningful to extend our analysis to the situation of multiple source tasks. 
First, we can use our current framework of partial similarity to analyze the performance of sample transfer. Second, going beyond the linear models of Gaussian features, we can use models that are closer to actual DNNs (such as neural tangent kernel models) to study the generalization performance of overfitted transfer learning.

\bibliographystyle{plain}
\bibliography{refs}

\clearpage
\newpage

\appendix
\begin{center}
\textbf{\large Supplemental Material}
\end{center}

\section{Some useful lemmas}\label{app.useful_lemmas}
We first introduce some useful lemmas that will be used in the proofs of our main results.
\begin{lemma}\label[lemma]{le.min_l2_solution}
Consider $n$ training samples and $p$ features. The stacked training input is $\mathbf{X}\in \mathds{R}^{p\times n}$ and the corresponding output is $\bm{y}\in \mathds{R}^n$. If GD/SGD on linear regression with mean-square-error converges to zero (i.e., overfitted), then the convergence point is the solution that has the minimum $\ell_2$-norm of the change of parameters from the initial point.
\end{lemma}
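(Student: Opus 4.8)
The plan is to prove the two ingredients that together pin down the convergence point. Write $\bm{\theta}$ for the stacked parameter vector, $\bm{\theta}^{(0)}$ for the initialization, and recall the prediction on the training set is $\mathbf{X}^T\bm{\theta}$. First I would show that every GD/SGD iterate stays in the affine subspace $\mathcal{A}\defeq \bm{\theta}^{(0)}+\mathrm{range}(\mathbf{X})$, so that the limit $\bm{\theta}^{(\infty)}$ (which exists and has zero loss by hypothesis) also lies in $\mathcal{A}$. Second I would show that the minimum-$\ell_2$-norm-change interpolating solution, i.e.\ the minimizer of $\normTwo{\bm{\theta}-\bm{\theta}^{(0)}}^2$ subject to $\mathbf{X}^T\bm{\theta}=\bm{y}$, is the \emph{unique} point of $\mathcal{A}$ satisfying $\mathbf{X}^T\bm{\theta}=\bm{y}$. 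Since $\bm{\theta}^{(\infty)}$ satisfies $\mathbf{X}^T\bm{\theta}^{(\infty)}=\bm{y}$ (zero training error) and lies in $\mathcal{A}$, the two facts force $\bm{\theta}^{(\infty)}$ to coincide with that minimizer.

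For the first ingredient: the gradient of the MSE objective at any $\bm{\theta}$ is a constant multiple of $\mathbf{X}(\mathbf{X}^T\bm{\theta}-\bm{y})$, which lies in $\mathrm{range}(\mathbf{X})$; for SGD on sample $j$, the stochastic gradient is a scalar multiple of the column $\bm{x}_j$ of $\mathbf{X}$, hence again in $\mathrm{range}(\mathbf{X})$. Each update subtracts a step-size multiple of such a vector, and $\mathrm{range}(\mathbf{X})$ is a linear subspace (closed under scaling and addition), so a trivial induction gives $\bm{\theta}^{(k)}-\bm{\theta}^{(0)}\in\mathrm{range}(\mathbf{X})$ for all $k$. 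Because $\mathrm{range}(\mathbf{X})\subseteq\mathds{R}^p$ is finite-dimensional and therefore closed, passing to the limit yields $\bm{\theta}^{(\infty)}-\bm{\theta}^{(0)}\in\mathrm{range}(\mathbf{X})$. The same induction covers any first-order method whose update directions are gradients of the MSE loss (e.g.\ GD with a varying step size or with momentum), which I would note to justify the ``GD/SGD'' phrasing.

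For the second ingredient: substitute $\bm{\delta}\defeq\bm{\theta}-\bm{\theta}^{(0)}$ and $\tilde{\bm{y}}\defeq\bm{y}-\mathbf{X}^T\bm{\theta}^{(0)}$, so the program becomes $\min\normTwo{\bm{\delta}}^2$ subject to $\mathbf{X}^T\bm{\delta}=\tilde{\bm{y}}$. Decompose any feasible $\bm{\delta}=\bm{\delta}_\parallel+\bm{\delta}_\perp$ with $\bm{\delta}_\parallel\in\mathrm{range}(\mathbf{X})$ and $\bm{\delta}_\perp\in\mathrm{range}(\mathbf{X})^\perp=\ker(\mathbf{X}^T)$. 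Feasibility depends only on $\bm{\delta}_\parallel$ (since $\mathbf{X}^T\bm{\delta}_\perp=\bm{0}$), while $\normTwo{\bm{\delta}}^2=\normTwo{\bm{\delta}_\parallel}^2+\normTwo{\bm{\delta}_\perp}^2$, so the minimizer must have $\bm{\delta}_\perp=\bm{0}$, i.e.\ $\bm{\delta}\in\mathrm{range}(\mathbf{X})$; strict convexity of $\normTwo{\cdot}^2$ restricted to the affine feasible set makes the minimizer unique. Since $\bm{\theta}^{(\infty)}$ is feasible and $\bm{\theta}^{(\infty)}-\bm{\theta}^{(0)}\in\mathrm{range}(\mathbf{X})$ by the first ingredient, $\bm{\theta}^{(\infty)}$ satisfies this optimality condition and hence equals the unique minimizer.

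I do not expect a substantial obstacle here: the argument is elementary linear algebra. The only points needing care are (a) being explicit that convergence and zero loss are assumed, so no convergence-rate analysis is needed, and (b) verifying that the subspace-invariance survives the limit, which is immediate from closedness of the finite-dimensional subspace $\mathrm{range}(\mathbf{X})$. A minor bookkeeping point worth stating is that the three parameter-transfer variants in \cref{subsec.model_transfer_parameters} (minimum $\normTwo{\wbar}^2+\normTwo{\qbar}^2$ from zero initialization, and Option~B's minimum $\normTwo{\wbar-\wTilde}^2+\normTwo{\qbar}^2$) are all instances of this lemma with $\bm{\theta}=(\wbar,\qbar)$, $\mathbf{X}$ the stacked common/task-specific features, and the appropriate initialization.
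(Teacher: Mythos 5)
Your proof is correct and follows essentially the same route as the paper's: both argue that every GD/SGD update direction lies in the column space of $\mathbf{X}$, so the limit point is the unique interpolating solution in $\bm{\theta}^{(0)}+\mathrm{range}(\mathbf{X})$, which is exactly the minimizer of $\normTwo{\bm{\theta}-\bm{\theta}^{(0)}}^2$ subject to $\mathbf{X}^T\bm{\theta}=\bm{y}$. The only (minor) difference is that the paper identifies this point by explicitly solving for $\bm{b}=(\mathbf{X}^T\mathbf{X})^{-1}(\bm{y}-\mathbf{X}^T\bm{a}_0)$, implicitly using full column rank of $\mathbf{X}$, whereas your orthogonal-decomposition argument avoids that assumption and is slightly more careful about passing the subspace constraint to the limit.
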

\begin{proof}
Let $\bm{a}$ denote the parameters to train and $\bm{a}_0$ the initial point. Since GD/SGD is used in a linear model, then in each iteration, the change of the parameters must be in the column space of $\mathbf{X}$. To realize this, we can simply calculate the gradient for the $i$-th training sample as
\begin{align*}
    \frac{\partial (y_i-\bm{x}_i^T\bm{a})^2}{\partial \bm{a}}=-2(y_i-\bm{x}_i^T\bm{a})\bm{x}_i,
\end{align*}
where $\bm{x}_i$ is the $i$-th column of $\mathbf{X}$ and $y_i$ is the $i$-th element of $\bm{y}$. As we can see, the gradient is always parallel to one column of $\mathbf{X}$, which implies that the overall change of the parameters is still in the column space of $\mathbf{X}$. Thus, we can always find $\bm{b}\in \mathds{R}^p$ such that
\begin{align}\label{eq.temp_011505}
    \bm{a}-\bm{a}_0=\mathbf{X}\bm{b}.
\end{align}
For the convergence point that makes the training loss become zero, we have
\begin{align}\label{eq.temp_011506}
    \mathbf{X}^T\bm{a} = \bm{y}.
\end{align}
Substituting Eq.~\eqref{eq.temp_011505} into Eq.~\eqref{eq.temp_011506}, we have
\begin{align*}
    &\mathbf{X}^T\left(\bm{a}_0+\mathbf{X}\bm{b}\right) = \bm{y}\\
    \implies & \bm{b}=(\mathbf{X}^T\mathbf{X})^{-1}\left(\bm{y}-\mathbf{X}^T\bm{a}_0\right)\\
    \implies & \bm{a}=\bm{a}_0+\mathbf{X}(\mathbf{X}^T\mathbf{X})^{-1}\left(\bm{y}-\mathbf{X}^T\bm{a}_0\right),
\end{align*}
which is exactly the minimum $\ell_2$-norm solution of the following linear problem:
\begin{align*}
    \min_{\bm{a}}\quad & \normTwo{\bm{a}-\bm{a}_0}\\
    \text{subject to} \quad &\mathbf{X}^T(\bm{a}-\bm{a}_0)=\bm{y}-\mathbf{X}^T\bm{a}_0.
\end{align*}
The constraint is also equivalent to $\XX^T\bm{a}=\bm{y}$. The result of this lemma thus follows.
\end{proof}

\begin{lemma}[Cauchy-Schwarz inequality on random vectors]\label[lemma]{le.Cauchy}
Consider any two random vectors $\bm{a}, \bm{b}\in \mathds{R}^d$. We must have
\begin{align}
    \abs{\E [\bm{a}^T\bm{b}]}\leq \sqrt{\E[\normTwo{\bm{a}}^2]\cdot \E[\normTwo{\bm{b}}^2]}.\label{eq.temp_bias_1}
\end{align}
Consequently, we have
\begin{align}
    \E[\normTwo{\bm{a}+\bm{b}}^2]\leq \left(\sqrt{\E[\normTwo{\bm{a}}^2]}+\sqrt{\E[\normTwo{\bm{b}}^2]}\right)^2,\label{eq.temp_bias_2}
\end{align}
and
\begin{align}
    \E[\normTwo{\bm{a}+\bm{b}}^2]\geq \left(\sqrt{\E[\normTwo{\bm{a}}^2]}-\sqrt{\E[\normTwo{\bm{b}}^2]}\right)^2.\label{eq.temp_bias_3}
\end{align}
\end{lemma}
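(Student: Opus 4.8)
The plan is to reduce all three inequalities to the single scalar estimate \eqref{eq.temp_bias_1}, and then obtain \eqref{eq.temp_bias_2} and \eqref{eq.temp_bias_3} by expanding $\normTwo{\bm{a}+\bm{b}}^2=\normTwo{\bm{a}}^2+2\bm{a}^T\bm{b}+\normTwo{\bm{b}}^2$, taking expectations, and substituting the bound on $\E[\bm{a}^T\bm{b}]$. Throughout I assume $\bm{a},\bm{b}$ are square-integrable so that every expectation appearing is finite (otherwise the right-hand sides are $+\infty$ and there is nothing to prove). The argument is the standard ``nonnegative quadratic'' proof of Cauchy--Schwarz, applied to the inner product $\E[\bm{a}^T\bm{b}]$ on square-integrable random vectors.

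To prove \eqref{eq.temp_bias_1}, I would first dispose of the degenerate case: if $\E[\normTwo{\bm{b}}^2]=0$ then $\bm{b}=\bm{0}$ almost surely, so $\E[\bm{a}^T\bm{b}]=0$ and both sides of \eqref{eq.temp_bias_1} vanish (symmetrically if $\E[\normTwo{\bm{a}}^2]=0$). Otherwise, for every $t\in\mathds{R}$ the random variable $\normTwo{\bm{a}-t\bm{b}}^2$ is nonnegative, so taking expectations gives $\E[\normTwo{\bm{a}}^2]-2t\,\E[\bm{a}^T\bm{b}]+t^2\,\E[\normTwo{\bm{b}}^2]\ge 0$ for all $t$. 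This is a quadratic polynomial in $t$ with strictly positive leading coefficient that stays nonnegative everywhere, so its discriminant is nonpositive, i.e.\ $\bigl(\E[\bm{a}^T\bm{b}]\bigr)^2\le \E[\normTwo{\bm{a}}^2]\,\E[\normTwo{\bm{b}}^2]$, which is exactly \eqref{eq.temp_bias_1}. (Equivalently, one may plug the minimizer $t=\E[\bm{a}^T\bm{b}]/\E[\normTwo{\bm{b}}^2]$ into the displayed inequality.)

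Given \eqref{eq.temp_bias_1}, the remaining two inequalities are immediate. Expanding and taking expectations, $\E[\normTwo{\bm{a}+\bm{b}}^2]=\E[\normTwo{\bm{a}}^2]+2\E[\bm{a}^T\bm{b}]+\E[\normTwo{\bm{b}}^2]$. For \eqref{eq.temp_bias_2}, bound $2\E[\bm{a}^T\bm{b}]\le 2\sqrt{\E[\normTwo{\bm{a}}^2]\,\E[\normTwo{\bm{b}}^2]}$ via \eqref{eq.temp_bias_1}, and the right-hand side becomes the perfect square $\bigl(\sqrt{\E[\normTwo{\bm{a}}^2]}+\sqrt{\E[\normTwo{\bm{b}}^2]}\bigr)^2$; for \eqref{eq.temp_bias_3}, use instead $2\E[\bm{a}^T\bm{b}]\ge -2\sqrt{\E[\normTwo{\bm{a}}^2]\,\E[\normTwo{\bm{b}}^2]}$, giving $\bigl(\sqrt{\E[\normTwo{\bm{a}}^2]}-\sqrt{\E[\normTwo{\bm{b}}^2]}\bigr)^2$. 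There is essentially no real obstacle here; the only points needing a word of care are the degenerate case $\E[\normTwo{\bm{a}}^2]=0$ or $\E[\normTwo{\bm{b}}^2]=0$, handled as above, and the standing square-integrability assumption that keeps all expectations finite.
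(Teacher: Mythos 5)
Your proposal is correct and follows essentially the same route as the paper: both establish \eqref{eq.temp_bias_1} as an instance of the Cauchy--Schwarz inequality for the inner product $\E[\bm{a}^T\bm{b}]$ on square-integrable random vectors, and then derive \eqref{eq.temp_bias_2} and \eqref{eq.temp_bias_3} by expanding $\E[\normTwo{\bm{a}+\bm{b}}^2]$ and substituting the bound on the cross term. The only (cosmetic) difference is that you prove \eqref{eq.temp_bias_1} from scratch via the nonnegative-quadratic/discriminant argument with the degenerate cases handled explicitly, whereas the paper verifies the inner-product axioms and cites the abstract inequality; your version is slightly more self-contained but substantively identical.
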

\begin{proof}
\cref{eq.temp_bias_1} directly follows by Cauchy-Schwarz inequality after defining the inner product on the set of random vectors using the expectation of their inner product:
\begin{align*}
    \langle \bm{a},\ \bm{b} \rangle_{\mu} \defeq \int \bm{a}^T\bm{b}\ d\mu=\E [\bm{a}^T\bm{b}],
\end{align*}
where $\mu(\cdot)$ denotes the probability measure. We can easily verify the following properties of $\langle \cdot,\cdot\rangle_{\mu}$:
\begin{align*}
    &\langle \bm{a},\ \bm{b}\rangle_{\mu}=\langle \bm{b},\ \bm{a}\rangle_{\mu},\text{  (conjugate symmetry);}\\
    &\langle\lambda_1\bm{a}+\lambda_2\bm{b},\ \bm{c}\rangle_{\mu}=\lambda_1 \langle\bm{a},\ \bm{c}\rangle_{\mu} + \lambda_2 \langle\bm{b},\ \bm{c}\rangle_{\mu}, \text{ (linearity);}\\
    &\langle\bm{a},\ \bm{a}\rangle_{\mu}>0 \text{ for all $\bm{a}\neq \bm{0}$. (positive-definiteness).}
\end{align*}
Thus, $\langle \cdot,\cdot\rangle_{\mu}$ defines a valid inner product space. It remains to prove \cref{eq.temp_bias_2} and \cref{eq.temp_bias_3}. To that end, we have
\begin{align*}
    \E [\normTwo{\bm{a}+\bm{b}}^2]=&\E [\normTwo{\bm{a}}^2]+\E [\bm{a}^T\bm{b}]+2\E[\normTwo{\bm{b}}^2]\\
    \leq & \E [\normTwo{\bm{a}}^2]+2\abs{\E [\bm{a}^T\bm{b}]}+\E[\normTwo{\bm{b}}^2]\\
    \leq & \E [\normTwo{\bm{a}}^2]+2\sqrt{\E[\normTwo{\bm{a}}^2]\cdot \E[\normTwo{\bm{b}}^2]}+\E[\normTwo{\bm{b}}^2]\text{ (by \cref{eq.temp_bias_1})}\\
    =& \left(\sqrt{\E[\normTwo{\bm{\bm{a}}}^2]}+\sqrt{\E[\normTwo{\bm{b}}^2]}\right)^2.
\end{align*}
Similarly, we have
\begin{align*}
    \E [\normTwo{\bm{a}+\bm{b}}^2]=&\E [\normTwo{\bm{a}}^2]+\E [\bm{a}^T\bm{b}]+2\E[\normTwo{\bm{b}}^2]\\
    \geq & \E [\normTwo{\bm{a}}^2]-2\abs{\E [\bm{a}^T\bm{b}]}+\E[\normTwo{\bm{b}}^2]\\
    \geq & \E [\normTwo{\bm{a}}^2]-2\sqrt{\E[\normTwo{\bm{a}}^2]\cdot \E[\normTwo{\bm{b}}^2]}+\E[\normTwo{\bm{b}}^2]\text{ (by \cref{eq.temp_bias_1})}\\
    =& \left(\sqrt{\E[\normTwo{\bm{a}}^2]}-\sqrt{\E[\normTwo{\bm{b}}^2]}\right)^2.
\end{align*}
The result of this lemma therefore follows.
\end{proof}

The result of the following lemma can be found in the literature (e.g., \cite{belkin2020two}).
\begin{lemma}\label[lemma]{le.bias}
Consider a random matrix $\mathbf{K}\in \mathds{R}^{p\times n}$ where $p$ and $n$ are two positive integers and $p>n+1$. Each element of $\mathbf{K}$ is \emph{i.i.d.} according to standard Gaussian distribution. For any fixed vector $\bm{a}\in \mathds{R}^p$, we must have
\begin{align*}
    &\E\normTwo{\left(\iMatrix{p}-\mathbf{K}\left(\mathbf{K}^T\mathbf{K}\right)^{-1}\mathbf{K}^T\right)\bm{a}}^2 = \left(1-\frac{n}{p}\right)\normTwo{\bm{a}}^2,\\
    &\E\normTwo{\mathbf{K}\left(\mathbf{K}^T\mathbf{K}\right)^{-1}\mathbf{K}^T\bm{a}}^2 = \frac{n}{p}\normTwo{\bm{a}}^2.
\end{align*}
Further, when $p\geq 16$, we have
\begin{align*}
    &\Pr\left\{\normTwo{\left(\iMatrix{p}-\mathbf{K}\left(\mathbf{K}^T\mathbf{K}\right)^{-1}\mathbf{K}^T\right)\bm{a}}^2\leq \frac{p-n +2\sqrt{(p-n)\ln p}+2\ln p}{p-2\sqrt{p\ln p}}\normTwo{\bm{a}}^2\right\}\geq 1 - \frac{2}{p}.\\
    &\Pr\left\{\normTwo{\mathbf{K}\left(\mathbf{K}^T\mathbf{K}\right)^{-1}\mathbf{K}^T\bm{a}}^2\leq \frac{n +2\sqrt{n\ln n}+2\ln n}{p-2\sqrt{p\ln p}}\normTwo{\bm{a}}^2\right\}\geq 1 - \frac{1}{p} - \frac{1}{n}.
\end{align*}
\end{lemma}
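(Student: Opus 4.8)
The plan is to recognize $\mathbf{P}\defeq\mathbf{K}(\mathbf{K}^T\mathbf{K})^{-1}\mathbf{K}^T$ as, almost surely, the orthogonal projector onto the column space of $\mathbf{K}$ (which has full rank $n$ since $p>n$), and to exploit rotational invariance of the i.i.d.\ Gaussian matrix; this is a classical computation and I would follow the standard route (see, e.g., \cite{belkin2020two}). For the two expectation identities, I would first observe that for any orthogonal $\mathbf{U}\in\mathds{R}^{p\times p}$ we have $\mathbf{U}\mathbf{K}\stackrel{d}{=}\mathbf{K}$, and since $\mathbf{P}$ is the projector-valued function $\mathbf{K}\mapsto\mathbf{K}(\mathbf{K}^T\mathbf{K})^{-1}\mathbf{K}^T$ we get $\mathbf{U}\mathbf{P}\mathbf{U}^T\stackrel{d}{=}\mathbf{P}$; hence $\mathbf{U}\,\E[\mathbf{P}]\,\mathbf{U}^T=\E[\mathbf{P}]$ for every orthogonal $\mathbf{U}$, which forces $\E[\mathbf{P}]=c\,\mathbf{I}_p$, and taking traces with $\Tr(\mathbf{P})=n$ gives $c=n/p$. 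Then $\E\normTwo{\mathbf{P}\bm{a}}^2=\bm{a}^T\E[\mathbf{P}]\bm{a}=\tfrac{n}{p}\normTwo{\bm{a}}^2$ and, since $\mathbf{I}_p-\mathbf{P}$ is also a projector, $\E\normTwo{(\mathbf{I}_p-\mathbf{P})\bm{a}}^2=(1-\tfrac{n}{p})\normTwo{\bm{a}}^2$.

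For the high-probability bounds I would assume $\bm{a}\neq\bm{0}$ and, using rotational invariance once more, reduce to $\bm{a}=\normTwo{\bm{a}}\,\bm{e}_1$, so it suffices to control $\normTwo{\mathbf{P}\bm{e}_1}^2=\bm{e}_1^T\mathbf{P}\bm{e}_1$ and $\normTwo{(\mathbf{I}_p-\mathbf{P})\bm{e}_1}^2=1-\bm{e}_1^T\mathbf{P}\bm{e}_1$. Partitioning $\mathbf{K}=\myMatrix{\bm{\kappa}^T\\ \tilde{\mathbf{K}}}$ with $\bm{\kappa}\sim\mathcal{N}(0,\mathbf{I}_n)$ independent of $\tilde{\mathbf{K}}\in\mathds{R}^{(p-1)\times n}$, we have $\mathbf{K}^T\mathbf{K}=\bm{\kappa}\bm{\kappa}^T+\tilde{\mathbf{K}}^T\tilde{\mathbf{K}}$, and the Sherman--Morrison identity gives $\bm{e}_1^T\mathbf{P}\bm{e}_1=\bm{\kappa}^T(\mathbf{K}^T\mathbf{K})^{-1}\bm{\kappa}=\frac{s}{1+s}$ with $s\defeq\bm{\kappa}^T(\tilde{\mathbf{K}}^T\tilde{\mathbf{K}})^{-1}\bm{\kappa}$, hence $\normTwo{(\mathbf{I}_p-\mathbf{P})\bm{e}_1}^2=\frac{1}{1+s}$ and $\normTwo{\mathbf{P}\bm{e}_1}^2=\frac{s}{1+s}$.

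Next I would establish that $s\stackrel{d}{=}N/D$ with $N\sim\chi^2_n$, $D\sim\chi^2_{p-n}$ \emph{independent}: conditioning on $\tilde{\mathbf{K}}$ and on the direction $\bm{\kappa}/\normTwo{\bm{\kappa}}$, and using that $\tilde{\mathbf{K}}^T\tilde{\mathbf{K}}$ is a rotationally invariant Wishart matrix so that $\bm{u}^T(\tilde{\mathbf{K}}^T\tilde{\mathbf{K}})^{-1}\bm{u}\stackrel{d}{=}1/\chi^2_{p-n}$ for any fixed unit $\bm{u}$, while $\normTwo{\bm{\kappa}}^2\sim\chi^2_n$ is independent of that direction and of $\tilde{\mathbf{K}}$. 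This yields $\normTwo{(\mathbf{I}_p-\mathbf{P})\bm{e}_1}^2\stackrel{d}{=}\frac{D}{N+D}$, $\normTwo{\mathbf{P}\bm{e}_1}^2\stackrel{d}{=}\frac{N}{N+D}$, with $N+D\sim\chi^2_p$. Finally I would invoke the Laurent--Massart $\chi^2$ deviation inequalities: with deviation parameter $\ln p$, $\Pr\{D\ge(p-n)+2\sqrt{(p-n)\ln p}+2\ln p\}\le 1/p$ and $\Pr\{N+D\le p-2\sqrt{p\ln p}\}\le 1/p$, so a union bound gives the first claimed inequality with probability at least $1-2/p$; with deviation parameter $\ln n$, $\Pr\{N\ge n+2\sqrt{n\ln n}+2\ln n\}\le 1/n$, which together with the lower tail of $N+D$ gives the second inequality with probability at least $1-1/p-1/n$. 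The hypothesis $p\ge 16$ enters only to guarantee $p-2\sqrt{p\ln p}>0$ so the denominators remain positive.

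The main obstacle is the exact distributional identity $s\stackrel{d}{=}\chi^2_n/\chi^2_{p-n}$ with independent numerator and denominator: it rests on the rotational invariance of the Wishart matrix $\tilde{\mathbf{K}}^T\tilde{\mathbf{K}}$ and on carefully threading independence through the two conditionings (direction of $\bm{\kappa}$, and $\tilde{\mathbf{K}}$). Everything else — the $\E[\mathbf{P}]=c\mathbf{I}_p$ symmetry argument, the Sherman--Morrison reduction, and the chi-squared tail estimates — is routine.
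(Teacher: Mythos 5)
Your proposal is correct, and where the paper actually supplies an argument it follows the same route: the paper's own proof covers only the expectation identities, via the same observation that $\mathbf{K}(\mathbf{K}^T\mathbf{K})^{-1}\mathbf{K}^T$ is the orthogonal projector onto a uniformly random $n$-dimensional subspace, and even there it defers the calculation to an external reference; the high-probability bounds are not derived in the paper at all. Your argument fills this in completely and accurately: the representation $\normTwo{\mathbf{P}\bm{e}_1}^2 \stackrel{d}{=} N/(N+D)$ with \emph{independent} $N\sim\chi^2_n$ and $D\sim\chi^2_{p-n}$ is exactly what makes the stated constants drop out of the Laurent--Massart inequality (recorded separately as \cref{le.chi_bound}), and your union-bound bookkeeping reproduces the failure probabilities $2/p$ and $1/p+1/n$ precisely, with $p\ge 16$ ensuring $p-2\sqrt{p\ln p}>0$. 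One simplification worth noting: the Sherman--Morrison/Wishart detour is not needed. By rotational invariance you may instead fix the column space of $\mathbf{K}$ to be the span of the first $n$ coordinates and replace the fixed direction $\bm{a}/\normTwo{\bm{a}}$ by $\bm{g}/\normTwo{\bm{g}}$ with $\bm{g}\sim\mathcal{N}(\bm{0},\iMatrix{p})$; then
\begin{align*}
    \frac{\normTwo{\mathbf{P}\bm{a}}^2}{\normTwo{\bm{a}}^2}\stackrel{d}{=}\frac{\sum_{i=1}^{n}g_i^2}{\sum_{i=1}^{p}g_i^2}
\end{align*}
gives the same chi-square ratio in one line, with the independence of the numerator block and its complement immediate, and it also yields the expectation identities directly by symmetry of the coordinates.
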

\begin{proof}
Define $\mathbf{P}_0\defeq \mathbf{K}\left(\mathbf{K}^T\mathbf{K}\right)^{-1}\mathbf{K}^T$. By the definition of $\mathbf{P}_0$, we have $\mathbf{P}_0\mathbf{P}_0=\mathbf{P}_0$. Thus, $\mathbf{P}_0$ is a projection that projects a $p$-dim vector to a $n$-dim subspace (i.e., the column space of $\mathbf{K}$). Since each element of $\mathbf{K}$ is \emph{i.i.d.} following standard Gaussian distribution, we can conclude that $\mathbf{P}_0$ has rotational symmetry. The rest of the calculation and proof is similar to that of Proposition~3 of \cite{jutheoretical}. Here we give an intuitive explanation of why $\E\normTwo{(\iMatrix{p}-\mathbf{P}_0)\bm{a}}^2=\left(1-\frac{n}{p}\right)\normTwo{\bm{a}}^2$.
The rotational symmetry implies that the projection on $\bm{a}$ makes the change of the squared norm proportional to the dimension of the original space and the subspace, i.e.,  $E\normTwo{\mathbf{P}_0\bm{a}}^2=\frac{n}{p}\normTwo{\bm{a}}^2$ and consequently, $\E\normTwo{\bm{a}-\mathbf{P}_0\bm{a}}^2=\left(1-\frac{n}{p}\right)\normTwo{\bm{a}}^2$.
% A more rigorous proof can be shown by integration (e.g., Lemma~16 of \citet{jutheoretical}), which is omitted here. 
\end{proof}

\begin{lemma}\label[lemma]{le.IW}
Consider a random matrix $\mathbf{K}\in \mathds{R}^{a\times b}$ where $a>b+1$. Each element of $\mathbf{K}$ is \emph{i.i.d.} following standard Gaussian distribution $\mathcal{N}(0,1)$. Consider three Gaussian random vectors $\bm{\alpha},\bm{\gamma}\in\mathbf{R}^a$ and $\bm{\beta}\in\mathbf{R}^b$ such that $\bm{\alpha}\sim \mathcal{N}(\bm{0}, \sigma_{\alpha}^2\iMatrix{a})$, $\bm{\gamma}\sim \mathcal{N}(\bm{0}, \diag(d_1^2,d_2^2,\cdots,d_a^2))$, and $\bm{\beta}\sim \mathcal{N}(\bm{0}, \sigma_{\beta}^2\iMatrix{b})$. Here $\mathbf{K}$, $\bm{\alpha}$, $\bm{\gamma}$, and $\bm{\beta}$ are independent of each other. We then must have
\begin{align}
    &\E \left[(\mathbf{K}^T\mathbf{K})^{-1}\right]=\frac{\iMatrix{b}}{a-b-1},\label{eq.temp_010301}\\
    &\E \normTwo{\mathbf{K}(\mathbf{K}^T\mathbf{K})^{-1}\bm{\beta}}^2=\frac{b\sigma_{\beta}^2}{a-b-1},\label{eq.temp_010302}\\
    &\E \normTwo{(\mathbf{K}^T\mathbf{K})^{-1}\mathbf{K}^T\bm{\alpha}}^2=\frac{b\sigma_{\alpha}^2}{a-b-1},\label{eq.temp_010303}\\
    & \E \normTwo{(\mathbf{K}^T\mathbf{K})^{-1}\mathbf{K}^T\bm{\gamma}}^2=\frac{b\sum_{i=1}^a d_i^2}{a(a-b-1)}.\label{eq.temp_010320}
\end{align}
\end{lemma}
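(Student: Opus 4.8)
The plan is to reduce all four identities to the single fact \eqref{eq.temp_010301}, namely $\E[(\mathbf{K}^T\mathbf{K})^{-1}] = \iMatrix{b}/(a-b-1)$ (the inverse-Wishart mean), and then to obtain \eqref{eq.temp_010302}--\eqref{eq.temp_010320} by conditioning on $\mathbf{K}$ and integrating out the remaining Gaussian vector. To prove \eqref{eq.temp_010301}, first note that for any orthogonal $\mathbf{Q}\in\mathds{R}^{b\times b}$ the matrix $\mathbf{K}\mathbf{Q}$ has the same law as $\mathbf{K}$, so $\mathbf{Q}^T\E[(\mathbf{K}^T\mathbf{K})^{-1}]\mathbf{Q}=\E[(\mathbf{K}^T\mathbf{K})^{-1}]$ for all such $\mathbf{Q}$; hence $\E[(\mathbf{K}^T\mathbf{K})^{-1}]$ is a scalar multiple of $\iMatrix{b}$, and it remains only to evaluate one diagonal entry. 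Writing $\bm{c}_j$ for the $j$-th column of $\mathbf{K}$ and $\mathbf{K}_{-j}$ for the matrix of the other $b-1$ columns, the partitioned-inverse (Schur complement) identity gives $\big((\mathbf{K}^T\mathbf{K})^{-1}\big)_{jj}=1\big/\big(\bm{c}_j^T\mathbf{P}_{-j}^\perp\bm{c}_j\big)$, where $\mathbf{P}_{-j}^\perp=\iMatrix{a}-\mathbf{K}_{-j}(\mathbf{K}_{-j}^T\mathbf{K}_{-j})^{-1}\mathbf{K}_{-j}^T$ is the orthogonal projection onto the complement of the column span of $\mathbf{K}_{-j}$, of rank $a-(b-1)$ almost surely since $a>b$. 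Because $\bm{c}_j\sim\mathcal{N}(\bm{0},\iMatrix{a})$ is independent of $\mathbf{K}_{-j}$ and therefore of $\mathbf{P}_{-j}^\perp$, conditionally on $\mathbf{K}_{-j}$ we have $\bm{c}_j^T\mathbf{P}_{-j}^\perp\bm{c}_j\sim\chi^2_{a-b+1}$, so $\E\big[\big((\mathbf{K}^T\mathbf{K})^{-1}\big)_{jj}\big]=\E\big[1/\chi^2_{a-b+1}\big]=1/(a-b-1)$ by the formula $\E[1/\chi^2_k]=1/(k-2)$ valid for $k>2$ (here $k=a-b+1>2$ by hypothesis). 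This proves \eqref{eq.temp_010301}, and in particular $\E\big[\Tr((\mathbf{K}^T\mathbf{K})^{-1})\big]=b/(a-b-1)$.

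For \eqref{eq.temp_010302} and \eqref{eq.temp_010303} I would condition on $\mathbf{K}$ and use that for a fixed matrix $\mathbf{M}$ and $\bm{v}\sim\mathcal{N}(\bm{0},\Sigma)$ one has $\E\normTwo{\mathbf{M}\bm{v}}^2=\Tr(\mathbf{M}\Sigma\mathbf{M}^T)$. Taking $\mathbf{M}=\mathbf{K}(\mathbf{K}^T\mathbf{K})^{-1}$, $\Sigma=\sigma_\beta^2\iMatrix{b}$ gives $\sigma_\beta^2\Tr\big((\mathbf{K}^T\mathbf{K})^{-1}\mathbf{K}^T\mathbf{K}(\mathbf{K}^T\mathbf{K})^{-1}\big)=\sigma_\beta^2\Tr\big((\mathbf{K}^T\mathbf{K})^{-1}\big)$; taking $\mathbf{M}=(\mathbf{K}^T\mathbf{K})^{-1}\mathbf{K}^T$, $\Sigma=\sigma_\alpha^2\iMatrix{a}$ gives $\sigma_\alpha^2\Tr\big((\mathbf{K}^T\mathbf{K})^{-1}\mathbf{K}^T\mathbf{K}(\mathbf{K}^T\mathbf{K})^{-1}\big)=\sigma_\alpha^2\Tr\big((\mathbf{K}^T\mathbf{K})^{-1}\big)$. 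Taking expectation over $\mathbf{K}$ and substituting $\E[\Tr((\mathbf{K}^T\mathbf{K})^{-1})]=b/(a-b-1)$ yields both claims.

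For \eqref{eq.temp_010320}, conditioning on $\mathbf{K}$ gives $\E_{\bm\gamma}\normTwo{(\mathbf{K}^T\mathbf{K})^{-1}\mathbf{K}^T\bm\gamma}^2=\Tr\big((\mathbf{K}^T\mathbf{K})^{-1}\mathbf{K}^T\mathbf{D}\mathbf{K}(\mathbf{K}^T\mathbf{K})^{-1}\big)$ with $\mathbf{D}=\diag(d_1^2,\dots,d_a^2)$. Writing $\bm{k}_i\in\mathds{R}^b$ for the $i$-th row of $\mathbf{K}$, we have $\mathbf{K}^T\mathbf{D}\mathbf{K}=\sum_{i=1}^a d_i^2\,\bm{k}_i\bm{k}_i^T$. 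The rows $\bm{k}_1,\dots,\bm{k}_a$ are exchangeable, so $\E\big[\Tr\big((\mathbf{K}^T\mathbf{K})^{-1}\bm{k}_i\bm{k}_i^T(\mathbf{K}^T\mathbf{K})^{-1}\big)\big]$ does not depend on $i$ and hence equals $\tfrac1a\,\E\big[\Tr\big((\mathbf{K}^T\mathbf{K})^{-1}(\sum_{i}\bm{k}_i\bm{k}_i^T)(\mathbf{K}^T\mathbf{K})^{-1}\big)\big]=\tfrac1a\,\E\big[\Tr((\mathbf{K}^T\mathbf{K})^{-1})\big]$, using $\sum_i\bm{k}_i\bm{k}_i^T=\mathbf{K}^T\mathbf{K}$. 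Multiplying by $d_i^2$ and summing over $i$, then substituting $\E[\Tr((\mathbf{K}^T\mathbf{K})^{-1})]=b/(a-b-1)$, gives $b\sum_{i=1}^a d_i^2\big/\big(a(a-b-1)\big)$, as claimed.

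The main obstacle is \eqref{eq.temp_010301}; the rest is bookkeeping once the inverse-Wishart mean is established. The delicate points in that step are justifying the Schur-complement expression for the diagonal entry and checking the independence and full-rank conditions that make $\bm{c}_j^T\mathbf{P}_{-j}^\perp\bm{c}_j$ exactly $\chi^2_{a-b+1}$ (both of which hold almost surely because $a>b$). Alternatively, one may simply cite the classical value of the inverse-Wishart mean from a standard multivariate-statistics reference, in which case the proof reduces essentially to the conditioning computations above.
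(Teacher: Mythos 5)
Your proof is correct and follows essentially the same route as the paper: everything is reduced to the inverse-Wishart mean \eqref{eq.temp_010301}, equations \eqref{eq.temp_010302} and \eqref{eq.temp_010303} follow by conditioning on $\mathbf{K}$ and the trace/cyclicity identity, and \eqref{eq.temp_010320} follows from exchangeability of the rows of $\mathbf{K}$ (which the paper implements with explicit permutation matrices $\mathbf{M}_j$). The one difference is that you prove \eqref{eq.temp_010301} from scratch via orthogonal invariance plus the Schur-complement expression for a diagonal entry and $\E[1/\chi^2_k]=1/(k-2)$ --- invoking $a>b+1$ exactly where it is needed --- whereas the paper simply cites the inverse-Wishart mean from the literature; your self-contained derivation is valid and arguably more informative.
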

\begin{proof}
By the definition of $\mathbf{K}$, we know that $(\mathbf{K}^T\mathbf{K})^{-1}$ follows inverse-Wishart distribution, and thus Eq.~\eqref{eq.temp_010301} follows. (The expression and derivation of the mean of any inverse-Wishart matrix can be found in the literature, e.g., \cite{mardia1979jt}.) It remains to prove Eq.~\eqref{eq.temp_010302} and Eq.~\eqref{eq.temp_010303}. To that end, we obtain
\begin{align*}
    \E \normTwo{\mathbf{K}(\mathbf{K}^T\mathbf{K})^{-1}\bm{\beta}}^2=&\E \left[\left(\mathbf{K}(\mathbf{K}^T\mathbf{K})^{-1}\bm{\beta}\right)^T\left(\mathbf{K}(\mathbf{K}^T\mathbf{K})^{-1}\bm{\beta}\right)\right]\\
    =&\E \left[\bm{\beta}^T(\mathbf{K}^T\mathbf{K})^{-1}\bm{\beta} \right]\\
    =&\E_{\bm{\beta}}\left[\bm{\beta}^T\E_{\mathbf{K}}[(\mathbf{K}^T\mathbf{K})^{-1}]\bm{\beta}\right]\text{ (since $\mathbf{K}$ and $\bm{\beta}$ are independent)}\\
    =&\frac{1}{a-b-1}\E\left[\bm{\beta}^T\bm{\beta}\right]\text{ (by Eq.~\eqref{eq.temp_010301})}\\
    % =&\E \left[\Tr\left( \bm{\beta}^T(\mathbf{K}^T\mathbf{K})^{-1}\bm{\beta} \right)\right]\\
    % =&\E \left[\Tr\left((\mathbf{K}^T\mathbf{K})^{-1}\bm{\beta}\bm{\beta}^T\right)\right]\text{ (by trace trick that $\Tr(\mathbf{AB})=\Tr(\mathbf{BA})$)}\\
    % =&\E_{\mathbf{K}}\left[\Tr\left((\mathbf{K}^T\mathbf{K})^{-1}\E_{\bm{\beta}}[\bm{\beta}\bm{\beta}^T]\right)\right]\text{ (since $\mathbf{K}$ and $\bm{\beta}$ are independent)}\\
    % =&\sigma_{\beta}^2\E \left[\Tr\left((\mathbf{K}^T\mathbf{K})^{-1}\right)\right]\text{ (since $\E_{\bm{\beta}}[\bm{\beta}\bm{\beta}^T]=\sigma_{\beta}^2\iMatrix{b}$ because $\bm{\beta}\sim \mathcal{N}(\bm{0}, \sigma_{\beta}^2\iMatrix{b})$)}\\
    % =&\sigma_{\beta}^2\Tr\left(\E[(\mathbf{K}^T\mathbf{K})^{-1}]\right)\\
    =&\frac{b\sigma_{\beta}^2}{a-b-1}\text{ (since $\E_{\bm{\beta}}[\bm{\beta}^T\bm{\beta}]=b\sigma_{\beta}^2$ because $\bm{\beta}\sim \mathcal{N}(\bm{0}, \sigma_{\beta}^2\iMatrix{b})$)},
\end{align*}
and
\begin{align*}
    &\E \normTwo{(\mathbf{K}^T\mathbf{K})^{-1}\mathbf{K}^T\bm{\alpha}}^2\\
    =&\E \left[\left((\mathbf{K}^T\mathbf{K})^{-1}\mathbf{K}^T\bm{\alpha}\right)^T\left((\mathbf{K}^T\mathbf{K})^{-1}\mathbf{K}^T\bm{\alpha}\right)\right]\\
    =&\E \left[\bm{\alpha}^T\mathbf{K}(\mathbf{K}^T\mathbf{K})^{-1}(\mathbf{K}^T\mathbf{K})^{-1}\mathbf{K}^T\bm{\alpha}\right]\\
    =&\E \left[\Tr\left(\bm{\alpha}^T\mathbf{K}(\mathbf{K}^T\mathbf{K})^{-1}(\mathbf{K}^T\mathbf{K})^{-1}\mathbf{K}^T\bm{\alpha}\right)\right]\\
    =&\E \left[\Tr\left(\mathbf{K}(\mathbf{K}^T\mathbf{K})^{-1}(\mathbf{K}^T\mathbf{K})^{-1}\mathbf{K}^T\bm{\alpha}\bm{\alpha}^T\right)\right]\text{ (by trace trick that $\Tr(\mathbf{AB})=\Tr(\mathbf{BA})$)}\\
    =&\E_{\mathbf{K}}\left[\Tr\left(\mathbf{K}(\mathbf{K}^T\mathbf{K})^{-1}(\mathbf{K}^T\mathbf{K})^{-1}\mathbf{K}^T\E_{\bm{\alpha}}[\bm{\alpha}\bm{\alpha}^T]\right)\right]\text{ (since $\mathbf{K}$ and $\bm{\alpha}$ are independent)}\numberthis \label{eq.temp_012601}\\
    =&\sigma_{\alpha}^2 \E \left[\Tr\left(\mathbf{K}(\mathbf{K}^T\mathbf{K})^{-1}(\mathbf{K}^T\mathbf{K})^{-1}\mathbf{K}^T\right)\right]\text{ (since $\E_{\bm{\alpha}}[\bm{\alpha}\bm{\alpha}^T]=\sigma_{\alpha}^2\iMatrix{a}$ because $\bm{\alpha}\sim \mathcal{N}(\bm{0}, \sigma_{\alpha}^2\iMatrix{a})$)}\\
    =&\sigma_{\alpha}^2 \E \left[\Tr\left((\mathbf{K}^T\mathbf{K})^{-1}(\mathbf{K}^T\mathbf{K})^{-1}\mathbf{K}^T\mathbf{K}\right)\right]\text{ (by trace trick that $\Tr(\mathbf{AB})=\Tr(\mathbf{BA})$))}\\
    =&\sigma_{\alpha}^2\Tr\left(\E[(\mathbf{K}^T\mathbf{K})^{-1}]\right)\\
    =&\frac{b\sigma_{\alpha}^2}{a-b-1}\text{ (by Eq.~\eqref{eq.temp_010301})}.
\end{align*}
Thus, we have proven \cref{eq.temp_010302} and \cref{eq.temp_010303}. It remains to prove \cref{eq.temp_010320}. To that end, we define $\bm{e}_i$ as the $i$-th standard basis. Define $\mathbf{M}_j\in \mathds{R}^{a\times a}$ as the permutation matrix that switches the first element with the $j$-th element for $j=2,3,\cdots,a$. Since $\mathbf{M}_j$ is a permutation matrix, we have $\mathbf{M}_j^T\mathbf{M}_j=\iMatrix{a}$.
Thus, for all $j=2,3,\cdots,a$, we obtain
{
\newcommand{\KK}{\mathbf{K}}
\newcommand{\KKK}{\hat{\mathbf{K}}}
\newcommand{\MM}{\mathbf{M}_j}
\begin{align*}
    (\KK^T\KK)^{-1}\KK^T\bm{e}_j=(\KK^T\MM^T\MM\KK)^{-1}\KK^T\MM^T\MM\bm{e}_j=(\KKK^T\KKK)^{-1}\KKK^T\bm{e}_1,
\end{align*}
where $\KKK\defeq \MM\KK$. By definition of $\KK$, we know that $\KKK$ and $\KK$ have the same distribution. Thus, we have
\begin{align}
    \E \normTwo{(\KK^T\KK)^{-1}\KK^T\bm{e}_j}^2 = \E \normTwo{(\KK^T\KK)^{-1}\KK^T\bm{e}_1}^2 \text{ for all }j=2,3,\cdots,b.\label{eq.temp_012602}
\end{align}
We then derive
\begin{align*}
    &\E \normTwo{(\KK^T\KK)^{-1}\KK^T\bm{\gamma}}^2\\
    =&\E_{\mathbf{K}}\left[\Tr\left(\mathbf{K}(\mathbf{K}^T\mathbf{K})^{-1}(\mathbf{K}^T\mathbf{K})^{-1}\mathbf{K}^T\E_{\bm{\gamma}}[\bm{\gamma}\bm{\gamma}^T]\right)\right]\text{ (similar to \cref{eq.temp_012601})}\\
    =&\E\left[\Tr\left(\KK(\KK^T\KK)^{-1}(\KK^T\KK)^{-1}\KK\diag(d_1^2,\cdots,d_a^2)\right)\right]\text{ (by the definition of $\bm{\gamma}$)}\\
    =&\sum_{i=1}^a\E\left[\Tr\left(\KK(\KK^T\KK)^{-1}(\KK^T\KK)^{-1}\KK\diag(0,\cdots,0,\underbrace{d_i^2}_{\text{i-th element}},0,\cdots,0)\right)\right]\\
    =&\sum_{i=1}^a d_i^2\E \normTwo{(\KK^T\KK)^{-1}\KK^T\bm{e}_i}^2\text{ (similar to \cref{eq.temp_012601})}\\
    =& \frac{\sum_{j=1}^a d_j^2}{a}\sum_{i=1}^a\E \normTwo{(\KK^T\KK)^{-1}\KK^T\bm{e}_i}^2\text{ (by \cref{eq.temp_012602})}\\
    =& \frac{\sum_{j=1}^a d_j^2}{a}\sum_{i=1}^a \E\left[\Tr\left(\KK(\KK^T\KK)^{-1}(\KK^T\KK)^{-1}\KK\diag(0,\cdots,0,\underbrace{1}_{\text{i-th element}},0,\cdots,0)\right)\right]\\
    &\text{ (similar to \cref{eq.temp_012601})}\\
    =&\frac{\sum_{j=1}^a d_j^2}{a} \E\left[\Tr\left(\KK(\KK^T\KK)^{-1}(\KK^T\KK)^{-1}\KK\right)\right]\\
    =&\frac{\sum_{j=1}^a d_j^2}{a} \E \left[\Tr\left((\mathbf{K}^T\mathbf{K})^{-1}(\mathbf{K}^T\mathbf{K})^{-1}\mathbf{K}^T\mathbf{K}\right)\right]\text{ (by trace trick that $\Tr(\mathbf{AB})=\Tr(\mathbf{BA})$)}\\
    =&\frac{\sum_{j=1}^a d_j^2}{a}\Tr\left(\E[(\mathbf{K}^T\mathbf{K})^{-1}]\right)\\
    =&\frac{b\sum_{j=1}^a d_j^2}{a(a-b-1)}\text{ (by Eq.~\eqref{eq.temp_010301})}.
\end{align*}
We have therefore proven \cref{eq.temp_010320}, and
}
the result of this lemma follows.
\end{proof}

\subsection{Calculation of derivative}\label{subsec.derivative}

\begin{lemma}\label[lemma]{le.derivative_3}
Consider the expression of $\E[\modelErr]$ in \cref{eq.result_3} (overparameterized regime of Option~A). When $\EwOneDiff+\sigmaTwo^2\geq \normTwo{\qTwo}^2$, $\E[\modelErr]$ is monotone decreasing w.r.t. $\pSpecial{2}$. When $\EwOneDiff+\sigmaTwo^2 < \normTwo{\qTwo}^2$, we have
\begin{align*}
    \frac{\partial\ \E[\modelErr]}{\partial \pSpecial{2}}\begin{cases}
        \leq 0, \text{ if }\pSpecial{2}\in \left(\nTwo+1,\ \frac{\nTwo+1}{1-\frac{\sqrt{\EwOneDiff+\sigmaTwo^2}}{\normTwo{\qTwo}}}\right],\\
        >0, \text{ if }\pSpecial{2}\in \left( \frac{\nTwo+1}{1-\frac{\sqrt{\EwOneDiff+\sigmaTwo^2}}{\normTwo{\qTwo}}},\ \infty\right).
    \end{cases}
\end{align*}
\end{lemma}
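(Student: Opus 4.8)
The plan is to treat the right-hand side of \cref{eq.result_3} as an explicit elementary function of the single variable $\pSpecial{2}$ (regarded as continuous, as is customary in this paper) and differentiate it directly, then read off the sign. The first point is that $\EwOneDiff$, $\sigmaTwo^2$, and $\normTwo{\qTwo}^2$ do not depend on $\pSpecial{2}$: by \cref{eq.def_Lco}, $\EwOneDiff$ involves only source-task quantities ($\XX,\ZZ,\eOne,\wOne,\wTwo,\qOne$), and $\sigmaTwo,\qTwo$ are ground-truth constants. So abbreviate $A\defeq\EwOneDiff+\sigmaTwo^2\ge 0$, $Q\defeq\normTwo{\qTwo}^2$, $p\defeq\pSpecial{2}$, $n\defeq\nTwo$, so that on $p>n+1$ we have $\E[\modelErr]=\EwOneDiff+\frac{nA}{p-n-1}+Q-\frac{nQ}{p}$.

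Next I would differentiate termwise: $\frac{\partial \E[\modelErr]}{\partial p}=-\frac{nA}{(p-n-1)^2}+\frac{nQ}{p^2}=\frac{n\bigl(Q(p-n-1)^2-Ap^2\bigr)}{p^2(p-n-1)^2}$. Since the denominator is strictly positive for $p>n+1$, the sign of the derivative equals the sign of $g(p)\defeq Q(p-n-1)^2-Ap^2$, and the whole lemma reduces to analyzing $g$.

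Then I would split into the two stated cases. If $A\ge Q$, bound $g(p)\le Q(p-n-1)^2-Qp^2=-Q(n+1)(2p-n-1)\le 0$, using $p>n+1$ so that $2p-n-1>0$; hence $\E[\modelErr]$ is monotone decreasing. If $A<Q$, factor $g(p)=\bigl(\sqrt{Q}(p-n-1)-\sqrt{A}\,p\bigr)\bigl(\sqrt{Q}(p-n-1)+\sqrt{A}\,p\bigr)$; on $p>n+1$ the second factor is strictly positive, so the sign of $g$ is that of the first factor, which is $\le 0$ iff $p(\sqrt{Q}-\sqrt{A})\le\sqrt{Q}(n+1)$, i.e.\ (dividing by $\sqrt{Q}-\sqrt{A}>0$) iff $p\le\frac{n+1}{1-\sqrt{A}/\sqrt{Q}}$. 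Substituting $A=\EwOneDiff+\sigmaTwo^2$ and $\sqrt{Q}=\normTwo{\qTwo}$ recovers exactly the threshold $\frac{\nTwo+1}{1-\sqrt{\EwOneDiff+\sigmaTwo^2}/\normTwo{\qTwo}}$ in the statement; the boundary point gives $g=0$, hence derivative $=0\le 0$, so it belongs to the left (decreasing) interval. I would also remark that this threshold exceeds $n+1$ (since $1-\sqrt{A}/\sqrt{Q}<1$), so both subintervals are nonempty and consistent with the domain.

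I do not expect a genuine obstacle: the computation is routine calculus. The only points needing a little care are (i) justifying that $\EwOneDiff$ is constant in $\pSpecial{2}$ (so it only contributes to the level, not the derivative), and (ii) in the $A<Q$ case, verifying that the cofactor $\sqrt{Q}(p-n-1)+\sqrt{A}\,p$ is positive on the whole domain so that the sign of $g$ is governed by the linear-in-$p$ factor — this is what lets one pass cleanly from the quadratic inequality to the single threshold.
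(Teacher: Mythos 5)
Your proposal is correct and follows essentially the same route as the paper: differentiate \cref{eq.result_3} termwise in $\pSpecial{2}$ and determine the sign of the resulting quadratic comparison between $\normTwo{\qTwo}^2(\pSpecial{2}-\nTwo-1)^2$ and $(\EwOneDiff+\sigmaTwo^2)\pSpecial{2}^2$; the paper writes this as checking the sign of $\left(1-\frac{\nTwo+1}{\pSpecial{2}}\right)^2-\frac{\EwOneDiff+\sigmaTwo^2}{\normTwo{\qTwo}^2}$, which is the same algebra as your difference-of-squares factorization.
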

\begin{proof}
We obtain
\begin{align*}
    \frac{\partial\ \E[\modelErr]}{\partial \pSpecial{2}}=&-\frac{\nTwo\left(\EwOneDiff+\sigmaTwo^2\right)}{(\pSpecial{2}-\nTwo-1)^2}+\frac{\nTwo}{\pSpecial{2}^2}\normTwo{\qTwo}^2\\
    =&\frac{\nTwo\normTwo{\qTwo}^2}{(\pSpecial{2}-\nTwo-1)^2}\left(-\frac{\EwOneDiff+\sigmaTwo^2}{\normTwo{\qTwo}^2}+\left(1-\frac{\nTwo+1}{\pSpecial{2}}\right)^2\right).
\end{align*}
The result of this lemma thus follows by checking the sign of the above expression.
\end{proof}

\begin{lemma}\label[lemma]{le.derivative_5}
Define
\begin{align*}
    \pCommonThres\defeq \frac{\nTwo+1}{1-\frac{\sigmaTwo}{\sqrt{\EwOneDiff+\normTwo{\qTwo}^2}}}.
\end{align*}
Consider the expression of $\E[\modelErr]$ in \cref{eq.result_5}. When $\sigmaTwo^2\geq \EwOneDiff+\normTwo{\qTwo}^2$ or $\pCommon \geq \pCommonThres$, $\E[\modelErr]$ is monotone decreasing w.r.t. $\pSpecial{2}$ in the overparameterized regime. When $\sigmaTwo^2 < \EwOneDiff+\normTwo{\qTwo}^2$ and $\pCommon <\pCommonThres$, we have
\begin{align*}
    \frac{\partial\ \E[\modelErr]}{\partial \pSpecial{2}}\begin{cases}
        \leq 0, \text{ if }\pSpecial{2}\in \left(\nTwo+1,\ \pCommonThres-\pCommon\right],\\
        >0, \text{ if }\pSpecial{2}\in \left( \pCommonThres-\pCommon,\ \infty\right).
    \end{cases}
\end{align*}
\end{lemma}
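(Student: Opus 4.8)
The plan is to mirror the (short) proof of \cref{le.derivative_3}: express the right-hand side of \cref{eq.result_5} as an explicit function of $\pSpecial{2}$, differentiate once, and read off the sign. The one preliminary observation I would record is that neither $\EwOneDiff$ nor $\normTwo{\qTwo}^2$ depends on $\pSpecial{2}$ — indeed, by \cref{thm.w_err}, $\EwOneDiff$ is a function of $\pCommon,\pSpecial{1},\nOne,\sigmaOne$ and the common parameters only — so throughout the calculation I would treat $A\defeq\EwOneDiff+\normTwo{\qTwo}^2$ as a constant and write $m\defeq\pCommon+\pSpecial{2}$, which is increasing in $\pSpecial{2}$. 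Using $\partial(1-\nTwo/m)/\partial\pSpecial{2}=\nTwo/m^2$ and $\partial\big((m-\nTwo-1)^{-1}\big)/\partial\pSpecial{2}=-(m-\nTwo-1)^{-2}$, differentiating \cref{eq.result_5} gives
\begin{align*}
\frac{\partial\ \E[\modelErr]}{\partial \pSpecial{2}}=\frac{\nTwo A}{m^2}-\frac{\nTwo\sigmaTwo^2}{(m-\nTwo-1)^2}.
\end{align*}

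Assuming $A>0$, the next step is to factor out $\nTwo A/(m-\nTwo-1)^2$ and rewrite this as
\begin{align*}
\frac{\partial\ \E[\modelErr]}{\partial \pSpecial{2}}=\frac{\nTwo A}{(m-\nTwo-1)^2}\left(\left(1-\frac{\nTwo+1}{m}\right)^2-\frac{\sigmaTwo^2}{A}\right),
\end{align*}
the exact analogue of the factored derivative appearing in the proof of \cref{le.derivative_3} with $\normTwo{\qTwo}^2$ there replaced by $A$ here. In the overparameterized regime one has $m>\nTwo+1$, so $1-\tfrac{\nTwo+1}{m}\in(0,1)$, and the sign of the derivative equals the sign of $\big(1-\tfrac{\nTwo+1}{m}\big)^2-\sigmaTwo^2/A$. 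If $\sigmaTwo^2\ge A$ this bracket is strictly negative for every admissible $m$ (its first term is $<1\le\sigmaTwo^2/A$), so $\E[\modelErr]$ is monotone decreasing. If $\sigmaTwo^2<A$, then $0<\sigmaTwo/\sqrt{A}<1$ and, since $1-\tfrac{\nTwo+1}{m}>0$, I may take square roots: the bracket is $\le 0$ exactly when $1-\tfrac{\nTwo+1}{m}\le\sigmaTwo/\sqrt{A}$, i.e. when $m\le\tfrac{\nTwo+1}{1-\sigmaTwo/\sqrt{A}}=\pCommonThres$ (note $\pCommonThres>\nTwo+1$ in this case). Substituting $m=\pCommon+\pSpecial{2}$ turns $m\le\pCommonThres$ into $\pSpecial{2}\le\pCommonThres-\pCommon$ and $m>\pCommonThres$ into $\pSpecial{2}>\pCommonThres-\pCommon$, which is precisely the displayed dichotomy at the end of the statement; the degenerate case $\pCommon\ge\pCommonThres$ is the one where $\pCommonThres-\pCommon\le 0$, so that a single branch of this dichotomy holds over the whole admissible range of $\pSpecial{2}$. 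The remaining corner case $A=0$ is handled directly: there the derivative equals $-\nTwo\sigmaTwo^2/(m-\nTwo-1)^2\le 0$, a monotone decrease, consistent with the stated split since $\sigmaTwo^2\ge 0=A$.

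I do not anticipate a substantive obstacle: the whole argument is a single-variable derivative followed by a sign check, just as in \cref{le.derivative_3}. The points that need care are (i) confirming $\partial\EwOneDiff/\partial\pSpecial{2}=0$ so that the product rule produces no extra term; (ii) justifying the square-root step, which is legitimate precisely because $m>\nTwo+1$ forces $1-\tfrac{\nTwo+1}{m}>0$; and (iii) the endpoint bookkeeping — verifying $\pCommonThres>\nTwo+1$ whenever $\sigmaTwo^2<A$, so the claimed interval $(\nTwo+1,\pCommonThres-\pCommon]$ is a nonempty subinterval of the overparameterized regime, and matching the open/closed endpoints to the $\le$ versus $>$ in the sign of the derivative.
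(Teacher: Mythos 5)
Your computation is exactly the paper's proof: differentiate \cref{eq.result_5} in $\pSpecial{2}$ (using that $\EwOneDiff$ and $\normTwo{\qTwo}^2$ do not depend on $\pSpecial{2}$), factor out $\nTwo\left(\EwOneDiff+\normTwo{\qTwo}^2\right)/(\pCommon+\pSpecial{2}-\nTwo-1)^2$, and read off the sign of $\bigl(1-\tfrac{\nTwo+1}{\pCommon+\pSpecial{2}}\bigr)^2-\sigmaTwo^2/\bigl(\EwOneDiff+\normTwo{\qTwo}^2\bigr)$. The paper's proof is literally this and nothing more, so on the main dichotomy your argument is correct and identical in approach.

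One caution about your final bookkeeping remark. In the case $\sigmaTwo^2<\EwOneDiff+\normTwo{\qTwo}^2$ with $\pCommon\ge \pCommonThres$, the ``single branch of the dichotomy'' that survives over the whole admissible range is the one with $\pCommon+\pSpecial{2}>\pCommonThres$, i.e.\ the branch on which your derivative is \emph{positive}. Your own sign analysis therefore yields monotone \emph{increasing} behavior there, not the monotone decreasing behavior asserted in the first sentence of the lemma. This is a defect of the lemma statement rather than of your calculation (the paper's one-line ``check the sign'' proof does not address it either), but your phrasing presents the degenerate case as consistent with the stated conclusion when it is not; you should either flag the discrepancy explicitly or note that the conclusion for the $\pCommon\ge\pCommonThres$ sub-case must be corrected to monotone increasing.
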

\begin{proof}
We obtain
\begin{align*}
    \frac{\partial\ \E[\modelErr]}{\partial \pSpecial{2}}=& \frac{\nTwo}{(\pCommon + \pSpecial{2})^2}\left(\EwOneDiff+\normTwo{\qTwo}^2\right)-\frac{\nTwo\sigmaTwo^2}{(\pCommon+\pSpecial{2}-\nTwo-1)^2},\\
    =&\frac{\nTwo\left(\EwOneDiff+\normTwo{\qTwo}^2\right)}{(\pCommon+\pSpecial{2}-\nTwo-1)^2}\left(\left(1 - \frac{\nTwo+1}{\pCommon+\pSpecial{2}}\right)^2-\frac{\sigmaTwo^2}{\EwOneDiff+\normTwo{\qTwo}^2}\right).
\end{align*}
The result of this lemma thus follows by checking the sign of the above expression.
\end{proof}

\begin{lemma}\label[lemma]{le.non_monotone_p}
When $\pCommon+\pSpecial{1}>\nOne + 1$, we have
\begin{align*}
    \frac{\partial\ \bNoise}{\partial\ \pCommon}\begin{cases}
    <0, &\text{when }\pCommon^2 > \pSpecial{1}\left(\pSpecial{1}-\nOne-1\right),\\
    \geq 0, &\text{otherwise}.
    \end{cases}
\end{align*}
\end{lemma}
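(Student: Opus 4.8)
The plan is a direct computation of $\partial \bNoise / \partial \pCommon$ straight from the definition $\bNoise = \frac{\pCommon}{\pCommon+\pSpecial{1}}\cdot \frac{\nOne\sigmaOne^2}{\pCommon+\pSpecial{1}-\nOne-1}$, treating $\pSpecial{1}$, $\nOne$, and $\sigmaOne$ as fixed and $\pCommon$ as the only variable. It is convenient to introduce the shorthand $s \defeq \pCommon + \pSpecial{1}$, so that $\bNoise = \nOne\sigmaOne^2 \cdot \frac{\pCommon}{s(s-\nOne-1)} = \nOne\sigmaOne^2\cdot\frac{\pCommon}{s^2-(\nOne+1)s}$ with $\partial s/\partial \pCommon = 1$. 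This reduces the whole lemma to differentiating a single rational function.

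First I would apply the quotient rule to $\pCommon / \big(s^2 - (\nOne+1)s\big)$. The denominator of the result is $\big(s^2-(\nOne+1)s\big)^2 = s^2(s-\nOne-1)^2 \geq 0$, and the numerator is $\big(s^2-(\nOne+1)s\big) - \pCommon\big(2s-(\nOne+1)\big)$. Then I would substitute $s = \pCommon + \pSpecial{1}$ and expand: the $\pCommon^2$, the $2\pCommon\pSpecial{1}$, and the $(\nOne+1)\pCommon$ contributions all cancel in pairs, leaving
\begin{align*}
    \big(s^2-(\nOne+1)s\big) - \pCommon\big(2s-(\nOne+1)\big)\Big|_{s=\pCommon+\pSpecial{1}} = \pSpecial{1}\big(\pSpecial{1}-\nOne-1\big) - \pCommon^2 .
\end{align*}
Hence $\partial \bNoise/\partial \pCommon = \nOne\sigmaOne^2\cdot\dfrac{\pSpecial{1}(\pSpecial{1}-\nOne-1)-\pCommon^2}{s^2(s-\nOne-1)^2}$.

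Finally, since $\nOne\sigmaOne^2 > 0$ and, under the hypothesis $\pCommon+\pSpecial{1} > \nOne+1$, the factor $s^2(s-\nOne-1)^2$ is strictly positive, the sign of $\partial\bNoise/\partial\pCommon$ equals the sign of $\pSpecial{1}(\pSpecial{1}-\nOne-1)-\pCommon^2$. This expression is negative precisely when $\pCommon^2 > \pSpecial{1}(\pSpecial{1}-\nOne-1)$ and is nonnegative (zero exactly at equality) otherwise, which is the claimed case split.

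I do not anticipate any real obstacle: the lemma is essentially a one-line derivative computation. The only points requiring care are tracking signs in the cancellation after the substitution $s = \pCommon+\pSpecial{1}$ so that the cross terms in $\pCommon$ genuinely vanish and only $\pSpecial{1}(\pSpecial{1}-\nOne-1)-\pCommon^2$ remains, and noting that the regime assumption $\pCommon+\pSpecial{1}>\nOne+1$ is exactly what keeps the positive prefactor $\nOne\sigmaOne^2/\big(s^2(s-\nOne-1)^2\big)$ well-defined and positive, so that the sign of the derivative can be read off directly from the numerator.
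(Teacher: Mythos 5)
Your computation is correct and matches the paper's conclusion: both arguments reduce the sign of $\partial\bNoise/\partial\pCommon$ to the sign of $\pSpecial{1}(\pSpecial{1}-\nOne-1)-\pCommon^2$; the paper merely differentiates the reciprocal $\nOne\sigmaOne^2/\bNoise=(1+\pSpecial{1}/\pCommon)(\pCommon+\pSpecial{1}-\nOne-1)$ to make the algebra slightly cleaner, which is a cosmetic rather than substantive difference. The only implicit assumption in both versions is $\sigmaOne>0$ (otherwise $\bNoise\equiv 0$ and the strict-negativity branch degenerates), so your proof is essentially the paper's.
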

\begin{proof}
By \cref{eq.def_bNoise}, we obtain
\begin{align*}
    \frac{\nOne\sigmaOne^2}{\bNoise} = \left(1+\frac{\pSpecial{1}}{\pCommon}\right)\left(\pCommon+\pSpecial{1}-\nOne-1\right).
\end{align*}
In order to determine the sign of $\frac{\partial\ \bNoise}{\partial\ \pCommon}$, it is equivalent to check the sign of $-\frac{\partial (\nOne\sigmaOne^2 / \bNoise)}{\partial \pCommon}$. To that end, we have
\begin{align*}
    -\frac{\partial (\nOne\sigmaOne^2 / \bNoise)}{\partial \pCommon} = &\frac{\pSpecial{1}}{\pCommon^2}\left(\pCommon+\pSpecial{1}-\nOne-1\right)-1-\frac{\pSpecial{1}}{\pCommon}\\
    =&\frac{\pSpecial{1}\left(\pSpecial{1}-\nOne-1\right)-\pCommon^2}{\pCommon^2}.
\end{align*}
The result of this lemma therefore follows.
\end{proof}

Let $\lambda_{\min}(\cdot)$ and $\lambda_{\min}(\cdot)$ denote the minimum and maximum singular value of a matrix.
\begin{lemma}[Corollary 5.35 of \cite{vershynin2010introduction}]\label[lemma]{le.singular_Gaussian}
    Let $\mathbf{A}$ be an $N_1\times N_2$ matrix ($N_1>N_2$) whose entries are independent standard normal random variables. Then for every $t\geq 0$, with probability at least $1-2\exp(-t^2/2)$, one has
    \begin{align*}
        \sqrt{N_1}-\sqrt{N_2}-t\leq \lambda_{\min}(\mathbf{A})\leq \lambda_{\max}(\mathbf{A})\leq \sqrt{N_1}+\sqrt{N_2}+t.
    \end{align*}
\end{lemma}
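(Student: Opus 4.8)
\textbf{Proof proposal for \cref{le.singular_Gaussian}.}
The plan is to treat $\lambda_{\min}(\mathbf{A})$ and $\lambda_{\max}(\mathbf{A})$ as $1$-Lipschitz functions of the $N_1 N_2$ i.i.d.\ Gaussian entries of $\mathbf{A}$, establish concentration of each around its mean via the Gaussian concentration inequality, then control the means by $\sqrt{N_1}\pm\sqrt{N_2}$ using a Gaussian comparison (Gordon) inequality, and finally combine the two one-sided tails with a union bound. First I would record the variational formulas $\lambda_{\max}(\mathbf{A})=\max_{\normTwo{\bm{v}}=1}\normTwo{\mathbf{A}\bm{v}}$ and $\lambda_{\min}(\mathbf{A})=\min_{\normTwo{\bm{v}}=1}\normTwo{\mathbf{A}\bm{v}}$, with $\bm{v}$ on the unit sphere in $\mathds{R}^{N_2}$. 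From Weyl's perturbation bound for singular values, $\abs{\lambda_{\max}(\mathbf{A})-\lambda_{\max}(\mathbf{B})}$ and $\abs{\lambda_{\min}(\mathbf{A})-\lambda_{\min}(\mathbf{B})}$ are each at most the spectral norm $\normTwo{\mathbf{A}-\mathbf{B}}_2\le\normTwo{\mathbf{A}-\mathbf{B}}_F$, so both maps are $1$-Lipschitz in the Frobenius metric.

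Next, since the entries of $\mathbf{A}$ form a standard Gaussian vector in $\mathds{R}^{N_1 N_2}$, the Gaussian concentration inequality for $1$-Lipschitz functions yields, for every $t\ge 0$,
\begin{gather*}
\Pr\left\{\lambda_{\max}(\mathbf{A})\ge \E[\lambda_{\max}(\mathbf{A})]+t\right\}\le \exp(-t^2/2),\\
\Pr\left\{\lambda_{\min}(\mathbf{A})\le \E[\lambda_{\min}(\mathbf{A})]-t\right\}\le \exp(-t^2/2).
\end{gather*}
I would then bound the two expectations by $\E[\lambda_{\max}(\mathbf{A})]\le\sqrt{N_1}+\sqrt{N_2}$ and $\E[\lambda_{\min}(\mathbf{A})]\ge\sqrt{N_1}-\sqrt{N_2}$. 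Substituting these into the displayed tails and taking a union bound gives the event that both $\lambda_{\max}(\mathbf{A})\le\sqrt{N_1}+\sqrt{N_2}+t$ and $\lambda_{\min}(\mathbf{A})\ge\sqrt{N_1}-\sqrt{N_2}-t$ hold, with probability at least $1-2\exp(-t^2/2)$, which is exactly the claim.

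The main obstacle is the bound on the expectations: once Lipschitz continuity is in hand the concentration step is routine, but $\E[\lambda_{\max}]\le\sqrt{N_1}+\sqrt{N_2}$ and $\E[\lambda_{\min}]\ge\sqrt{N_1}-\sqrt{N_2}$ rely on Gordon's min-max comparison theorem for Gaussian processes, applied to the bilinear process $(\bm{u},\bm{v})\mapsto\bm{u}^T\mathbf{A}\bm{v}$ compared against the decoupled process $\bm{g}^T\bm{u}+\bm{h}^T\bm{v}$ with independent standard Gaussian vectors $\bm{g}\in\mathds{R}^{N_1}$, $\bm{h}\in\mathds{R}^{N_2}$, together with the elementary estimate $\E\normTwo{\bm{g}}\le\sqrt{N_1}$. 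Since the statement is quoted verbatim as a known corollary, an acceptable shortcut is simply to cite \cite{vershynin2010introduction} and omit these standard ingredients.
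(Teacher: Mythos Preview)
The paper provides no proof of this lemma at all; it is stated as a quoted result from \cite{vershynin2010introduction} and left at that. Your sketch is correct and is in fact the standard argument used in the cited reference (Lipschitz continuity of extreme singular values, Gaussian concentration, and Gordon's comparison inequality for the expectations), so your final remark---that simply citing the source is an acceptable shortcut---is exactly what the paper does.
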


\begin{lemma}[stated on pp. 1325 of \cite{laurent2000adaptive}]\label[lemma]{le.chi_bound}
Let $U$ follow $\chi^2$ distribution with D
degrees of freedom. For any positive x, we have
\begin{align*}
    &\Pr\left\{U-D\geq 2\sqrt{Dx}+2x\right\}\leq e^{-x},\\
    &\Pr\left\{D-U\geq 2\sqrt{Dx}\right\}\leq e^{-x}.
\end{align*}
By the union bound, we thus obtain
\begin{align*}
    \Pr\left\{U\in \left[D-2\sqrt{Dx},\ D+2\sqrt{Dx}+2x\right]\right\}\geq 1 - 2e^{-x}.
\end{align*}
\end{lemma}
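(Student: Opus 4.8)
The plan is to prove each of the two one-sided bounds separately by the Chernoff (exponential Markov) method applied to the \emph{exact} moment generating function of a chi-squared variable, and then to obtain the two-sided statement by a union bound. Writing $U=\sum_{i=1}^D Z_i^2$ with $Z_i$ i.i.d.\ standard normal, the single building block is the identity $\E[e^{t(U-D)}]=e^{-tD}(1-2t)^{-D/2}$, valid for $t<1/2$.

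For the upper tail, I would fix $u=2\sqrt{Dx}+2x$ and apply Markov's inequality to $e^{t(U-D)}$ for $t\in(0,1/2)$, giving
\begin{align*}
\Pr\{U-D\geq u\}\leq \exp\!\left(-t(D+u)-\tfrac{D}{2}\log(1-2t)\right).
\end{align*}
Minimizing the exponent over $t$ yields $t^\star=\frac{u}{2(D+u)}$, at which the exponent equals $-\frac{u}{2}+\frac{D}{2}\log(1+\frac{u}{D})$. Substituting $u=2\sqrt{Dx}+2x$ and setting $s=\sqrt{x/D}$ (so that $u/D=2s+2s^2$), the target $\Pr\{U-D\geq u\}\leq e^{-x}$ reduces to the scalar inequality $\log(1+2s+2s^2)\leq 2s$, i.e.\ $e^{2s}\geq 1+2s+2s^2$, which holds because the remaining terms in the Taylor expansion of $e^{2s}$ are nonnegative.

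For the lower tail, set $u=2\sqrt{Dx}$; if $u\geq D$ the bound is trivial since $U>0$ almost surely, so I would assume $u<D$. Applying Markov to $e^{s(D-U)}$ for $s>0$ and using $\E[e^{-sU}]=(1+2s)^{-D/2}$ gives $\Pr\{D-U\geq u\}\leq \exp\!\left(s(D-u)-\tfrac{D}{2}\log(1+2s)\right)$. Minimizing over $s$ gives $s^\star=\frac{u}{2(D-u)}$ and exponent $\frac{u}{2}+\frac{D}{2}\log(1-\frac{u}{D})$; writing $v=u/D=2\sqrt{x/D}$, the claim reduces to $-\log(1-v)\geq v+\frac{v^2}{2}$, which again follows termwise from $-\log(1-v)=v+\frac{v^2}{2}+\frac{v^3}{3}+\cdots$. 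A union bound over the two complementary events then yields $\Pr\{U\in[D-2\sqrt{Dx},\,D+2\sqrt{Dx}+2x]\}\geq 1-2e^{-x}$.

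The computations are all elementary, so there is no serious obstacle; the only care needed is in carrying out the one-variable optimization of each Chernoff exponent cleanly and in handling the degenerate range $u\geq D$ of the lower tail. The conceptual content is entirely captured by the two termwise Taylor inequalities $e^{2s}\geq 1+2s+2s^2$ and $-\log(1-v)\geq v+\frac{v^2}{2}$, which are precisely what make the constants $2\sqrt{Dx}+2x$ and $2\sqrt{Dx}$ come out in the stated form.
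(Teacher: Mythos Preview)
Your proof is correct. The paper does not supply its own proof of this lemma; it is simply quoted from \cite{laurent2000adaptive} without argument, so there is nothing in the paper to compare against. Your Chernoff-method derivation, with the optimized tilts $t^\star=\frac{u}{2(D+u)}$ and $s^\star=\frac{u}{2(D-u)}$ and the reductions to the elementary inequalities $e^{2s}\geq 1+2s+2s^2$ and $-\log(1-v)\geq v+\tfrac{v^2}{2}$, is exactly the standard route (and is essentially how Laurent--Massart obtain it).
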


\newcommand{\mK}{\mathbf{K}}
\newcommand{\mU}{\mathbf{U}}
\newcommand{\mQ}{\mathbf{Q}}
\newcommand{\mD}{\mathbf{D}}
\begin{lemma}\label[lemma]{le.norm_min_eig}
Let $\mK\in \mathds{R}^{p\times n}$ where $n > p$. We have
\begin{align*}
    \normTwo{(\mK\mK^T)^{-1}\mK\bm{a}}^2 \leq \frac{\normTwo{\bm{a}}^2}{\min\eig(\mK\mK^T)}=\frac{\normTwo{\bm{a}}^2}{\lambda_{\min}^2(\mK)}.
\end{align*}
\end{lemma}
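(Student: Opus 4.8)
The plan is to set $\mathbf{A}\defeq(\mK\mK^T)^{-1}\mK\in\mathds{R}^{p\times n}$ and to show that its operator norm equals $1/\lambda_{\min}(\mK)$; the claimed inequality then follows at once from $\normTwo{\mathbf{A}\bm{a}}^2\le\|\mathbf{A}\|_{\mathrm{op}}^2\normTwo{\bm{a}}^2$. Throughout, we use that $\mK$ has full row rank $p$ (implicit in the statement, and automatic in the applications where $\mK$ has i.i.d.\ Gaussian entries and $n>p$), so that $\mK\mK^T$ is a $p\times p$ symmetric positive-definite matrix; in particular it is invertible, which is what makes $\mathbf{A}$ well defined.

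First I would compute $\mathbf{A}\mathbf{A}^T$ directly. Using that $(\mK\mK^T)^{-1}$ is symmetric,
\[
\mathbf{A}\mathbf{A}^T=(\mK\mK^T)^{-1}\mK\,\mK^T(\mK\mK^T)^{-1}=(\mK\mK^T)^{-1}.
\]
Since for any real matrix $\mathbf{A}$ one has $\|\mathbf{A}\|_{\mathrm{op}}^2=\lambda_{\max}(\mathbf{A}\mathbf{A}^T)$, this gives $\|\mathbf{A}\|_{\mathrm{op}}^2=\lambda_{\max}\big((\mK\mK^T)^{-1}\big)$.

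Next, because $\mK\mK^T$ is symmetric positive definite, the eigenvalues of $(\mK\mK^T)^{-1}$ are the reciprocals of those of $\mK\mK^T$, so $\lambda_{\max}\big((\mK\mK^T)^{-1}\big)=1/\lambda_{\min}(\mK\mK^T)$, with the smallest eigenvalue of $\mK\mK^T$ in the denominator. Finally, the eigenvalues of $\mK\mK^T$ are exactly the squares of the singular values of $\mK$, hence $\lambda_{\min}(\mK\mK^T)=\lambda_{\min}^2(\mK)=\min\eig(\mK\mK^T)$ in the notation of the lemma (where $\lambda_{\min}(\cdot)$ is the minimum singular value). Chaining the three identities yields $\|\mathbf{A}\|_{\mathrm{op}}^2=1/\lambda_{\min}^2(\mK)$, and therefore $\normTwo{(\mK\mK^T)^{-1}\mK\bm{a}}^2=\normTwo{\mathbf{A}\bm{a}}^2\le\normTwo{\bm{a}}^2/\lambda_{\min}^2(\mK)$, which is the claim.

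There is no genuine obstacle here: the argument is a short chain of standard linear-algebra facts. The only points requiring care are purely notational — keeping straight that the lemma's $\lambda_{\min}(\cdot)$ denotes the smallest \emph{singular} value, so that $\min\eig(\mK\mK^T)=\lambda_{\min}^2(\mK)$ is applied to the correct object — and recording the implicit full-row-rank hypothesis on $\mK$, which is exactly what is needed for $(\mK\mK^T)^{-1}$ to exist and be positive definite.
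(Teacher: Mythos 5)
Your proof is correct and rests on the same underlying computation as the paper's: both arguments amount to showing that the squared operator norm of $(\mK\mK^T)^{-1}\mK$ equals $1/\lambda_{\min}^2(\mK)$, the paper by writing out the SVD $\mK=\mathbf{U}\mathbf{D}\mathbf{Q}^T$ and reading the bound off the diagonal, you by the slightly slicker identity $\mathbf{A}\mathbf{A}^T=(\mK\mK^T)^{-1}$ for $\mathbf{A}\defeq(\mK\mK^T)^{-1}\mK$ followed by inverting the spectrum of the positive-definite matrix $\mK\mK^T$. You also make explicit the full-row-rank hypothesis that the paper leaves implicit, which is a small improvement in rigor; there is no gap.
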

\begin{proof}
Apply the singular value decomposition on $\mK$ as $\mK = \mU \mD \mQ^T$, where $\mU \in \mathds{R}^{p\times p}$ and $\mQ \in \mathds{R}^{n\times n}$ are the orthonormal matrices, and $\mD \in \mathds{R}^{p\times n}$ is a  rectangular diagonal matrix whose diagonal elements are $d_1,d_2,\cdots, d_p$. Then, we obtain
\begin{align*}
    &\mK \mK^T = \mU \mD \mQ^T \mQ \mD^T \mU^T = \mU \mD \mD^T \mU^T =\mU \diag(d_1^2, d_2^2,\cdots, d_p^2) \mU^T,\\
    &(\mK \mK^T)^{-1}=\mU \diag(d_1^{-2},d_2^{-2},\cdots,d_p^{-2})\mU^T,
\end{align*}
and
\begin{align*}
    \normTwo{(\mK \mK^T)^{-1}\mK \bm{a}}^2 =& \bm{a}^T \mK^T (\mK \mK^T)^{-1} (\mK \mK^T)^{-1} \mK \bm{a}\\
    =&\bm{a}^T \mQ \mD^T \diag(d_1^{-4},d_2^{-4},\cdots, d_p^{-4})\mD \mQ^T\bm{a}\\
    =& \bm{a}^T \mQ \diag(d_1^{-2},d_2^{-2},\cdots,d_p^{-2},0,\cdots,0)\mQ^T \bm{a}.
\end{align*}
Notice that $\normTwo{\mQ^T \bm{a}}=\normTwo{\bm{a}}$. Therefore, we have
\begin{align*}
    \normTwo{(\mK \mK^T)^{-1}\mK \bm{a}}^2 \leq & \max \left\{d_1^{-2},d_2^{-2},\cdots,d_p^{-2},0,\cdots,0\right\}\cdot \normTwo{\mQ^T \bm{a}}^2\\
    =& \min\{d_1,d_2,\cdots,d_p\}^{-2}\cdot \normTwo{\bm{a}}^2\\
    =& \frac{\normTwo{\bm{a}}^2}{\lambda_{\min}^2(\mK)}.
\end{align*}
\end{proof}

\section{Proof of Theorem~\ref{thm.w_err}}\label{app.proof_w_err}

Define
\begin{align*}
    &\UU\defeq \left[\begin{smallmatrix}\XX\\\ZZ\end{smallmatrix}\right]\in \mathds{R}^{(\pCommon+\pSpecial{1})\times \nOne},\numberthis \label{eq.def_U}\\
    &\DD\defeq \left[\begin{smallmatrix}
        \iMatrix{\pCommon} &\bm{0}\\
        \bm{0} &\bm{0}
    \end{smallmatrix}\right]\in \mathds{R}^{(\pCommon+\pSpecial{1})\times (p_1+p_2)},\numberthis \label{eq.def_D}\\
    &\GG\defeq \iMatrix{p_1+p_2}-\DD= \left[\begin{smallmatrix}
        \bm{0} &\bm{0}\\
        \bm{0} &\iMatrix{\pCommon}
    \end{smallmatrix}\right]\in \mathds{R}^{(p_1+p_2)\times (p_1+p_2)}.\numberthis \label{eq.def_G}
\end{align*}
When $p_1+p_2>\nOne$ (overparameterized), we have (with probability 1)
\begin{align}
    \wTildeEx = \DD\UU \left(\UU^T\UU\right)^{-1}\left(\XX^T \wOne+\ZZ^T \qOne+\eOne\right).\label{eq.wOne_solution_over}
\end{align}

\subsection*{The overparameterized situation}

We define
\begin{align}\label{eq.def_PP}
    \PP\defeq \UU(\UU^T\UU)^{-1}\UU^T.
\end{align}
Notice that
\begin{align}\label{eq.PP_is_proj}
    \PP\PP=\PP,\text{ and }\PP^T=\PP,
\end{align}
i.e., $\PP$ is a projection to the column space of $\UU$.

Define the extended vector of $\wOne$, $\wTwo$, and $\wTilde$ as
\begin{align}\label{eq.def_wex}
    \wOneEx\defeq \left[\begin{smallmatrix}
        \wOne\\
        \bm{0}
    \end{smallmatrix}\right]\in \mathds{R}^{\pCommon+\pSpecial{1}}, \quad \wTwoEx\defeq \left[\begin{smallmatrix}
        \wTwo\\
        \bm{0}
    \end{smallmatrix}\right]\in \mathds{R}^{\pCommon+\pSpecial{1}}, \quad \wTildeEx\defeq \myMatrix{\wTilde\\\bm{0}}\in \mathds{R}^{\pCommon+\pSpecial{1}},
\end{align}
respectively.
Similarly, define the extended vector of $\qOne$ as
\begin{align}\label{eq.def_qex}
    \qOneEx \defeq \left[\begin{smallmatrix}
        \bm{0}\\
        \qOne
    \end{smallmatrix}\right]\in \mathds{R}^{\pCommon+\pSpecial{1}}.
\end{align}
By Eq.~\eqref{eq.wOne_solution_over} and \cref{eq.def_wex}, we have
\begin{align*}
    \normTwo{\wTwo-\wTilde}^2=&\normTwo{\wTwoEx -\DD\UU \left(\UU^T\UU\right)^{-1}\left(\XX^T \wOne+\ZZ^T \qOne+\eOne\right)}^2\\
    =& \normTwo{\wTwoEx-\DD\UU \left(\UU^T\UU\right)^{-1}\left(\UU^T\myMatrix{\wOne\\\qOne}+\eOne\right)}^2\text{ (by \cref{eq.def_U})}.
\end{align*}
By Assumption~\ref{as.Gaussian} (noise has zero mean), we obtain
\begin{align*}
    \E_{\eOne}\left\langle \wTwoEx - \DD\UU(\UU^T\UU)^{-1}\UU^T\myMatrix{\wOne\\\qOne},\ \DD\UU(\UU^T\UU)^{-1}\eOne \right\rangle = 0,
\end{align*}
and therefore conclude
\begin{align}
    \E_{\eOne}\normTwo{\wTwo - \wTilde}^2= &\underbrace{\normTwo{\wTwoEx - \DD\UU(\UU^T\UU)^{-1}\UU^T\myMatrix{\wOne\\\qOne}}^2}_{\text{Term 1}}+\underbrace{\normTwo{\DD\UU(\UU^T\UU)^{-1}\eOne}^2}_{\text{Term 2}}.\label{eq.temp_012401}
\end{align}

The following lemma shows the expected value of Term~2 of \cref{eq.temp_012401}.
\begin{lemma}\label[lemma]{le.core_noise}
We have
\begin{align*}
    \E_{\UU,\bm{\epsilon}}[\text{Term 2 of \cref{eq.temp_012401}}]=\frac{\pCommon}{\pCommon+\pSpecial{1}}\cdot \frac{\nOne \sigmaOne^2}{\pCommon+\pSpecial{1}-\nOne - 1}.
\end{align*}
\end{lemma}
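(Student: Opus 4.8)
## Proof Plan for Lemma~\ref{le.core_noise}

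The plan is to compute $\E_{\UU,\eOne}\normTwo{\DD\UU(\UU^T\UU)^{-1}\eOne}^2$ by first taking the expectation over the noise $\eOne$ (which is independent of $\UU$), and then over the Gaussian matrix $\UU$. Since $\eOne \sim \mathcal{N}(\bm{0},\sigmaOne^2 \iMatrix{\nOne})$, conditioning on $\UU$ and writing $\mathbf{A} \defeq \DD\UU(\UU^T\UU)^{-1} \in \mathds{R}^{(\pCommon+\pSpecial{1})\times \nOne}$, we have $\E_{\eOne}\normTwo{\mathbf{A}\eOne}^2 = \sigmaOne^2 \Tr(\mathbf{A}\mathbf{A}^T) = \sigmaOne^2 \Tr\!\big((\UU^T\UU)^{-1}\UU^T\DD^T\DD\UU(\UU^T\UU)^{-1}\big)$. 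Because $\DD$ is an orthogonal projection ($\DD^T\DD = \DD$, selecting the first $\pCommon$ coordinates), this simplifies via the trace trick to $\sigmaOne^2 \Tr\!\big(\DD\UU(\UU^T\UU)^{-2}\UU^T\DD\big)$; the key structural fact to exploit is that $\UU = \left[\begin{smallmatrix}\XX\\\ZZ\end{smallmatrix}\right]$ has i.i.d.\ standard Gaussian entries, so any fixed-size sub-block of the relevant matrices has an exchangeable/rotationally-symmetric distribution across coordinates.

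The main step is to reduce the computation to a coordinate-by-coordinate argument, exactly in the style of \cref{le.IW} (specifically the derivation of \cref{eq.temp_010320}). Writing $\bm{e}_i$ for the $i$-th standard basis vector in $\mathds{R}^{\pCommon+\pSpecial{1}}$, one has $\Tr\big(\DD\UU(\UU^T\UU)^{-2}\UU^T\DD\big) = \sum_{i=1}^{\pCommon} \normTwo{(\UU^T\UU)^{-1}\UU^T\bm{e}_i}^2$, since $\DD$ picks out the first $\pCommon$ coordinates. By the permutation-invariance of the distribution of $\UU$ (permuting rows of $\UU$ leaves its law unchanged, as in the argument around \cref{eq.temp_012602}), each term $\E\normTwo{(\UU^T\UU)^{-1}\UU^T\bm{e}_i}^2$ is independent of $i$. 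Summing over all $\pCommon+\pSpecial{1}$ coordinates and using \cref{eq.temp_010303} (with $\bm{\alpha}$ there replaced by a standard Gaussian vector in $\mathds{R}^{\pCommon+\pSpecial{1}}$, or equivalently directly via $\Tr(\E[(\UU^T\UU)^{-1}]) = \frac{\nOne}{\pCommon+\pSpecial{1}-\nOne-1}$ from \cref{eq.temp_010301} applied with $a = \pCommon+\pSpecial{1}$, $b=\nOne$), we get $\sum_{i=1}^{\pCommon+\pSpecial{1}} \E\normTwo{(\UU^T\UU)^{-1}\UU^T\bm{e}_i}^2 = \frac{\nOne}{\pCommon+\pSpecial{1}-\nOne-1}$. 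Hence each individual term equals $\frac{1}{\pCommon+\pSpecial{1}}\cdot\frac{\nOne}{\pCommon+\pSpecial{1}-\nOne-1}$, and multiplying by the $\pCommon$ selected coordinates and by $\sigmaOne^2$ yields $\frac{\pCommon}{\pCommon+\pSpecial{1}}\cdot\frac{\nOne\sigmaOne^2}{\pCommon+\pSpecial{1}-\nOne-1}$, as claimed.

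I expect the main obstacle to be purely bookkeeping rather than conceptual: one must be careful that $\UU \in \mathds{R}^{(\pCommon+\pSpecial{1})\times\nOne}$ is "tall" (overparameterized means $\pCommon+\pSpecial{1}>\nOne$), so it is $\UU^T\UU \in \mathds{R}^{\nOne\times\nOne}$ that is invertible and inverse-Wishart distributed — this matches the hypotheses of \cref{le.IW} with $a = \pCommon+\pSpecial{1}$ and $b = \nOne$, and the condition $a > b+1$ becomes precisely $\pCommon+\pSpecial{1} > \nOne+1$, the regime of \cref{eq.result_1}. A secondary point requiring care is justifying the interchange of $\Tr$ and $\E$ and the exchangeability reduction; both are handled exactly as in the proof of \cref{le.IW}, so I would simply cite that argument ("similar to \cref{eq.temp_012601} and \cref{eq.temp_012602}") rather than repeat it. With these pieces in place the lemma follows in a few lines.
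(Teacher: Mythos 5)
Your proposal is correct and takes essentially the same approach as the paper's proof: both reduce $\E\normTwo{\DD\UU(\UU^T\UU)^{-1}\eOne}^2$ to the fraction $\frac{\pCommon}{\pCommon+\pSpecial{1}}$ of the full squared norm via a coordinate-exchangeability argument for the Gaussian design and then invoke the inverse-Wishart moment from \cref{le.IW}. The only cosmetic difference is that you integrate out $\eOne$ first and use permutation symmetry of the rows of $\UU$ together with \cref{eq.temp_010301}, whereas the paper argues rotational symmetry of the vector $\UU(\UU^T\UU)^{-1}\eOne$ directly and then applies \cref{eq.temp_010302}; both are valid.
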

\begin{proof}
    See \cref{app.proof_core_noise}.
\end{proof}

It remains to estimate Term~1 of \cref{eq.temp_012401}. The following lemma aims to achieve a relatively precise estimation when $\oRatio\left(\normTwo{\wOne}^2+\normTwo{\qOne}^2\right)$ is relatively small with respect to $\similarity$.
\begin{lemma}\label[lemma]{le.core_bias_1}
We have
\begin{align*}
    \left[\similarity-\sqrt{\oRatio\left(\normTwo{\wOne}^2+\normTwo{\qOne}^2\right)}\right]_+^2\leq \E[\text{Term 1 in \cref{eq.temp_012401}}]\leq \overline{b}_1^2 .
\end{align*}
\end{lemma}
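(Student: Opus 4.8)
The plan is to realize Term~1 as a deterministic vector of norm $\similarity$ perturbed by a random vector whose expected squared norm is controlled by \cref{le.bias}, and then to invoke the random-vector Cauchy--Schwarz bounds of \cref{le.Cauchy}. Set $\bm{v}\defeq\myMatrix{\wOne\\\qOne}\in\mathds{R}^{\pCommon+\pSpecial{1}}$, so that $\normTwo{\bm{v}}^2=\normTwo{\wOne}^2+\normTwo{\qOne}^2$ and, by the block structure of $\DD$, $\DD\bm{v}=\wOneEx$. With the projection $\PP=\UU(\UU^T\UU)^{-1}\UU^T$ of \cref{eq.def_PP}, Term~1 of \cref{eq.temp_012401} equals $\normTwo{\wTwoEx-\DD\PP\bm{v}}^2$, and adding and subtracting $\DD\bm{v}=\wOneEx$ yields
\[
\wTwoEx-\DD\PP\bm{v}=(\wTwoEx-\wOneEx)+\DD(\iMatrix{\pCommon+\pSpecial{1}}-\PP)\bm{v}.
\]
The first summand is deterministic with $\normTwo{\wTwoEx-\wOneEx}^2=\normTwo{\wTwo-\wOne}^2=\similarity^2$, so all of the randomness is concentrated in $\bm{b}\defeq\DD(\iMatrix{\pCommon+\pSpecial{1}}-\PP)\bm{v}$.

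Next I would control $\bm{b}$ in expectation. Since $\DD$ is the coordinate projection onto the first $\pCommon$ entries, it is norm-nonincreasing, so $\normTwo{\bm{b}}\le\normTwo{(\iMatrix{\pCommon+\pSpecial{1}}-\PP)\bm{v}}$. The stacked matrix $\UU=\myMatrix{\XX\\\ZZ}$ has i.i.d.\ standard Gaussian entries by \cref{as.Gaussian} and lies in $\mathds{R}^{(\pCommon+\pSpecial{1})\times\nOne}$ with $\pCommon+\pSpecial{1}>\nOne+1$ (the regime of \cref{eq.result_1}), so \cref{le.bias} applies with $\mathbf{K}=\UU$ and $\bm{a}=\bm{v}$ and gives $\E\normTwo{(\iMatrix{\pCommon+\pSpecial{1}}-\PP)\bm{v}}^2=\big(1-\tfrac{\nOne}{\pCommon+\pSpecial{1}}\big)\normTwo{\bm{v}}^2=\oRatio\big(\normTwo{\wOne}^2+\normTwo{\qOne}^2\big)$. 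Hence $\E\normTwo{\bm{b}}^2\le\oRatio\big(\normTwo{\wOne}^2+\normTwo{\qOne}^2\big)$.

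Finally I would apply \cref{le.Cauchy} with $\bm{a}=\wTwoEx-\wOneEx$ (a constant random vector) and the $\bm{b}$ above. For the upper bound, \cref{eq.temp_bias_2} gives $\E[\text{Term 1}]\le\big(\similarity+\sqrt{\E\normTwo{\bm{b}}^2}\big)^2\le\big(\similarity+\sqrt{\oRatio(\normTwo{\wOne}^2+\normTwo{\qOne}^2)}\big)^2=\overline{b}_1^2$. For the lower bound, \cref{eq.temp_bias_3} gives $\E[\text{Term 1}]\ge\big(\similarity-\sqrt{\E\normTwo{\bm{b}}^2}\big)^2$; writing $s\defeq\sqrt{\E\normTwo{\bm{b}}^2}$ and $S\defeq\sqrt{\oRatio(\normTwo{\wOne}^2+\normTwo{\qOne}^2)}$, one has $s\le S$, so if $\similarity\ge S$ then $0\le\similarity-S\le\similarity-s$ and $\E[\text{Term 1}]\ge(\similarity-S)^2=[\similarity-S]_+^2$, whereas if $\similarity<S$ then $[\similarity-S]_+^2=0\le\E[\text{Term 1}]$; either way the claimed lower bound holds.

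I do not expect a deep obstacle here: the argument is essentially an assembly of \cref{le.bias} and \cref{le.Cauchy}. The two points that need care are (i) the case split with the $[\,\cdot\,]_+$ in the lower bound, which is forced precisely because we replaced $\E\normTwo{\bm{b}}^2$ by the possibly loose upper bound $\oRatio\normTwo{\bm{v}}^2$ obtained by dropping $\DD$, and (ii) checking that the hypothesis $\pCommon+\pSpecial{1}>\nOne+1$ of \cref{le.bias} is exactly the overparameterized regime in which the lemma is stated.
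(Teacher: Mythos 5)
Your proposal is correct and follows essentially the same route as the paper's proof: the same decomposition of Term~1 into the deterministic vector $\wTwoEx-\wOneEx$ (of norm $\similarity$) plus $\DD(\iMatrix{\pCommon+\pSpecial{1}}-\PP)\myMatrix{\wOne\\\qOne}$, controlled via \cref{le.bias} after dropping $\DD$, and combined with \cref{le.Cauchy}. The only cosmetic difference is that the paper discards $\DD$ before applying Cauchy--Schwarz in the upper bound while you do so afterwards; both yield $\overline{b}_1^2$, and your explicit case split for the $[\,\cdot\,]_+$ in the lower bound matches the paper's chain of inequalities.
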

\begin{proof}
    See \cref{app.proof_core_bias_1}.
\end{proof}

However, \cref{le.core_bias_1} may be loose in some other cases, which requires some finer estimation on Term~1 of \cref{eq.temp_012401} for those cases.
Define
\begin{align}
    &A\defeq \wTwoEx-\wOneEx,\label{eq.def_A}\\
    &B\defeq \wOneEx- \DD\UU(\UU^T\UU)^{-1}\XX^T\wOne,\label{eq.def_B}\\
    &C\defeq \DD\UU(\UU^T\UU)^{-1}\ZZ^T\qOne.\label{eq.def_C}
\end{align}
We then obtain
\begin{align}
    \text{Term 1 in \cref{eq.temp_012401}}= \normTwo{ A+ B + C}^2\label{eq.temp_012407}
\end{align}
The following lemmas concern the estimation of $A$, $B$, and $C$.
\begin{lemma}\label[lemma]{le.core_bias_2}
We have
\begin{align*}
    \left[\normTwo{\wTwo}-\sqrt{1-r}\normTwo{\wOne}\right]_+^2\leq\E\left[\normTwo{A+B}^2\right]\leq \left(\normTwo{\wTwo}+\sqrt{1-r}\normTwo{\wOne}\right)^2 .
\end{align*}
\end{lemma}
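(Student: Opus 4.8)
\textbf{The plan} is to collapse $A+B$ into a single projection expression and then bound its expected squared norm using the Cauchy--Schwarz inequality of \cref{le.Cauchy} together with the projection-variance identity of \cref{le.bias}; note that neither $A$ nor $B$ depends on the noise $\eOne$, so every expectation below is over $\UU$ (equivalently over $\XX,\ZZ$). First I would reduce $A+B$ algebraically. Since $\UU^T=[\XX^T\mid\ZZ^T]$ and $\wOneEx$ is $\wOne$ padded with zeros (see \cref{eq.def_wex}), we have $\XX^T\wOne=\UU^T\wOneEx$, so with $\PP$ as in \cref{eq.def_PP} one gets $\DD\UU(\UU^T\UU)^{-1}\XX^T\wOne=\DD\PP\wOneEx$, i.e.\ $B=\DD(\iMatrix{\pCommon+\pSpecial{1}}-\PP)\wOneEx$. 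Because $\wOneEx$, $\wTwoEx$, and hence $A=\wTwoEx-\wOneEx$ are all supported on the first $\pCommon$ coordinates, $\DD$ fixes each of them, and the $\wOneEx$ contributions cancel:
\begin{align*}
A+B=(\wTwoEx-\wOneEx)+(\wOneEx-\DD\PP\wOneEx)=\wTwoEx-\DD\PP\wOneEx .
\end{align*}

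Next I would apply \cref{eq.temp_bias_2,eq.temp_bias_3} of \cref{le.Cauchy} with the deterministic vector $\wTwoEx$ and the random vector $-\DD\PP\wOneEx$, using $\normTwo{\wTwoEx}=\normTwo{\wTwo}$ and $\E\normTwo{A+B}^2\ge 0$, to get
\begin{align*}
\left[\normTwo{\wTwo}-\sqrt{\E\normTwo{\DD\PP\wOneEx}^2}\right]_+^2\le \E\normTwo{A+B}^2\le \left(\normTwo{\wTwo}+\sqrt{\E\normTwo{\DD\PP\wOneEx}^2}\right)^2 .
\end{align*}
It then remains to bound the random term. Since $\DD$ is a coordinate projection, hence a contraction, $\normTwo{\DD\PP\wOneEx}^2\le\normTwo{\PP\wOneEx}^2$; and since $\UU$ has i.i.d.\ standard Gaussian entries with $\pCommon+\pSpecial{1}>\nOne+1$, \cref{le.bias} gives $\E\normTwo{\PP\wOneEx}^2=\frac{\nOne}{\pCommon+\pSpecial{1}}\normTwo{\wOneEx}^2=(1-\oRatio)\normTwo{\wOne}^2$, so $\sqrt{\E\normTwo{\DD\PP\wOneEx}^2}\le\sqrt{1-\oRatio}\,\normTwo{\wOne}$. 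Substituting this into the previous display --- using that $t\mapsto[\normTwo{\wTwo}-t]_+^2$ is nonincreasing (for the lower bound) and $t\mapsto(\normTwo{\wTwo}+t)^2$ is increasing (for the upper bound) --- yields exactly the claimed inequalities.

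\textbf{The hard part} is the algebraic cancellation: the lemma really rests on noticing that $\XX^T\wOne=\UU^T\wOneEx$ and that $\DD$ acts as the identity on vectors supported on the common coordinates, which is what makes $A$ and $B$ telescope down to $\wTwoEx-\DD\PP\wOneEx$. Everything afterward is a routine pairing of \cref{le.Cauchy} with \cref{le.bias}; the only point requiring care is that passing from $\E\normTwo{\DD\PP\wOneEx}^2$ to its upper bound $(1-\oRatio)\normTwo{\wOne}^2$ preserves the direction of \emph{both} the lower and the upper estimate, which follows from the monotonicities noted above.
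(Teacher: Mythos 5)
Your proposal is correct and follows essentially the same route as the paper's proof: rewrite $A+B=\wTwoEx-\DD\PP\wOneEx$ via $\XX^T\wOne=\UU^T\wOneEx$, apply \cref{le.Cauchy} to split off $\wTwoEx$, use the contraction $\normTwo{\DD\bm{a}}\leq\normTwo{\bm{a}}$, and evaluate $\E\normTwo{\PP\wOneEx}^2=(1-\oRatio)\normTwo{\wOne}^2$ by \cref{le.bias}. The only difference is cosmetic (you bound the random term first and then invoke monotonicity, while the paper substitutes inside the chain of inequalities), so no further comment is needed.
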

\begin{proof}
    See \cref{app.proof_core_bias_2}
\end{proof}

\begin{lemma}\label[lemma]{le.term_1}
We have
\begin{align*}
    &\left[1-\frac{2\nOne}{\pCommon+\pSpecial{1}}\right]_+\cdot\normTwo{\wOne}^2\leq \E \left[\normTwo{B}^2\right] \leq \oRatio\normTwo{\wOne}^2.
\end{align*}
% Here $[\cdot]^+$ denotes $\max\{0,\cdot\}$.
% We thus have $\normTwo{B}^2$ converges in probability to $\normTwo{\wOne}^2$ as $\pCommon+\pSpecial{1}\to \infty$, i.e.,
% \begin{align*}
%     &\normTwo{B}^2\overset{P}{\to}\normTwo{\wOne}^2.
% \end{align*}
\end{lemma}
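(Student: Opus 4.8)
The plan is to first rewrite $B$ in a form to which \cref{le.bias} applies directly. Since $\UU=\myMatrix{\XX\\\ZZ}$, we have $\UU^T\wOneEx=\XX^T\wOne$, and since $\wOneEx$ is supported on its first $\pCommon$ coordinates, $\DD\wOneEx=\wOneEx$. Writing $\PP=\UU(\UU^T\UU)^{-1}\UU^T$ for the orthogonal projection onto the column space of $\UU$ (as in \cref{eq.def_PP}), the definition \cref{eq.def_B} becomes
\[
    B=\wOneEx-\DD\PP\wOneEx=\DD\left(\iMatrix{\pCommon+\pSpecial{1}}-\PP\right)\wOneEx .
\]
Thus $B$ is the image of $\wOneEx$ under the composition of the two orthogonal projections $\DD$ and $\iMatrix{\pCommon+\pSpecial{1}}-\PP$. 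Because $\UU$ has i.i.d.\ standard Gaussian entries, $\PP$ is rotationally invariant, so \cref{le.bias} applies with $p=\pCommon+\pSpecial{1}$, $n=\nOne$, and $\bm{a}=\wOneEx$ (this requires $\pCommon+\pSpecial{1}>\nOne+1$, which holds in the overparameterized regime under consideration, and also makes $\UU^T\UU$ invertible so that \cref{eq.wOne_solution_over} is valid).

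For the upper bound, $\DD$ is an orthogonal projection and hence norm-nonincreasing, so $\normTwo{B}^2\le\normTwo{(\iMatrix{\pCommon+\pSpecial{1}}-\PP)\wOneEx}^2$; taking expectations and using the first identity of \cref{le.bias} gives $\E\normTwo{B}^2\le\left(1-\frac{\nOne}{\pCommon+\pSpecial{1}}\right)\normTwo{\wOneEx}^2=\oRatio\normTwo{\wOne}^2$, since $\normTwo{\wOneEx}=\normTwo{\wOne}$.

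For the lower bound, note first that the crude route — applying the Cauchy--Schwarz/triangle estimate of \cref{le.Cauchy} to $B=\wOneEx-\DD\PP\wOneEx$ together with $\E\normTwo{\DD\PP\wOneEx}^2\le\E\normTwo{\PP\wOneEx}^2=\frac{\nOne}{\pCommon+\pSpecial{1}}\normTwo{\wOne}^2$ — only yields $\left(1-\sqrt{\nOne/(\pCommon+\pSpecial{1})}\right)^2\normTwo{\wOne}^2$, which does not give the claimed bound in general. Instead I would exploit that $\DD$ and $\GG=\iMatrix{\pCommon+\pSpecial{1}}-\DD$ are complementary orthogonal projections, so the Pythagorean identity $\normTwo{\DD u}^2=\normTwo{u}^2-\normTwo{\GG u}^2$ holds for every $u$. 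Taking $u=(\iMatrix{\pCommon+\pSpecial{1}}-\PP)\wOneEx$ gives
\[
    \normTwo{B}^2=\normTwo{(\iMatrix{\pCommon+\pSpecial{1}}-\PP)\wOneEx}^2-\normTwo{\GG(\iMatrix{\pCommon+\pSpecial{1}}-\PP)\wOneEx}^2 .
\]
Since $\GG\wOneEx=\bm{0}$ (the support of $\wOneEx$ lies in the block annihilated by $\GG$), we have $\GG(\iMatrix{\pCommon+\pSpecial{1}}-\PP)\wOneEx=-\GG\PP\wOneEx$, hence $\normTwo{\GG(\iMatrix{\pCommon+\pSpecial{1}}-\PP)\wOneEx}^2\le\normTwo{\PP\wOneEx}^2$ because $\GG$ is a projection. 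Taking expectations and invoking both identities of \cref{le.bias},
\[
    \E\normTwo{B}^2\ge\left(1-\frac{\nOne}{\pCommon+\pSpecial{1}}\right)\normTwo{\wOne}^2-\frac{\nOne}{\pCommon+\pSpecial{1}}\normTwo{\wOne}^2=\left(1-\frac{2\nOne}{\pCommon+\pSpecial{1}}\right)\normTwo{\wOne}^2 ,
\]
and combining with the trivial bound $\E\normTwo{B}^2\ge0$ produces the $[\,\cdot\,]_+$ in the statement.

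The main obstacle is precisely this lower bound: a naive norm-difference estimate loses a square root and is too lossy, so the argument hinges on the structural fact that $\GG$ annihilates $\wOneEx$, which lets one evaluate the ``wrong-block'' contribution $\normTwo{\GG(\iMatrix{\pCommon+\pSpecial{1}}-\PP)\wOneEx}^2$ exactly in terms of $\normTwo{\PP\wOneEx}^2$ via the Pythagorean identity for the complementary projections $\DD,\GG$. Everything else is bookkeeping plus two applications of \cref{le.bias}.
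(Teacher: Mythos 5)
Your proof is correct. The upper bound is exactly the paper's argument: $B=\DD(\iMatrix{\pCommon+\pSpecial{1}}-\PP)\wOneEx$, drop the norm-nonincreasing $\DD$, and apply \cref{le.bias}. For the lower bound you take a genuinely different route from the paper, though both land on the same intermediate inequality. The paper decomposes $B=\wOneEx-\DD\PP\wOneEx$ orthogonally with respect to the line spanned by $\wOneEx$, discards the component orthogonal to $\wOneEx$, expands the remaining square, and identifies the cross term via $\langle\DD\PP\wOneEx,\wOneEx\rangle=\normTwo{\PP\wOneEx}^2$ (using $\DD\wOneEx=\wOneEx$), arriving at $\normTwo{B}^2\geq\normTwo{\wOneEx}^2-2\normTwo{\PP\wOneEx}^2$. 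You instead use the block-Pythagorean identity $\normTwo{\DD u}^2=\normTwo{u}^2-\normTwo{\GG u}^2$ with $u=(\iMatrix{\pCommon+\pSpecial{1}}-\PP)\wOneEx$, together with $\GG\wOneEx=\bm{0}$, to get the exact identity $\normTwo{B}^2=\normTwo{(\iMatrix{\pCommon+\pSpecial{1}}-\PP)\wOneEx}^2-\normTwo{\GG\PP\wOneEx}^2$, and then bound $\normTwo{\GG\PP\wOneEx}^2\leq\normTwo{\PP\wOneEx}^2$; since $\normTwo{(\iMatrix{\pCommon+\pSpecial{1}}-\PP)\wOneEx}^2=\normTwo{\wOneEx}^2-\normTwo{\PP\wOneEx}^2$, this is the same pointwise bound, and both proofs finish with the two identities of \cref{le.bias}. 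Your version is arguably cleaner in that the only loss occurs in a single, clearly identified step ($\GG$ applied to $\PP\wOneEx$), it makes explicit where the slack in the lemma comes from, and it avoids introducing the auxiliary vectors $\bm{a},\bm{b}$ and the geometric picture of Fig.~\ref{fig.projection}; the paper's version generalizes more readily to settings where $\DD$ is not a coordinate projection complementary to $\GG$. Your remark that the naive triangle-inequality estimate only gives $\bigl(1-\sqrt{\nOne/(\pCommon+\pSpecial{1})}\bigr)^2\normTwo{\wOne}^2$ correctly identifies why a finer argument is needed.
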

\begin{proof}
    See \cref{app.proof_term_1}.
\end{proof}

\begin{lemma}\label[lemma]{le.term_2}
We have
\begin{align*}
    \E \left[\normTwo{C}^2\right] \leq \min\left\{\oRatio, 1-\oRatio\right\}\normTwo{\qOne}^2.
\end{align*}
\end{lemma}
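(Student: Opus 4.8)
The plan is to collapse $C$ into a single projection applied to the extended vector $\qOneEx$, and then extract two complementary upper bounds by exploiting that $\DD$ is an orthogonal coordinate projection (hence a contraction). Taking the smaller of the two bounds yields the stated estimate, and both bounds reduce to direct applications of \cref{le.bias}.

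First I would record the elementary identity $\ZZ^T\qOne = \UU^T\qOneEx$, which follows immediately from $\UU^T = [\,\XX^T\ \ZZ^T\,]$ together with $\qOneEx = [\,\bm 0^T\ \qOne^T\,]^T$ (see \cref{eq.def_U,eq.def_qex}). Substituting this into the definition of $C$ in \cref{eq.def_C} and using \cref{eq.def_PP} gives $C = \DD\,\UU(\UU^T\UU)^{-1}\UU^T\qOneEx = \DD\PP\qOneEx$, where $\PP$ is the orthogonal projection onto the column space of $\UU$.

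For the first bound, since $\DD$ is the orthogonal projection onto the first $\pCommon$ coordinates we have $\normTwo{\DD\bm v}\le\normTwo{\bm v}$ for every $\bm v$, so $\normTwo{C}^2 \le \normTwo{\PP\qOneEx}^2$. Taking expectations and applying \cref{le.bias} (valid because $\pCommon+\pSpecial{1}>\nOne+1$ in the overparameterized regime) with $\mathbf K=\UU$ and $\bm a=\qOneEx$ gives $\E\normTwo{C}^2 \le \tfrac{\nOne}{\pCommon+\pSpecial{1}}\normTwo{\qOneEx}^2 = (1-\oRatio)\normTwo{\qOne}^2$, using $\normTwo{\qOneEx}=\normTwo{\qOne}$ and $1-\oRatio=\nOne/(\pCommon+\pSpecial{1})$. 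For the second bound I would use $\DD\qOneEx=\bm 0$ (the top $\pCommon$ block of $\qOneEx$ vanishes), which lets me rewrite $C = \DD\PP\qOneEx = \DD\PP\qOneEx - \DD\qOneEx = -\DD(\iMatrix{\pCommon+\pSpecial{1}}-\PP)\qOneEx$. Again $\normTwo{C}^2 \le \normTwo{(\iMatrix{\pCommon+\pSpecial{1}}-\PP)\qOneEx}^2$, and the complementary part of \cref{le.bias} yields $\E\normTwo{C}^2 \le \bigl(1-\tfrac{\nOne}{\pCommon+\pSpecial{1}}\bigr)\normTwo{\qOne}^2 = \oRatio\normTwo{\qOne}^2$. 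Combining the two bounds gives $\E\normTwo{C}^2\le\min\{\oRatio,1-\oRatio\}\normTwo{\qOne}^2$.

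The only real ``trick'' here, and hence the one step worth flagging, is recognizing that $\ZZ^T\qOne = \UU^T\qOneEx$ so that $C$ becomes $\DD\PP\qOneEx$, and then observing that both $\PP$ and $\iMatrix{\pCommon+\pSpecial{1}}-\PP$ can be removed at no cost because $\DD$ is a contraction and annihilates $\qOneEx$. After that, there are no concentration arguments or delicate estimates — everything follows from the two exact identities in \cref{le.bias}.
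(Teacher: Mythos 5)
Your proof is correct, and it takes a genuinely different (and shorter) route than the paper's. The paper also reduces $C$ to $\DD\PP\qOneEx$, but from there it builds an explicit geometric decomposition: it writes $C=\PP\qOneEx-\GG\PP\qOneEx$, introduces the component $\bm{a}$ of $\PP\qOneEx$ orthogonal to $\qOneEx$, verifies an orthogonality identity to conclude $\normTwo{C}^2\le\normTwo{\bm{a}}^2$, and then computes exactly $\normTwo{\bm{a}}^2=\normTwo{\qOneEx}^2\,t(1-t)$ with $t=\normTwo{\PP\qOneEx}^2/\normTwo{\qOneEx}^2$, finally relaxing $t(1-t)$ to $\min\{t,1-t\}$ and invoking \cref{le.bias}. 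You instead observe that $\DD$ is a contraction and that $\DD\qOneEx=\bm{0}$, so that $C=\DD\PP\qOneEx=-\DD(\iMatrix{\pCommon+\pSpecial{1}}-\PP)\qOneEx$ admits the two complementary pointwise bounds $\normTwo{C}\le\normTwo{\PP\qOneEx}$ and $\normTwo{C}\le\normTwo{(\iMatrix{\pCommon+\pSpecial{1}}-\PP)\qOneEx}$, each of which feeds directly into \cref{le.bias}. Your argument dispenses with the orthogonal-decomposition machinery entirely; what the paper's longer route buys is the pointwise product bound $t(1-t)$, which is strictly tighter than $\min\{t,1-t\}$, but since the paper immediately discards that extra tightness, both arguments land on exactly the stated inequality. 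One small point worth making explicit in a final write-up: \cref{le.bias} requires $\pCommon+\pSpecial{1}>\nOne+1$, which you correctly note is the regime in which \cref{eq.result_1} (and hence this lemma) is applied.
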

\begin{proof}
    See \cref{app.proof_term_2}.
\end{proof}

We therefore obtain
\begin{align*}
    &\E[\text{Term 1 of \cref{eq.temp_012401}}] \\
    =& \E [\normTwo{A+B+C}^2]\text{ (by \cref{eq.temp_012407})}\\
    \leq & \left(\sqrt{\E [\normTwo{A+B}^2]}+\sqrt{\E[\normTwo{C}^2]}\right)^2\text{ (by \cref{le.Cauchy})}\numberthis \label{eq.temp_012410}\\
    \leq & \left(\normTwo{\wTwo}+\sqrt{1-r}\normTwo{\wOne}+\sqrt{\min\{\oRatio,1-\oRatio\}}\normTwo{\qOne}\right)^2\text{ (by \cref{le.core_bias_2} and \cref{le.term_2})}\\
    =&\overline{b}_2^2,\numberthis \label{eq.temp_012501}
\end{align*}
and
\begin{align*}
    &\E[\text{Term 1 of \cref{eq.temp_012401}}] \\
    =& \E [\normTwo{A+B+C}^2]\text{ (by \cref{eq.temp_012407})}\\
    \geq & \left[\sqrt{\E [\normTwo{A+B}^2]}-\sqrt{\E[\normTwo{C}^2]}\right]_+^2\text{ (by \cref{le.Cauchy})}\numberthis \label{eq.temp_012411}\\
    \geq & \left[\normTwo{\wTwo}-\sqrt{1-r}\normTwo{\wOne}-\sqrt{\min\{\oRatio,1-\oRatio\}}\normTwo{\qOne}\right]_+^2\text{ (by \cref{le.core_bias_2} and \cref{le.term_2})}.
\end{align*}

We also have
\begin{align*}
    &\E[\text{Term 1 of \cref{eq.temp_012401}}]\\
    \leq &\left(\sqrt{\E [\normTwo{A+B}^2]}+\sqrt{\E[\normTwo{C}^2]}\right)^2\text{ (by \cref{eq.temp_012410})}\\
    \leq & \left(\sqrt{\E[\normTwo{A}^2]}+\sqrt{\E[\normTwo{B}^2]}+\sqrt{\E[\normTwo{C}^2]}\right)^2\text{ (by \cref{le.Cauchy})}\\
    \leq & \left(\similarity+\sqrt{\oRatio}\normTwo{\wOne}+\sqrt{\min\{\oRatio,1-\oRatio\}}\normTwo{\qOne}\right)^2\text{ (by \cref{eq.def_similarity}, \cref{le.term_1}, and \cref{le.term_2})}\\
    =&\overline{b}_3^2,\numberthis \label{eq.temp_012502}
\end{align*}
and
\begin{align*}
    &\E[\text{Term 1 of \cref{eq.temp_012401}}]\\
    \geq &\left[\sqrt{\E [\normTwo{A+B}^2]}-\sqrt{\E[\normTwo{C}^2]}\right]_+^2\text{ (by \cref{eq.temp_012411})}\\
    \geq & \left[\sqrt{\E[\normTwo{B}^2]}-\sqrt{\E[\normTwo{A}^2]}-\sqrt{\E[\normTwo{C}^2]}\right]_+^2\text{ (by \cref{le.Cauchy})}\\
    \geq & \left[\sqrt{\oRatio}\normTwo{\wOne}-\similarity-\sqrt{\min\{\oRatio,1-\oRatio\}}\normTwo{\qOne}\right]_+^2\text{ (by \cref{eq.def_similarity}, \cref{le.term_1}, and \cref{le.term_2})}.
\end{align*}

By \cref{le.core_bias_1}, \cref{eq.temp_012501}, and \cref{eq.temp_012502}, we therefore conclude
\begin{align}
    \E[\text{Term 1 of Eq.~\eqref{eq.temp_012401}}]\leq \min_{i=1,2,3}\overline{b}_i^2.\label{eq.temp_012503}
\end{align}
By \cref{eq.temp_012503}, \cref{le.core_noise}, and \cref{eq.temp_012401}, we thus have \cref{eq.result_1}.

% \begin{lemma}
% We have
% \begin{align*}
%     \abs{\E [\text{Term 4 of Eq.~\eqref{eq.decomp}}]}\leq 2\sqrt{\E [\text{Term 1 of Eq.~\eqref{eq.decomp}}]\cdot \E [\text{Term 2 of Eq.~\eqref{eq.decomp}}]}
% \end{align*}
% \end{lemma}
% \begin{proof}
% This result directly follows by Lemma~\ref{le.Cauchy}.
% \end{proof}

\subsection*{The underparameterized situation}
When $\pCommon+\pSpecial{1}<\nOne$, the solution that minimizes the training error is given by
\begin{align*}
    \wTildeEx = &\DD\left(\UU\UU^T\right)^{-1}\UU\yOne\\
    =& \DD\left(\UU\UU^T\right)^{-1}\UU\left(\XX^T\wOne + \ZZ^T\qOne + \eOne\right)\\
    =&\DD\left(\UU\UU^T\right)^{-1}\UU\left(\UU^T\myMatrix{\wOne\\\qOne} + \eOne\right)\\
    =& \wOneEx + \DD \left(\UU\UU^T\right)^{-1}\UU\eOne\text{ (by \cref{eq.def_wex} and \cref{eq.def_D})}.
\end{align*}
Thus, we have
\begin{align}\label{eq.temp_012412}
    \normTwo{\wTwo-\wTilde}^2=\normTwo{(\wTwoEx-\wOneEx)-\DD \left(\UU\UU^T\right)^{-1}\UU\eOne}^2.
\end{align}
By \cref{as.Gaussian}, we know that $\eOne$ is independent of $\UU$ and has zero mean. We therefore obtain
\begin{align}\label{eq.temp_012413}
    \E_{\eOne} \left\langle\wTwoEx-\wTildeEx,\ \DD \left(\UU\UU^T\right)^{-1}\UU\eOne\right\rangle=0.
\end{align}
Recalling the definition of $\similarity$ in \cref{eq.def_similarity}, by \cref{eq.temp_012412} and \cref{eq.temp_012413}, we thus have
\begin{align}\label{eq.temp_012415}
    \E \normTwo{\wTwo-\wTilde}^2=\similarity^2+\E\normTwo{\DD \left(\UU\UU^T\right)^{-1}\UU\eOne}^2.
\end{align}
It remains to estimate $\E\normTwo{\DD \left(\UU\UU^T\right)^{-1}\UU\eOne}^2$. To that end, we derive
\begin{align*}
    &\E\normTwo{\DD \left(\UU\UU^T\right)^{-1}\UU\eOne}^2\\
    =&\E \left[\left(\DD \left(\UU\UU^T\right)^{-1}\UU\eOne\right)^T\DD \left(\UU\UU^T\right)^{-1}\UU\eOne\right]\\
    =&\E\Tr\left(\left(\DD \left(\UU\UU^T\right)^{-1}\UU\eOne\right)^T\DD \left(\UU\UU^T\right)^{-1}\UU\eOne\right)\\
    =&\E \Tr\left(\DD \left(\UU\UU^T\right)^{-1}\UU\eOne\eOne^T\UU^T(\UU\UU^T)^{-1}\DD\right)\\
    &\text{ (by trace trick that $\Tr(\mathbf{AB})=\Tr(\mathbf{BA})$)}\\
    =&\E_{\UU}\Tr\left(\DD \left(\UU\UU^T\right)^{-1}\UU\E_{\eOne}[\eOne\eOne^T]\UU^T(\UU\UU^T)^{-1}\DD\right)\\
    =&\sigmaOne^2\E\Tr\left(\DD(\UU\UU^T)^{-1}\DD\right)\text{ ($\E[\eOne\eOne^T]=\sigmaOne^2\iMatrix{\nOne}$ by Assumption~\ref{as.Gaussian})}\\
    =&\Tr\left(\DD\E[(\UU\UU^T)^{-1}]\DD\right)\\
    =&\frac{\pCommon}{\pCommon+\pSpecial{1}}\cdot\frac{(\pCommon+\pSpecial{1})\sigmaOne^2}{\nOne-(\pCommon+\pSpecial{1})-1}\text{ (by Lemma~\ref{le.IW})}\\
    =&\frac{\pCommon\sigmaOne^2}{\nOne-(\pCommon+\pSpecial{1})-1}.\numberthis \label{eq.temp_012414}
\end{align*}
Substituting \cref{eq.temp_012414} into \cref{eq.temp_012415}, we therefore obtain
\begin{align*}
    \E \normTwo{\wTwo-\wTilde}^2=\similarity^2+\frac{\pCommon\sigmaOne^2}{\nOne-(\pCommon+\pSpecial{1})-1},
\end{align*}
i.e., we have \cref{eq.result_2}.

Combining our proof for the overparameterized situation and the underparameterized situation, \cref{thm.w_err} thus follows.

%----------------------------------------
% The rest of this file is for the proofs
%----------------------------------------

% \subsection{Geometric Illustration for the Proofs}

\begin{figure}[t]
    \centering
    \includegraphics[width=0.5\textwidth]{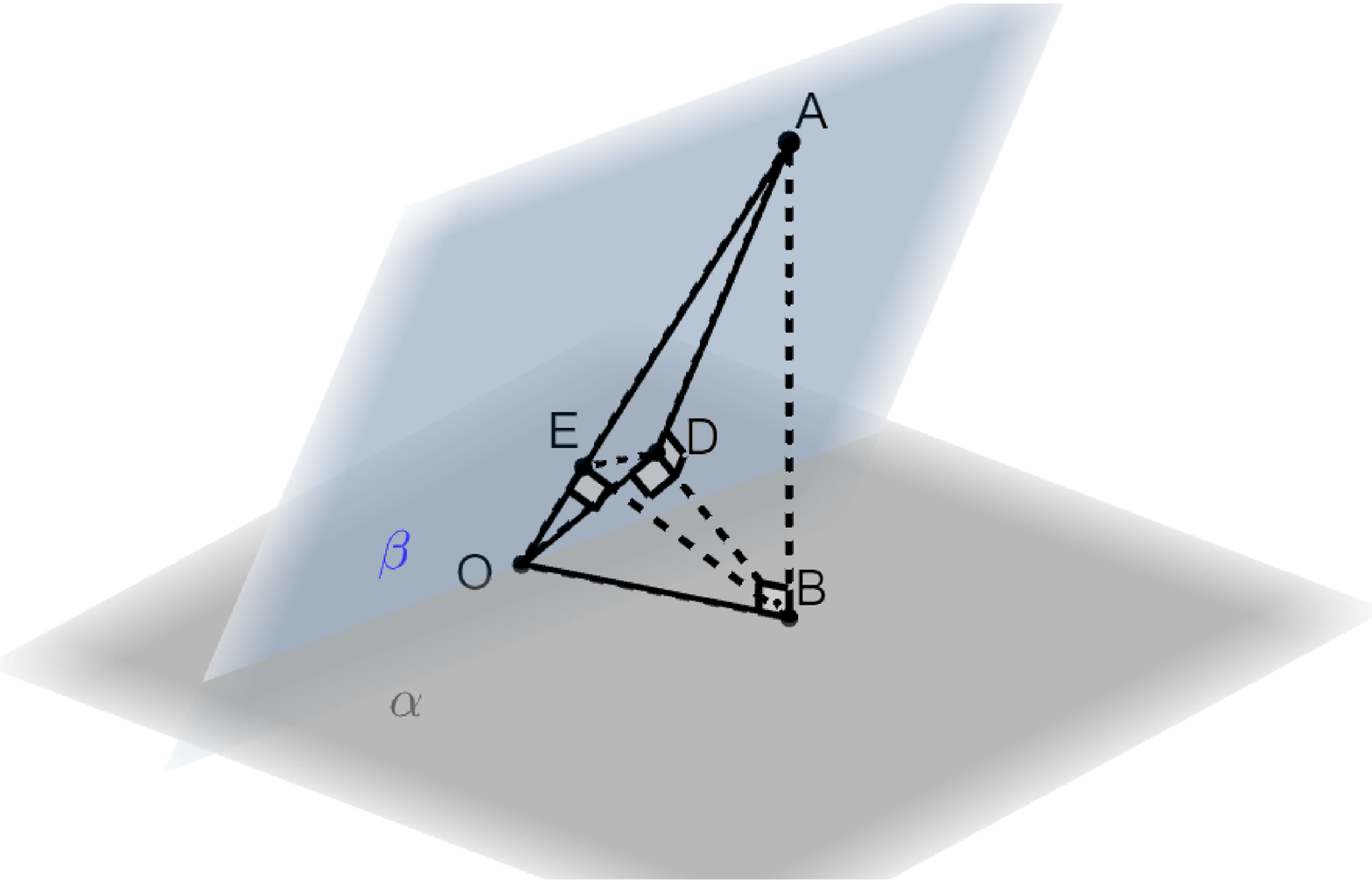}
    \caption{Geometric illustration of some vectors and projections used in the proof.}
    \label{fig.projection}
\end{figure}

\renewcommand{\arraystretch}{1.5}
\begin{table}[t]
\centering
\begin{tabular}{| c | c |} 
 \hline
 Notation in Fig.~\ref{fig.projection} & Meaning \\
 \hline
 plane $\alpha$ & column space of $\UU$\\
 \hline
 plane $\beta$ & subspace spanned by $\bm{e}_1,\cdots,\bm{e}_{\pCommon}$\\
 \hline
 vector $\overrightarrow{\mathrm{OA}}$ & $\wOneEx$\\
 \hline
 vector $\overrightarrow{\mathrm{OB}}$ & $\PP\wOneEx$\\
 \hline
 vector $\overrightarrow{\mathrm{OD}}$ & $\DD\PP\wOneEx$\\
 \hline
 vector $\overrightarrow{\mathrm{DA}}$ & $\wOneEx-\DD\PP\wOneEx$\\
 \hline
 vector $\overrightarrow{\mathrm{OE}}$ & $\langle\wOneEx,\ \DD\PP\wOneEx\rangle \cdot \wOneEx/\normTwo{\wOneEx}^2$\\
 \hline
 vector $\overrightarrow{\mathrm{EA}}$ & $\wOneEx-\langle\wOneEx,\ \DD\PP\wOneEx\rangle \cdot \wOneEx/\normTwo{\wOneEx}^2$\\
 \hline
\end{tabular}
\caption{Interpretation of notations in Fig.~\ref{fig.projection} for Lemma~\ref{le.term_1}.}
\label{table.notation_in_proj_fig}
\end{table}

\renewcommand{\arraystretch}{1.5}
\begin{table}[t]
\centering
\begin{tabular}{| c | c |} 
 \hline
 Notation in Fig.~\ref{fig.projection} & Meaning \\
 \hline
 plane $\alpha$ & column space of $\UU$\\
 \hline
 plane $\beta$ & subspace spanned by $\bm{e}_{\pCommon+1},\cdots,\bm{e}_{\pCommon+\pSpecial{1}}$\\
 \hline
 vector $\overrightarrow{\mathrm{OA}}$ & $\qOneEx$\\
 \hline
 vector $\overrightarrow{\mathrm{OB}}$ & $\PP\qOneEx$\\
 \hline
 vector $\overrightarrow{\mathrm{OD}}$ & $\GG\PP\qOneEx$\\
 \hline
 vector $\overrightarrow{\mathrm{DB}}$ & $\DD \PP \qOneEx$\\
 \hline
\end{tabular}
\caption{Interpretation of notations in Fig.~\ref{fig.projection} for Lemma~\ref{le.term_2}.}
\label{table.notation_in_proj_fig_term_2}
\end{table}

\subsection{Tightness of Eq.~\texorpdfstring{\eqref{eq.result_1}}{(8)}}
% The impact of noise $\eOne$ to $\EwOneDiff$ is characterized by $b_{\text{noise}}$ and is precise. It remains to discuss the tightness of the bias estimation, i.e., the impact of $\wOne$, $\wTwo$, and $\qOne$. 

All our estimations except $\EwOneDiffNoNoise$ are precise. 
\begin{figure}[t]
    \centering
    \includegraphics[width=0.8\textwidth]{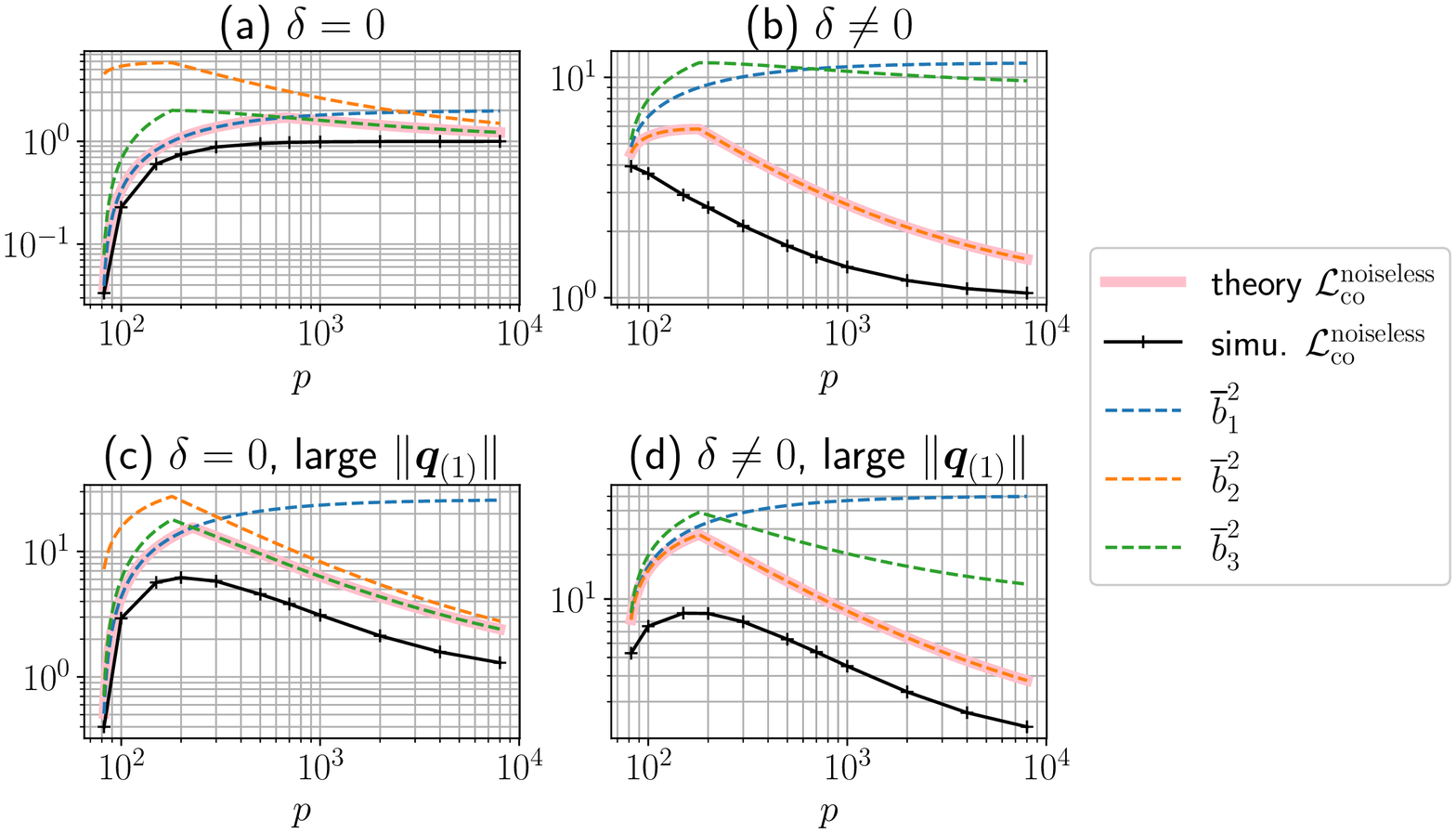}
    \caption{The curves of $\EwOneDiffNoNoise$ with respect to $\pCommon$ in the overparameterized region, where $\nOne=100$, $\normTwo{\wOne}=1$, and $\pSpecial{1}=20$. Each data point is the average of $20$ runs with different random seeds (randomness is on input features). Some other settings for each subfigure: (a) $\wOne=\wTwo$ and $\normTwo{\qOne}=1$; (b) $\wOne=-\wTwo$, and $\normTwo{\qOne}=1$; (c) $\wOne=\wTwo$ and $\normTwo{\qOne}=5$; (d) $\wOne=-\wTwo$, and $\normTwo{\qOne}=5$.}\label{fig.tightness}
\end{figure}
It remains to check the tightness of \cref{eq.result_1}, i.e.,  $\min_{i=1,2,3}\overline{b}_i^2$. Here  $\overline{b}_1$, $\overline{b}_2$, and $\overline{b}_3$ are for different situations. 

First, when $\oRatio$ and $\similarity$ are small, $\overline{b}_1$ is the smallest. In this case, $\overline{b}_1\leq \overline{b}_3$ because $\sqrt{\oRatio\left(\normTwo{\wOne}^2 + \normTwo{\qOne}^2\right)}\leq \sqrt{\oRatio}\normTwo{\wOne}+\sqrt{r}\normTwo{\qOne}$; and $\overline{b}_1\leq \overline{b}_2$ because $\oRatio=\normTwo{\wTwo-\wOne}\leq \normTwo{\wTwo}+\normTwo{\wOne}\approx \normTwo{\wTwo}+\normTwo{\wOne}$ when $\oRatio$ is small. In \cref{fig.tightness}(a)(c) (where $\similarity=0$), the curves of $\overline{b}_1^2$ (blue dashed curves) are the closest ones to the empirical value of $\EwOneDiffNoNoise$ (black curves with markers ``$+$'') in the near overparameterized regime.

Second, if only $\similarity$ is small but $\oRatio$ is large, then $\overline{b}_3$ is the smallest. In this case, $\overline{b}_3\leq \overline{b}_1$ because the coefficient of $\normTwo{\qOne}$ in $\overline{b}_3$ approaches $0$ when $\oRatio\to 1$ while that in $\overline{b}_1$ approaches $1$. For this reason, the difference between $\overline{b}_3$ and $\overline{b}_1$ can be observed more easily when $\normTwo{\qOne}$ is large. In \cref{fig.tightness}(a)(c), we can see that the curves of $\overline{b}_3^2$ are the closest ones to the actual value of $\EwOneDiffNoNoise$ when $\pCommon$ is large (i.e., $\oRatio$ is large). As expected, the difference between $\overline{b}_3$ and $\overline{b}_1$ in \cref{fig.tightness}(b) is larger and easier to observe compared to that in \cref{fig.tightness}(a).

Third, when $\similarity$ is large, $\overline{b}_2$ can sometimes be the smallest since $\wTwo$ and $\wOne$ are treated separately in the expression of $\overline{b}_2$ to get a more precise estimation. In \cref{fig.tightness}(b)(d) (where $\similarity$ is very large), the curves of $\overline{b}_2^2$ (red dashed ones) are the closest ones to the curve of $\EwOneDiffNoNoise$ (black curves with the markers ``$+$'').

\subsection{Proof of Lemma~\ref{le.core_noise}}\label{app.proof_core_noise}
\begin{proof}
We first prove that $\UU(\UU^T\UU)^{-1}\eOne$ has rotation symmetry. Consider any rotation $\mathbf{S}\in \mathsf{SO}(\pCommon+\pSpecial{1})$ where $\mathsf{SO}(\pCommon+\pSpecial{1})\in \mathds{R}^{(\pCommon+\pSpecial{1})\times(\pCommon+\pSpecial{1})}$ denotes all rotations in $(\pCommon+\pSpecial{1})$-dimensional space. We have
\begin{align*}
    \mathbf{S}\UU(\UU^T\UU)^{-1}\eOne =& \mathbf{S}\UU(\UU^T\mathbf{S}^{-1}\mathbf{S}\UU)^{-1}\eOne\\
    =& \mathbf{S}\UU(\UU^T\mathbf{S}^T\mathbf{S}\UU)^{-1}\eOne\text{ ($\mathbf{S}^{-1}=\mathbf{S}^T$ since $\mathbf{S}$ is a rotation)}\\
    =&(\mathbf{S}\UU)((\mathbf{S}\UU)^T\mathbf{S}\UU)^{-1}\eOne.
\end{align*}
By Assumption~\ref{as.Gaussian}, we know that $\mathbf{S}\UU$ has the same distribution as $\UU$. Thus, the rotated vector $\mathbf{S}\UU(\UU^T\UU)^{-1}\eOne$ has the same distribution as the original vector $\UU(\UU^T\UU)^{-1}\eOne$. This implies that
\begin{align*}
    \E ([\UU(\UU^T\UU)^{-1}\eOne]_1)^2=\E ([\UU(\UU^T\UU)^{-1}\eOne]_2)^2=\cdots=\E ([\UU(\UU^T\UU)^{-1}\eOne]_{\pCommon+\pSpecial{1}})^2,
\end{align*}
where $[\cdot]_i$ denotes the $i$-th element of the vector. We therefore obtain
\begin{align*}
    \E \normTwo{\DD\UU(\UU^T\UU)^{-1}\eOne}^2 = &\sum_{i=1}^{\pCommon}\E ([\UU(\UU^T\UU)^{-1}\eOne]_i)^2\\
    =&\frac{\pCommon}{\pCommon+\pSpecial{1}}\E\normTwo{\UU(\UU^T\UU)^{-1}\eOne}^2\\
    =& \frac{\pCommon}{\pCommon+\pSpecial{1}}\cdot \frac{\nOne\sigmaOne^2}{\pCommon+\pSpecial{1}-\nOne-1}\text{ (by Lemma~\ref{le.IW})}.
\end{align*}
The result of this lemma thus follows.
\end{proof}

\subsection{Proof of Lemma~\ref{le.core_bias_1}}\label{app.proof_core_bias_1}
\begin{proof}
By \cref{eq.def_D} and \cref{eq.def_wex}, we have
\begin{align}
    \wTwoEx=\DD\myMatrix{\wTwo\\\qOne}=\DD\myMatrix{\wTwo-\wOne\\\bm{0}}+\DD\myMatrix{\wOne\\\qOne}=\myMatrix{\wTwo\\\qOne}=\DD\myMatrix{\wTwo-\wOne\\\bm{0}}+\DD\myMatrix{\wOne\\\qOne}.\label{eq.temp_012402}
\end{align}
Thus, we obtain
\begin{align*}
    &\E[\text{Term 1 of \cref{eq.temp_012401}}]\\
    =&\E\normTwo{\DD\left(\myMatrix{\wTwo-\wOne\\\bm{0}}+\myMatrix{\wOne\\\qOne}- \UU(\UU^T\UU)^{-1}\UU^T\myMatrix{\wOne\\\qOne}\right)}^2\text{ (by \cref{eq.temp_012402})}\\
    \leq & \E\normTwo{\myMatrix{\wTwo-\wOne\\\bm{0}}+\myMatrix{\wOne\\\qOne}- \UU(\UU^T\UU)^{-1}\UU^T\myMatrix{\wOne\\\qOne}}^2\text{ (since $\normTwo{\DD \bm{a}}\leq \normTwo{\bm{a}}$ for all $\bm{a}\in \mathds{R}^{\pCommon+\pSpecial{1}}$)}\\
    \leq & \left(\similarity+\sqrt{\E\normTwo{\left(\myMatrix{\wOne\\\qOne}- \UU(\UU^T\UU)^{-1}\UU^T\myMatrix{\wOne\\\qOne}\right)}^2}\right)^2\text{ (by \cref{le.Cauchy} and \cref{eq.def_similarity})}\\
    =&\left(\similarity+\sqrt{\oRatio\left(\normTwo{\wOne}^2+\normTwo{\qOne}^2\right)}\right)^2\text{ (by \cref{le.bias})}\\
    =&\overline{b}_1^2.\numberthis \label{eq.temp_012405}
\end{align*}
We also have
\begin{align*}
    &\E[\text{Term 1 of \cref{eq.temp_012401}}]\\
    =&\E\normTwo{\myMatrix{\wTwo-\wOne\\\bm{0}}+\DD\left(\myMatrix{\wOne\\\qOne}- \UU(\UU^T\UU)^{-1}\UU^T\myMatrix{\wOne\\\qOne}\right)}^2\text{ (by \cref{eq.temp_012402})}\\
    \geq & \left(\similarity-\sqrt{\E\normTwo{\DD\left(\myMatrix{\wOne\\\qOne}- \UU(\UU^T\UU)^{-1}\UU^T\myMatrix{\wOne\\\qOne}\right)}^2}\right)^2\text{ (by \cref{le.Cauchy} and \cref{eq.def_similarity})}\\
    \geq & \left(\left[\similarity-\sqrt{\E\normTwo{\DD\left(\myMatrix{\wOne\\\qOne}- \UU(\UU^T\UU)^{-1}\UU^T\myMatrix{\wOne\\\qOne}\right)}^2}\right]^+\right)^2\\
    \geq &\left[\similarity-\sqrt{\E\normTwo{\myMatrix{\wOne\\\qOne}- \UU(\UU^T\UU)^{-1}\UU^T\myMatrix{\wOne\\\qOne}}^2}\right]_+^2\text{ (since $\normTwo{\DD \bm{a}}\leq \normTwo{\bm{a}}$ for all $\bm{a}\in \mathds{R}^{\pCommon+\pSpecial{1}}$)}\\
    =&\left[\similarity-\sqrt{\oRatio\left(\normTwo{\wOne}^2+\normTwo{\qOne}^2\right)}\right]_+^2\text{ (by \cref{le.bias})}.\numberthis \label{eq.temp_012406}
\end{align*}
By \cref{eq.temp_012405} and \cref{eq.temp_012406}, the result of this lemma thus follows.
\end{proof}

\subsection{Proof of Lemma~\ref{le.core_bias_2}}\label{app.proof_core_bias_2}
\begin{proof}
We have
\begin{align*}
    \normTwo{A+B}^2=&\normTwo{\wTwoEx-\DD\UU(\UU^T\UU)^{-1}\XX^T\wOne}^2\text{ (by \cref{eq.def_A} and \cref{eq.def_B})}\\
    =&\normTwo{\wTwoEx-\DD\UU(\UU^T\UU)^{-1}\UU^T\wOneEx}^2 \text{ (by \cref{eq.def_wex} and \cref{eq.def_U})}.\numberthis \label{eq.temp_012408}
\end{align*}
Thus, we obtain
\begin{align*}
    \E \normTwo{A+B}^2\leq  &\left(\normTwo{\wTwo}+\sqrt{\E\normTwo{\DD\UU(\UU^T\UU)^{-1}\UU^T\wOneEx}^2}\right)^2\text{ (by \cref{eq.temp_012408} and \cref{le.Cauchy})}\\
    \leq & \left(\normTwo{\wTwo}+\sqrt{\E\normTwo{\UU(\UU^T\UU)^{-1}\UU^T\wOneEx}^2}\right)^2\text{ (since $\normTwo{\DD \bm{a}}\leq \normTwo{\bm{a}}$ for all $\bm{a}\in \mathds{R}^{\pCommon+\pSpecial{1}}$)}\\
    = & \left(\normTwo{\wTwo}+\sqrt{1-r}\normTwo{\wOne}\right)^2\text{ (by \cref{le.bias})}.
\end{align*}
We also have
\begin{align*}
    \E \normTwo{A+B}^2\geq  &\left(\normTwo{\wTwo}-\sqrt{\E\normTwo{\DD\UU(\UU^T\UU)^{-1}\UU^T\wOneEx}^2}\right)^2\text{ (by \cref{eq.temp_012408} and \cref{le.Cauchy})}\\
    \geq & \left[\normTwo{\wTwo}-\sqrt{\E\normTwo{\UU(\UU^T\UU)^{-1}\UU^T\wOneEx}^2}\right]_+^2\text{ (since $\normTwo{\DD \bm{a}}\leq \normTwo{\bm{a}}$ for all $\bm{a}\in \mathds{R}^{\pCommon+\pSpecial{1}}$)}\\
    = & \left[\normTwo{\wTwo}-\sqrt{1-r}\normTwo{\wOne}\right]_+^2\text{ (by \cref{le.bias})}.
\end{align*}
The result of this lemma thus follows.
\end{proof}

\subsection{Proof of Lemma~\ref{le.term_1}}\label{app.proof_term_1}
\begin{proof}
We first prove the upper bound in this lemma.
By the definition of $\wOneEx$ in Eq.~\eqref{eq.def_wex}, we obtain
\begin{align}
    &\XX^T\wOne = \UU^T\wOneEx,\quad \DD\wOneEx = \wOneEx.\label{eq.temp_122201}
\end{align}
Notice that for any vector $\bm{a}\in \mathds{R}^{\pCommon+\pSpecial{1}}$, we have $\normTwo{\DD \bm{a}}\leq \normTwo{\bm{a}}$.
Thus, we conclude
\begin{align}
    \normTwo{\wOneEx - \PP\wOneEx}\geq & \normTwo{\DD\left(\wOneEx - \PP\wOneEx\right)}.\label{eq.temp_122202}
\end{align}
By Eq.~\eqref{eq.temp_122201}, we have
\begin{align}
    \DD\left(\wOneEx - \PP\wOneEx\right)=\wOneEx -\DD \UU(\UU^T\UU)^{-1}\XX^T\wOne.\label{eq.temp_122203}
\end{align}
By Eq.~\eqref{eq.temp_122202} and Eq.~\eqref{eq.temp_122203}, we thus obtain
\begin{align}
    \normTwo{\wOneEx - \DD\UU(\UU^T\UU)^{-1}\XX^T\wOne}\leq \normTwo{\wOneEx - \PP\wOneEx}.\label{eq.temp_011101}
\end{align}
We further have
\begin{align*}
    &\E\normTwo{\wOneEx - \DD\UU(\UU^T\UU)^{-1}\XX^T\wOne}^2\\
    \leq & \E \normTwo{\wOneEx - \PP\wOneEx}^2\text{ (by Eq.~\eqref{eq.temp_011101})}\\
    = & \left(1-\frac{\nOne}{\pCommon+\pSpecial{1}}\right) \normTwo{\wOneEx}^2\text{ (by Lemma~\ref{le.bias})}\\
    = & \left(1-\frac{\nOne}{\pCommon+\pSpecial{1}}\right) \normTwo{\wOne}^2\text{ (since $\normTwo{\wOne}=\normTwo{\wOneEx}$ by Eq.~\eqref{eq.def_wex})}.
\end{align*}
The upper bound in this lemma thus holds. It remains to prove the lower bound. To that end, we define
\begin{align}
    &\bm{a}\defeq\wOneEx-\frac{\langle\DD\PP\wOneEx,\ \wOneEx\rangle}{\normTwo{\wOneEx}^2}\wOneEx\text{ (corresponds to $\overrightarrow{\mathrm{OE}}$ in Fig.~\ref{fig.projection})},\label{eq.temp_011201}\\
    &\bm{b}\defeq\frac{\langle\DD\PP\wOneEx,\ \wOneEx\rangle}{\normTwo{\wOneEx}^2}\wOneEx-\DD\PP\wOneEx\text{ (corresponds to $\overrightarrow{\mathrm{BE}}$ in Fig.~\ref{fig.projection})}.\label{eq.temp_011202}
\end{align}
Notice that
\begin{align}
    \bm{a}^T\bm{b}=\langle\DD\PP\wOneEx,\ \wOneEx\rangle-\langle\DD\PP\wOneEx,\ \wOneEx\rangle-\frac{\langle\DD\PP\wOneEx,\ \wOneEx\rangle^2}{\normTwo{\wOneEx}^2}+\frac{\langle\DD\PP\wOneEx,\ \wOneEx\rangle^2}{\normTwo{\wOneEx}^2}=0.\label{eq.temp_011203}
\end{align}
Thus, we obtain
\begin{align*}
    &\normTwo{\wOneEx-\DD\PP\wOneEx}^2\\
    =&\normTwo{\bm{a}+\bm{b}}^2\text{ (by Eq.~\eqref{eq.temp_011201} and Eq.~\eqref{eq.temp_011202})}\\
    =&\normTwo{\bm{a}}^2+\normTwo{\bm{b}}^2\text{ (by Eq.~\eqref{eq.temp_011203})}\\
    \geq &\normTwo{\bm{a}}^2\\
    =&\normTwo{\wOneEx}^2+\frac{\langle\DD\PP\wOneEx,\ \wOneEx\rangle^2}{\normTwo{\wOneEx}^2}-2\langle\DD\PP\wOneEx,\ \wOneEx\rangle\\
    \geq & \normTwo{\wOneEx}^2-2\langle\DD\PP\wOneEx,\ \wOneEx\rangle.\numberthis \label{eq.temp_011301}
\end{align*}
We also have
\begin{align*}
    \langle\DD\PP\wOneEx,\ \wOneEx\rangle=&\wOneEx^T\DD\PP\wOneEx\\
    =&\wOneEx^T\PP\wOneEx\text{ (since $\wOneEx^T\DD=\wOneEx^T$ by Eq.~\eqref{eq.temp_122201})}\\
    =&\wOneEx\PP^T\PP\wOneEx\text{ (by Eq.~\eqref{eq.PP_is_proj})}\\
    =&\normTwo{\PP\wOneEx}^2.\numberthis \label{eq.temp_011302}
\end{align*}
By Eq.~\eqref{eq.temp_011301} and Eq.~\eqref{eq.temp_011302}, we further obtain
\begin{align*}
    \E\normTwo{\wOneEx-\DD\PP\wOneEx}^2 \geq&  \normTwo{\wOneEx}^2-2\E \normTwo{\PP\wOneEx}^2\\
    =& \left(1-\frac{2\nOne}{\pCommon+\pSpecial{1}}\right)\normTwo{\wOneEx}^2\text{ (by Lemma~\ref{le.bias})}\\
    =& \left(1 - \frac{2\nOne}{\pCommon+\pSpecial{1}}\right)\normTwo{\wOne}^2\text{ (by Eq.~\eqref{eq.def_wex})}.
\end{align*}
Therefore, we have proven the lower bound in this lemma. In summary, the result of this lemma thus holds.
\end{proof}

\subsection{Proof of Lemma~\ref{le.term_2}}\label{app.proof_term_2}
\begin{proof}
By Eq.~\eqref{eq.def_G} and Eq.~\eqref{eq.def_qex}, we have
\begin{align}\label{eq.temp_011401}
    \ZZ^T\qOne = \UU^T\qOneEx.
\end{align}
By Eq.~\eqref{eq.def_G}, we conclude
\begin{align}\label{eq.temp_011404}
    \GG^T\GG=\GG,\quad \GG\qOneEx =\GG^T\qOneEx= \qOneEx.
\end{align}
By Eq.~\eqref{eq.temp_011404} and Eq.~\eqref{eq.PP_is_proj}, we obtain
\begin{align}\label{eq.temp_011405}
    \qOneEx^T\PP^T\GG^T\GG\PP\qOneEx=\qOneEx^T\PP^T\GG\PP^T\qOneEx, \quad \qOneEx^T\GG\PP\qOneEx = \qOneEx^T\PP\qOneEx.
\end{align}
By Eq.~\eqref{eq.PP_is_proj}, we have
\begin{align}\label{eq.temp_011406}
    \qOneEx^T\PP^T\PP\qOneEx = \qOneEx^T\PP\qOneEx.
\end{align}
Define
\begin{align}
    &\bm{a}\defeq\PP\qOneEx-\frac{\langle\PP\qOneEx,\ \qOneEx\rangle}{\normTwo{\qOneEx}^2}\qOneEx,\text{ (corresponds to $\overrightarrow{\mathrm{EB}}$ in Fig.~\ref{fig.projection})}\label{eq.temp_011402}\\
    &\bm{b}\defeq\PP\qOneEx-\GG\PP\qOneEx\text{ (corresponds to $\overrightarrow{\mathrm{DB}}$ in Fig.~\ref{fig.projection})}.\label{eq.temp_011403}
\end{align}
Notice that
\begin{align*}
    &(\bm{a}-\bm{b})^T\bm{b}\\
    =&\qOneEx^T\PP^T\PP\qOneEx-\qOneEx^T\PP^T\GG\PP\qOneEx-\frac{\langle\PP\qOneEx,\ \qOneEx\rangle}{\normTwo{\qOneEx}^2}\qOneEx^T\PP\qOneEx\\
    &+\frac{\langle\PP\qOneEx,\ \qOneEx\rangle}{\normTwo{\qOneEx}^2}\qOneEx^T\GG\PP\qOneEx\\
    &+\qOneEx^T\PP^T\PP\qOneEx - 2\qOneEx^T\GG\PP\qOneEx + \qOneEx^T\PP^T\GG^T\GG\PP\qOneEx \text{ (by Eq.~\eqref{eq.temp_011402} and Eq.~\eqref{eq.temp_011403})}\\
    =&0\text{ (by Eq.~\eqref{eq.temp_011405} and Eq.~\eqref{eq.temp_011406}).}\numberthis \label{eq.temp_011407}
\end{align*}
Thus, we obtain
\begin{align*}
    &\normTwo{C}^2\\
    =&\normTwo{\DD\PP\qOneEx}^2\text{ (by Eq.~\eqref{eq.def_PP} and Eq.~\eqref{eq.temp_011401})}\\
    =&\normTwo{\PP\qOneEx-\GG\PP\qOneEx}^2\text{ (since $\GG=\iMatrix{\pCommon+\pSpecial{1}}-\DD$ by Eq.~\eqref{eq.def_G})}\\
    =&\normTwo{\bm{b}}^2\text{ (by Eq.~\eqref{eq.temp_011403})}\\
    \leq & \normTwo{\bm{a}-\bm{b}}^2 + \normTwo{\bm{b}}^2\\
    =&  \normTwo{\bm{a}-\bm{b}}^2 + \normTwo{\bm{b}}^2+2(\bm{a}-\bm{b})^T\bm{b}\text{ (by Eq.~\eqref{eq.temp_011407})}\\
    =&\normTwo{\bm{a}-\bm{b}+\bm{b}}^2\\
    =&\normTwo{\bm{a}}^2.\numberthis \label{eq.temp_011503}
\end{align*}
It remains to estimate the norm of $\bm{a}$. To that end, we have
\begin{align*}
    \normTwo{\bm{a}}^2 =& \normTwo{\PP\qOneEx-\frac{\qOneEx^T\PP\qOneEx}{\normTwo{\qOneEx}^2}\qOneEx}^2\text{ (by Eq.~\eqref{eq.temp_011402})}\\
    =&\normTwo{\PP\qOneEx}^2-\frac{\left(\qOneEx^T\PP\qOneEx\right)^2}{\normTwo{\qOneEx}^2}\text{ (expand and merge like terms)}\\
    =&\normTwo{\PP\qOneEx}^2-\frac{\normTwo{\PP\qOneEx}^4}{\normTwo{\qOneEx}^2}\text{ (by Eq.~\eqref{eq.temp_011406})}\\
    =&\normTwo{\qOneEx}^2\cdot \frac{\normTwo{\PP\qOneEx}^2}{\normTwo{\qOneEx}^2}\cdot \left(1-\frac{\normTwo{\PP\qOneEx}^2}{\normTwo{\qOneEx}^2}\right).\numberthis \label{eq.temp_011501}
\end{align*}
Since $\PP$ is a projection by Eq.~\eqref{eq.PP_is_proj}, we know $\normTwo{\qOneEx}\geq \normTwo{\PP\qOneEx}$. Then, we obtain
\begin{align}\label{eq.temp_011502}
    \frac{\normTwo{\PP\qOneEx}^2}{\normTwo{\qOneEx}^2}\leq 1,\quad \left(1-\frac{\normTwo{\PP\qOneEx}^2}{\normTwo{\qOneEx}^2}\right)\leq 1.
\end{align}
We thus conclude
\begin{align*}
    \normTwo{\bm{a}}^2\leq& \min\left\{\frac{\normTwo{\PP\qOneEx}^2}{\normTwo{\qOneEx}^2},\ 1-\frac{\normTwo{\PP\qOneEx}^2}{\normTwo{\qOneEx}^2}\right\}\normTwo{\qOneEx}^2\text{ (by Eq.~\eqref{eq.temp_011501} and Eq.~\eqref{eq.temp_011502})}.
\end{align*}
Therefore, we have
\begin{align*}
    \E\left[\normTwo{C}^2\right]\leq & \E\normTwo{\bm{a}}^2 \text{ (by Eq.~\eqref{eq.temp_011503})}\\
    \leq & \min\left\{\frac{\E\normTwo{\PP\qOneEx}^2}{\normTwo{\qOneEx}^2},\ 1-\frac{\E\normTwo{\PP\qOneEx}^2}{\normTwo{\qOneEx}^2}\right\}\normTwo{\qOneEx}^2\\
    \leq &\min\left\{\left(1-\frac{\nOne}{\pCommon+\pSpecial{1}}\right),\ \frac{\nOne}{\pCommon+\pSpecial{1}}\right\}\normTwo{\qOneEx}^2 \text{ (by Lemma~\ref{le.bias})}\\
    =& \min\left\{\left(1-\frac{\nOne}{\pCommon+\pSpecial{1}}\right),\ \frac{\nOne}{\pCommon+\pSpecial{1}}\right\}\normTwo{\qOne}^2 \text{ (by Eq.~\eqref{eq.def_qex})}.
\end{align*}
The result of this lemma thus follows.
\end{proof}
\section{Proof of Theorem~\ref{thm.target_task_err}}\label{app.proof_target_task_err}
% {\color{red}Specify $\E$ meaning.}
\begin{proof}
When $\pSpecial{2}>\nTwo +1$, we have
\begin{align*}
    \qTilde = &\ZZZ(\ZZZ^T\ZZZ)^{-1}\left(\yTwo-\XXX^T\wTilde\right)\\
    =&\ZZZ(\ZZZ^T\ZZZ)^{-1}\left(\XXX^T\left(\wTwo - \wTilde\right)+\ZZZ^T\qTwo+\eTwo\right).
\end{align*}
Define
\begin{align}
    &\bm{a}\defeq\left(\iMatrix{\pSpecial{2}} - \myProj{\ZZZ}\right)\qTwo \in \mathds{R}^{\pSpecial{2}},\label{eq.temp_010201}\\
    &\bm{b}\defeq\left( \ZZZ(\ZZZ^T\ZZZ)^{-1}\left(\XXX^T\left(\wTwo - \wTilde\right)+\eTwo\right)\right)\in \mathds{R}^{\pSpecial{2}}.\label{eq.temp_010601}
\end{align}
Thus, we obtain
\begin{align}\label{eq.temp_010103}
    \qTwo-\qTilde = \bm{a}+\bm{b}.
\end{align}
By Assumption~\ref{as.Gaussian}, we conclude
\begin{align}\label{eq.temp_010101}
    &\XXX^T(\wTwo - \wTilde)+\eTwo\sim \mathcal{N}\left(\bm{0},\ \left(\normTwo{\wTwo - \wTilde}^2+\sigmaTwo^2\right)\iMatrix{\nTwo}\right).
\end{align}
Thus, we have
\begin{align*}
    \E_{\XXX,\eTwo}[\bm{b}] =& \E_{\XXX,\eTwo}\left[\left( \ZZZ(\ZZZ^T\ZZZ)^{-1}\left(\XXX^T\left(\wTwo - \wTilde\right)+\eTwo\right)\right)\right]\text{ (by Eq.~\ref{eq.temp_010601})}\\
    =& \ZZZ(\ZZZ^T\ZZZ)^{-1}\E_{\XXX,\eTwo}\left[\left( \left(\XXX^T\left(\wTwo - \wTilde\right)+\eTwo\right)\right)\right]\\
    &\text{ (since $\XXX$, $\ZZZ$, and $\eTwo$ are independent by Assumption~\ref{as.Gaussian})}\\
    = &\bm{0}\text{ (by Eq.~\eqref{eq.temp_010101}).}
\end{align*}
We therefore obtain
\begin{align}\label{eq.temp_010102}
    \E_{\ZZZ,\XXX,\eTwo}[\bm{a}^T\bm{b}] = \E_{\ZZZ}\left[\bm{a}^T \E_{\XXX,\eTwo}[\bm{b}]\right] = 0.
\end{align}
By Eq.~\eqref{eq.temp_010103} and Eq.~\eqref{eq.temp_010102}, we have
\begin{align*}
    &\E\normTwo{\qTwo-\qTilde}^2 \\
    =& \E\normTwo{\bm{a}}^2+\E\normTwo{\bm{b}}^2\\
    =&\left(1-\frac{\nTwo}{\pSpecial{2}}\right)\normTwo{\qTwo}^2+\E\normTwo{\bm{b}}^2\text{ (by Eq.~\eqref{eq.temp_010201} and Lemma~\ref{le.bias})}\\
    =&\left(1-\frac{\nTwo}{\pSpecial{2}}\right)\normTwo{\qTwo}^2+\frac{\nTwo\left(\normTwo{\wTwo - \wTilde}^2+\sigmaTwo^2\right)}{\pSpecial{2}-\nTwo-1}\text{ (by Eq.~\eqref{eq.temp_010101} and  Lemma~\ref{le.IW})}\numberthis \label{eq.temp_010602}
\end{align*}
When $\pSpecial{2}+1< \nTwo$, we obtain
\begin{align*}
    \qTilde =& \left(\ZZZ\ZZZ^T\right)^{-1}\ZZZ\left(\yTwo - \XXX^T\wTilde\right)\\
    =& \left(\ZZZ\ZZZ^T\right)^{-1}\ZZZ \left(\XXX^T\left(\wTwo - \wTilde\right)+\ZZZ^T\qTwo +\eTwo\right)\\
    =& \qTwo +\left(\ZZZ\ZZZ^T\right)^{-1}\ZZZ \left(\XXX^T\left(\wTwo - \wTilde\right)+\eTwo\right),
\end{align*}
and therefore we have
\begin{align*}
    \E\normTwo{\qTwo - \qTilde}^2=&\E\normTwo{\left(\ZZZ\ZZZ^T\right)^{-1}\ZZZ \left(\XXX^T\left(\wTwo - \wTilde\right)+\eTwo\right)}^2\\
    =&\frac{\pSpecial{2}\left(\normTwo{\wTwo - \wTilde}^2+\sigmaTwo^2\right)}{\nTwo-\pSpecial{2}-1}\text{ (by Eq.~\eqref{eq.temp_010101} and Lemma~\ref{le.IW}).}\numberthis \label{eq.temp_010603}
\end{align*}
By Eq.~\eqref{eq.temp_010602} and Eq.~\eqref{eq.temp_010603}, the result of this proposition thus follows.
\end{proof}
\section{Proof of Theorem~\ref{thm.opt2}}\label{app.proof_opt2}
% {\color{red}Specify $\E$ meaning.}

Define
\begin{align*}
    \UUU\defeq \myMatrix{\XXX\\\ZZZ}\in \mathds{R}^{(\pCommon+\pSpecial{2})\times \nTwo}.
\end{align*}
\subsection{Overfitted situation}
We first consider the overfitted solution, i.e., $\pCommon+\pSpecial{2}>\nTwo$. The overfitted solution that corresponds to the convergence of GD/SGD is given by
\begin{align*}
    \min_{\wHatTwo,\qHat} \quad & \normTwo{\wHatTwo-\wTilde}^2 + \normTwo{\qHat}^2\numberthis \label{op.option_2_solution}\\
    \text{subject to}\quad & \yTwo = \XXX^T\wHatTwo +\ZZZ^T\qHat.
\end{align*}

The solution of \eqref{op.option_2_solution} is as follows:
\begin{align*}
    \left[\begin{smallmatrix}\wTildeTwo-\wTilde\\
    \qTilde\end{smallmatrix}\right]=&\UUU(\UUU^T\UUU)^{-1}\left(\yTwo-\XXX^T\wTilde\right)\\
    =&\UUU(\UUU^T\UUU)^{-1}\left(\XXX^T(\wTwo-\wTilde)+\ZZZ^T\qTwo+\eTwo\right)\text{ (by Eq.~\eqref{eq.ground_truth_matrix})}\\
    =&\UUU(\UUU^T\UUU)^{-1}\eTwo + \UUU(\UUU^T\UUU)^{-1}\UUU^T\left[\begin{smallmatrix}\wTwo-\wTilde\\
    \qTilde\end{smallmatrix}\right].\numberthis \label{eq.temp_011601}
\end{align*}
By Assumption~\ref{as.Gaussian}, we know that $\eTwo$ has zero mean and is independent of $\UUU$. Thus, we have
\begin{align}\label{eq.temp_011602}
    \left\langle\left(\iMatrix{\pCommon+\pSpecial{2}}-\UUU(\UUU^T\UUU)^{-1}\UUU^T\right)\left[\begin{smallmatrix}\wTwo-\wTilde\\
    \qTwo\end{smallmatrix}\right],\ \UUU(\UUU^T\UUU)^{-1}\eTwo\right\rangle = 0.
\end{align}
We therefore obtain
\begin{align*}
    &\E\left[\normTwo{\wTwo -\wTildeTwo}^2 + \normTwo{\qTwo - \qTilde}^2\right]\\
    =&\E\normTwo{\left[\begin{smallmatrix}\wTwo-\wTilde\\
    \qTwo\end{smallmatrix}\right]-\left[\begin{smallmatrix}\wTildeTwo-\wTilde\\
    \qTilde\end{smallmatrix}\right]}^2\\
    =& \E \normTwo{\left(\iMatrix{\pCommon+\pSpecial{2}}-\UUU(\UUU^T\UUU)^{-1}\UUU^T\right)\left[\begin{smallmatrix}\wTwo-\wTilde\\
    \qTwo\end{smallmatrix}\right]+\UUU(\UUU^T\UUU)^{-1}\eTwo}^2\text{ (by Eq.~\eqref{eq.temp_011601})}\\
    =&\E \normTwo{\left(\iMatrix{\pCommon+\pSpecial{2}}-\UUU(\UUU^T\UUU)^{-1}\UUU^T\right)\left[\begin{smallmatrix}\wTwo-\wTilde\\
    \qTwo\end{smallmatrix}\right]}^2+\E\normTwo{\UUU(\UUU^T\UUU)^{-1}\eTwo}^2\text{ (by Eq.~\eqref{eq.temp_011602})}\\
    =&\left(1-\frac{\nTwo}{\pCommon+\pSpecial{2}}\right)\normTwo{\begin{smallmatrix}\wTwo-\wTilde\\
    \qTwo\end{smallmatrix}}^2+\frac{\nTwo\sigmaTwo^2}{\pCommon+\pSpecial{2}-\nTwo-1}\text{ (by Lemma~\ref{le.bias} and Lemma~\ref{le.IW})}\\
    =& \left(1-\frac{\nTwo}{\pCommon+\pSpecial{2}}\right)\left(\normTwo{\wTwo-\wTilde}^2+\normTwo{\qTwo}^2\right)+\frac{\nTwo\sigmaTwo^2}{\pCommon+\pSpecial{2}-\nTwo -1}.
\end{align*}

\subsection{Underparameterized situation}

The solution that minimizes the training loss is given by
\begin{align*}
    \left[\begin{smallmatrix}\wTildeTwo\\
    \qTilde\end{smallmatrix}\right]=&(\UUU\UUU^T)^{-1}\UUU\yTwo\\
    =&(\UUU\UUU^T)^{-1}\UUU\left(\UUU^T\left[\begin{smallmatrix}\wTwo\\
    \qTwo\end{smallmatrix}\right]+\eTwo\right)\text{ (by Eq.~\eqref{eq.ground_truth_matrix})}\\
    =&\left[\begin{smallmatrix}\wTwo\\
    \qTwo\end{smallmatrix}\right] + (\UUU\UUU^T)^{-1}\UUU\eTwo.
\end{align*}
Thus, we have
\begin{align*}
    \E \left[\normTwo{\wTwo -\wTildeTwo}^2 + \normTwo{\qTwo - \qTilde}^2\right] = &\normTwo{(\UUU\UUU^T)^{-1}\UUU\eTwo}^2\\
    =&\frac{(\pCommon+\pSpecial{2})\sigmaTwo^2}{\nTwo-(\pCommon+\pSpecial{2})-1}\text{ (by Lemma~\ref{le.IW})}.
\end{align*}

\section{Proof of Proposition~\ref{prop.best_p}}\label{app.proof_best_p}
\begin{proof}
We use $[\cdot]_i$ to denote the $i$-th element of a vector.
Then we have
\begin{align*}
    &\normTwo{\wTwo-\wTilde}^2 \\
    =& \sum_{i=1}^p \left([\wTwo]_i - [\wTilde]_i\right)^2\ \text{ (since $\wTwo,\wTilde\in \mathds{R}^{\pCommon}$)}\\
    =& \sum_{i\in \commonSpace} \left([\wTwo]_i-[\wTilde]_i\right)^2 + \sum_{i\in \commonSpaceSelected\setminus\commonSpace}[\wTilde]_i^2\ \text{ (by the definition of $\wTwo$)}.\numberthis \label{eq.temp_041701}
\end{align*}
We thus obtain
\begin{align*}
    \EwOneDiff=&\E \normTwo{\wTwo-\wTilde}^2\ \text{ (by \cref{eq.def_Lco})}\\
    =&\E \sum_{i\in \commonSpace} \left([\wTwo]_i-[\wTilde]_i\right)^2 + \E\sum_{i\in \commonSpaceSelected\setminus\commonSpace}[\wTilde]_i^2 \ \text{ (by \cref{eq.temp_041701})}.
\end{align*}
Since $\pCommon+\pSpecial{1}=C$ is fixed and all features are \emph{i.i.d.} Gaussian (\cref{as.Gaussian}), the distribution of $\myMa{\wTilde\\\qTildeOne}\in \mathds{R}^C$ is the same for different $\pCommon$. 
Notice that since \cref{as.no_missing_features} is assured, the term $\E \sum_{i\in \commonSpace} \left([\wTwo]_i-[\wTilde]_i\right)^2$ does not change with respect to $\pCommon$.
In contrast, the term $\E \sum\limits_{i\in \commonSpaceSelected\setminus\commonSpace}[\wTilde]_i^2$ is monotone increasing with respect to $\pCommon$ (since $\commonSpaceSelected\setminus\commonSpace$includes more fake features when  $\pCommon$ is larger). The result of \cref{prop.best_p} thus follows.
\end{proof}

\end{document}